\documentclass[letterpaper]{article}
\usepackage[numbers]{natbib}

\usepackage{graphicx} 
\usepackage{tikz-cd}
\usepackage{amssymb}
\usepackage{dsfont}
\usepackage{mleftright}
\usepackage{ntheorem}
\usepackage{mathtools}
\usepackage{algorithm, algpseudocodex}
\usepackage[colorlinks=true, allcolors=blue]{hyperref}
\usepackage{cleveref}
\usepackage{multirow}
\usepackage{amsmath,amsfonts,graphicx}
\usepackage{xfrac}
\usepackage{tikz}
\usepackage{enumitem}
\usepackage{booktabs}
\usepackage{thm-restate}

\newtheorem{theorem}{Theorem}[section]
\newtheorem{lemma}[theorem]{Lemma}

\newtheorem{corollary}[theorem]{Corollary}
\newtheorem{definition}[theorem]{Definition}

\newenvironment{proof}{{\bf Proof:}}{\hfill\rule{2mm}{2mm}}
\newenvironment{proofsketch}[1][]
  {\noindent\textit{Proof Sketch\ifx&#1&\else\ #1\fi.} }
  {\hfill$\square$}

\newcommand\E{\mathbb{E}}

\newcommand{\A}{\mathcal{A}}

\newcommand{\D}{\mathcal{D}}

\newcommand{\R}{\mathcal{R}}

\newcommand{\supp}{\mathrm{Supp}}
\newcommand{\piell}{\pi_\ell}
\newcommand{\phase}{m}

\DeclareMathOperator*{\argmin}{arg\,min}

\newcommand{\rr}{\mathbb{R}}

\newcommand{\calF}{\mathcal{F}}
\newcommand{\bbP}{\mathbb{P}}
\newcommand{\bbE}{\mathbb{E}}

\newcommand\inprod[1]{\langle #1 \rangle}

\newcommand{\curly}[1]{ {\left\{ #1 \right\}}}
\newcommand{\roundy}[1]{ {\left( #1 \right)}}
\newcommand{\squary}[1]{ {\left[ #1 \right]}}
\newcommand{\abs}[1]{ {\left | #1 \right |}}
\newcommand{\norm}[1]{\left \| #1 \right \|}
\newcommand{\ceil}[1]{\left \lceil #1 \right \rceil}
\newcommand{\thetastar}{\theta^\star}
\newcommand{\astar}{a^\star}

\newcommand{\calA}{{\mathcal{A}}}

\newcommand{\calD}{{\mathcal{D}}}
\newcommand{\calE}{{\mathcal{E}}}

\newcommand{\calP}{{\mathcal{P}}}
\newcommand{\calQ}{{\mathcal{Q}}}

\usepackage{prettyref}
\newcommand{\pref}[1]{\prettyref{#1}}

\newcommand{\savehyperref}[2]{\texorpdfstring{\hyperref[#1]{#2}}{#2}}
\newrefformat{eq}{\savehyperref{#1}{Eq. \textup{(\ref*{#1})}}}
\newrefformat{eqn}{\savehyperref{#1}{Eq.~(\ref*{#1})}}
\newrefformat{lem}{\savehyperref{#1}{Lemma~\ref*{#1}}}
\newrefformat{def}{\savehyperref{#1}{Definition~\ref*{#1}}}
\newrefformat{line}{\savehyperref{#1}{Line~\ref*{#1}}}
\newrefformat{thm}{\savehyperref{#1}{Theorem~\ref*{#1}}}
\newrefformat{corr}{\savehyperref{#1}{Corollary~\ref*{#1}}}
\newrefformat{cor}{\savehyperref{#1}{Corollary~\ref*{#1}}}
\newrefformat{sec}{\savehyperref{#1}{Section~\ref*{#1}}}
\newrefformat{app}{\savehyperref{#1}{Appendix~\ref*{#1}}}
\newrefformat{assum}{\savehyperref{#1}{Assumption~\ref*{#1}}}
\newrefformat{asm}{\savehyperref{#1}{Assumption~\ref*{#1}}}
\newrefformat{ex}{\savehyperref{#1}{Example~\ref*{#1}}}
\newrefformat{fig}{\savehyperref{#1}{Figure~\ref*{#1}}}
\newrefformat{alg}{\savehyperref{#1}{Algorithm~\ref*{#1}}}
\newrefformat{rem}{\savehyperref{#1}{Remark~\ref*{#1}}}
\newrefformat{conj}{\savehyperref{#1}{Conjecture~\ref*{#1}}}
\newrefformat{prop}{\savehyperref{#1}{Proposition~\ref*{#1}}}
\newrefformat{proto}{\savehyperref{#1}{Protocol~\ref*{#1}}}
\newrefformat{prob}{\savehyperref{#1}{Problem~\ref*{#1}}}
\newrefformat{claim}{\savehyperref{#1}{Claim~\ref*{#1}}}
\newrefformat{que}{\savehyperref{#1}{Question~\ref*{#1}}}
\newrefformat{op}{\savehyperref{#1}{Open Problem~\ref*{#1}}}
\newrefformat{fn}{\savehyperref{#1}{Footnote~\ref*{#1}}}

\makeatletter \newcommand{\ForceCrefTypeInEnv}[2]{
\AddToHook{env/#1/begin}{
\let\Cref@oldlabel\label \def\label##1{\Cref@oldlabel[#2]{##1}}
}
\AddToHook{env/#1/end}{\let\label\Cref@oldlabel}} 
\makeatother
\ForceCrefTypeInEnv{lemma}{lemma} 
\ForceCrefTypeInEnv{definition}{definition} 
\ForceCrefTypeInEnv{proposition}{proposition} 
\ForceCrefTypeInEnv{claim}{claim} 
\ForceCrefTypeInEnv{corollary}{corollary} 
\ForceCrefTypeInEnv{example}{example}
\ForceCrefTypeInEnv{remark}{remark}
\ForceCrefTypeInEnv{property}{property}
\Crefname{lemma}{Lemma}{Lemmas}
\Crefname{definition}{Definition}{Definitions} 
\Crefname{proposition}{Proposition}{Propositions} 
\Crefname{claim}{Claim}{Claims}
\Crefname{corollary}{Corollary}{Corollaries} 
\Crefname{example}{Example}{Examples}
\Crefname{remark}{Remark}{Remarks}
\Crefname{property}{Property}{Properties}

\crefname{ALG@line}{line}{lines}
\Crefname{ALG@line}{Line}{Lines}
\makeatletter
\newcommand{\alglinelabel}[1]{
  \addtocounter{ALG@line}{-1}
  \refstepcounter{ALG@line}
  \label{#1}
}
\makeatother

\newcommand{\mail}[1]{\href{mailto:#1}{\color{blue} #1}}
\newcommand{\blackfootnote}[1]{
  \begingroup
  \hypersetup{allcolors=black}
  \footnote{#1}
  \endgroup
}

\title{Near-Optimal Stochastic Linear Bandits with Delay}
\author{Ofir Schlisselberg\blackfootnote{Tel Aviv University; \mail{ofirs4@mail.tau.ac.il}} 
\and
Mengxiao Zhang\blackfootnote{University of Iowa; \mail{mengxiao-zhang@uiowa.edu}}
\and
Yishay Mansour\blackfootnote{Tel Aviv University and Google Research; \mail{mansour.yishay@gmail.com}}}

\begin{document}

\maketitle

\begin{abstract}
We study stochastic linear bandits with delayed feedback under several delay models and establish near-optimal regret guarantees. Our results identify when delayed linear bandits exhibit the same qualitative behavior as multi-armed bandits (MAB), and when the linear structure creates fundamentally new challenges. Specifically, (1) for \emph{loss-independent delays}, where the delay does not depend on the realized loss (but potentially depends on the arm), we show that delays incur only an additive regret penalty. Under stochastic delays, this penalty scales with the expected delay, while under adversarial delays, it scales with the maximum number of outstanding observations. Notably, both delay penalties are dimension-free, improving upon the state-of-the-art results; (2) for \emph{loss-dependent delays}, we show that linear bandits are substantially harder than MAB: unlike in MAB, we prove matching (up to log factors) upper and lower bounds in linear bandits, whose  delay penalty depends on the square root of the  dimension. (3) for the \emph{delay-as-payoff model}, a special case of loss-dependent delay, we show that the optimal MAB guarantee, which depends only on the delay of the optimal arm, is also unattainable in linear bandits. Together, these results provide a sharp characterization of how delayed feedback interacts with linear generalization.

\end{abstract}
\section{Introduction}

Sequential decision-making under uncertainty is a central problem in machine learning, with applications ranging from recommendation systems to clinical trials. In many real-world scenarios, however, feedback is not observed immediately: actions are taken at the current round, while their outcomes are revealed only after a delay. Such delayed feedback creates a fundamental challenge, since the learner must continue making decisions without access to the most recent observations.

The stochastic multi-armed bandit (MAB) model has served as a primary framework for understanding delayed feedback. Early works considered constant delays~\citep{dudik2011efficient}, and \citet{joulani2013online} later provided a general reduction for both stochastic and adversarial delays to the case without delay. However, these reductions showed that the regret incurred in the case with delayed feedback will have a delay-dependent term that is also dependent on the number of arms $K$: the delay penalty scales as $K\E[d]$ under stochastic delays, where $\E[d]$ is the expected delay, and as $K\sigma_{\max}$ under adversarial delays, where $\sigma_{\max}$ is the maximum number of missing observations over the horizon. 
Subsequent works showed that this dependence on $K$ is in fact not intrinsic. In particular, \citet{lancewicki2021stochastic} obtained optimal stochastic-delay bounds without the extra $K$ factor in the delay-dependent term, and \citet{schlisselberg2025improved} extended this improvement to adversarial delays. Thus, for standard delayed MAB, the effect of delay is now well understood: delay contributes an additive term that matches the intrinsic delay complexity of the problem.

More involved delay structures are studied for MAB subsequently. 
\citet{manegueu2020stochastic} first studied the case where the delays are arm-dependent, whose results are later improved by \citet{lancewicki2021stochastic}. The latter work further considered delays that depend on the realized loss, showing that the regret overhead due to delay remains bounded by $\max_a \E[d(a)]$.\footnote{Their result is in fact more involved and depends on the quantiles of the delay distribution. For clarity, we present a simplified form of the bound, which we derive for a slight variant of their algorithm in Appendix~\ref{sec:mab-loss-dependent}.} A particularly important special case is the \emph{delay-as-payoff} setting, introduced by \citet{pmlr-v238-tang24c}, in which the delay is proportional to the loss. In this setting, \citet{Schlisselberg_Cohen_Lancewicki_Mansour_2025} showed that the optimal additive delay term is the mean delay of the best arm, reflecting the fact that once the best arm is identified, the learner does not need to wait for feedback from suboptimal arms.

In contrast, delayed feedback in \emph{stochastic linear bandits} remains much less understood. Linear bandits model a richer decision space in which actions are represented by an $n$-dimensional feature vectors and feedback from one action can inform the estimates of many others through an unknown linear parameter. However, unlike in MAB, where each arm can be learned essentially independently, delayed observations in linear bandits affect the estimation of a shared parameter. 
Early works on delayed linear bandits obtained regret bounds scaling with the \emph{sum} of delays over the horizon~\citep{zhou2019learning,blanchet2024delay}, which can be prohibitively large. The first result avoiding this dependence is \citet{howson2023delayed}, which obtained an additive delay term of order $n^{3/2}\E[d]$. While this improves over the sum-of-delay bounds, it still leaves unresolved whether the dimension dependence is an artifact of the analysis or an inherent feature of delayed linear bandits. More generally, the following fundamental question is open:

\smallskip
\centerline{\emph{When can the regret of delayed linear bandits match the delay dependence of MAB?}}

In this paper, we study delayed feedback in linear bandits under both loss-independent and loss-dependent delay models, and provide a unified view of when dimension plays a role.

\begin{itemize}[leftmargin=*,nosep]
    \item \textbf{Loss-independent delays.} We consider arm-dependent delays that are independent of the realized loss. For stochastic delays, we obtain a regret bound of $O\big(\sqrt{nT\log(KT)} + \max_a \E[d(a)]\big)$, where $T$ is the time horizon and $\E[d(a)]$ is the expected delay of arm $a$, matching the optimal dependence known in MAB. For adversarial delays, we prove a regret bound of $O\big(\sqrt{nT\log(KT)} + \sigma_{\max}\log(\sigma_{\max})\big)$,
    which is tight up to logarithmic factors and, to the best of our knowledge, is the first such result even in the MAB setting for arm-dependent delays. 

    \item \textbf{Loss-dependent delays.} We demonstrate that linear bandits are \emph{fundamentally harder} in this regime. While MAB regret overhead scales only with $\max_{a}\mathbb{E}[d(a)]$, we prove that linear bandits inherently incur a dimension-dependent delay penalty of $\sqrt{n}\max_{a}\mathbb{E}[d(a)]$. We establish this by providing a matching (up to logarithmic factors) upper bound and a lower bound, proving that this dependence on the dimension $n$ is unavoidable.

    \item \textbf{Delay-as-payoff.} In MAB, the delay-as-payoff setting admits an additive delay term of $d^\star$, where $d^\star$ represents the mean delay of the best arm. We show that this is \emph{not achievable} in linear bandits: even when $d^\star=0$, any algorithm must incur an additive regret of order $D$, the maximal delay. This matches the upper bound of \citet{zhang2025contextual} and establishes its optimality.
\end{itemize}

Our results reveal a sharp dichotomy. When delays are independent of the loss, linear bandits behave similarly to MAB, and optimal delay dependence can be achieved without additional dimension factors. In contrast, when delays depend on the loss, the shared structure of linear bandits fundamentally limits the learner’s ability to exploit partial feedback.

This highlights an intrinsic difference between independent and dependent delay models, and provides a complete picture of delay in linear bandits.

\begin{table}[t]
    \centering
    \caption{\textbf{Comparison of delay dependence across models.} 
Arm-independent delays are a special case of arm-dependent delays. 
We show that linear bandits match MAB guarantees for loss-independent delays, 
but exhibit fundamentally stronger dependence in loss-dependent settings. 
$n$ denotes the dimension, $\E[d(a)]$ is the expected delay of arm $a$ (and $\E[d]$ in the arm-independent case), $\sigma_{\max}$ is the maximum number of missing observations, $d^*$ is the delay of the optimal arm, and $D$ is the maximum delay. All bounds are up to absolute constants and log factors.}
\begin{tabular}[c]{ll ccc}
\multicolumn{2}{c}{\textbf{Delay model}} & \textbf{MAB} & \textbf{Linear (prior)} & \textbf{Linear (this work)} \\
\midrule
\multirow{2}{*}{Arm-inde.} 
& \small{Stochastic} & $\E[d]$ [\citenum{lancewicki2021stochastic}]  & $n^{3/2}\E[d]$ [\citenum{howson2023delayed}]  & $\E[d]$ \\

& \small{Adversarial} & $\sigma_{\max}$ [\citenum{schlisselberg2025improved}] &  --- & $\sigma_{\max}$\\
\hline
\multirow{2}{*}{Arm-dep.}
& \small{Stochastic} & $\max_a \E[d(a)]$ [\citenum{lancewicki2021stochastic}] 
& \multirow{2}{*}{---} 
& $\max_a \E[d(a)]$ \\

& \small{Adversarial} & ---& & $\sigma_{\max}$\\
\hline
Loss-dep. & \small{Stochastic} 
& $\max_a \E[d(a)]$ [\citenum{lancewicki2021stochastic}] 
& --- 
& $\sqrt{n}\max_a \E[d(a)]$ \\
\hline
\multicolumn{2}{c}{Delay-as-payoff}
& $d^*$ [\citenum{Schlisselberg_Cohen_Lancewicki_Mansour_2025}]
& $D$ (upper bound) [\citenum{zhang2025contextual}]
& $D$ (lower bound) \\
\bottomrule
\end{tabular}
\label{tab:comparison}
\end{table}
\subsection{Additional Related Work}

\textbf{Stochastic Linear Bandits.}
The stochastic linear bandit framework was first introduced by \citet{abe1999Associative}, with foundational theoretical contributions subsequently established by \citep{auer2002using, dani2008stochastic, rusmevichientong2010linearly, yadkori2011improved}. Our algorithm is built upon the phased elimination approach, a classic technique for this problem that was first adapted to the linear setting by \citet{Valko2014Spectral}. The specific algorithmic framework employed in this work follow the one presented in \citet[Chapter 22]{lattimore2020bandit}.

\textbf{Additional Work on Delayed Bandits.}
Several works examine delay under various feedback assumptions. \citep{vernade2017stochastic,vernade2020linear} study a model with Bernoulli rewards where zero outcomes are never revealed, even after the delay, which can be interpreted as a particular form of loss-dependent delay. Their results rely on assuming a bounded expected delay and that the delay distribution is fully known.

\citet{pike2018bandits} consider a more challenging scenario in which the learner observes only the total reward arriving at each round, without knowing which actions generated it. They assume that the expected delay is known and derive guarantees comparable to those of \citet{joulani2013online}.

Another line of work considers adversarial losses with delay \citep{cesa2016delay,thune2019nonstochastic,pmlr-v139-gyorgy21a,van2021nonstochastic,van2023unified}, with \citep{van2023unified} extending these ideas to linear bandits. 
Finally, \citep{masoudian2022best,masoudian2024best,schlisselberg2025improved} study delayed best-of-both-worlds guarantees, where a single algorithm adapts to both stochastic and adversarial losses.
\section{Preliminaries}\label{sec:preliminaries}
\paragraph{Learning model.} In this paper, we consider \emph{stochastic delayed linear bandits} over a finite action set $\A$ with $|\A|=K$ and an unknown loss vector $\thetastar$.\footnote{The analysis extends to continuous action sets via standard discretization arguments. Our bounds scale as $\sqrt{nT\log(K)}$, which recovers the usual $n\sqrt{T}$ dependence in the continuous setting up to logarithmic factors.} Each action $a \in \A$ is associated with a zero-mean, 1-sub-Gaussian noise distribution $\calP(a)$ such that for any $\eta \sim \calP(a)$, the bound $\abs{a^\top \thetastar + \eta} \le 1$ holds almost surely. In the case of \emph{delay-as-payoff}, following \citet{zhang2025contextual}, we further require $a^\top \thetastar + \eta \ge 0$ to ensure that all realized delays are non-negative.

The interaction protocol proceeds as follows: at each round $t\in [T]\triangleq \{1,2,\dots, T\}$:
\begin{itemize}[nosep,leftmargin=*]
    \item The learner selects an action $a_t \in \A$.
    \item The environment samples random noise $\eta_t \sim \calP(a_t)$ and generates a loss $\ell_t = a_t^\top \thetastar + \eta_t$.
    \item The environment determines a delay $d_t(a_t)$ according to the specific delay model.
    \item The learner receives the set of all losses whose feedback arrives at time $t$: $\curly{(t', \ell_{t'}): t' + d_{t'}(a_{t'}) = t}$.
\end{itemize}
The learner's objective is to minimize their (pseudo) regret, defined as the difference between the expected loss of the best fixed action and the learner's cumulative loss. Formally, we define regret over a time horizon $T$ as:
$$
    \R_T \triangleq \E\left[\sum_{t=1}^T \roundy{a_t-\astar}^\top \thetastar\right],
$$
where $\astar \in \argmin_{a \in \A} a^\top \thetastar$.

\textbf{Delay model.} In the following, we introduce several types of delay models that we consider throughout this paper. Specifically, we consider
\begin{itemize}[leftmargin=*,nosep]
    \item \textbf{Loss-independent delay}: In this setting, the delay is independent of the realized loss. We consider two primary variants:
    \begin{itemize}[nosep, leftmargin=15pt]
        \item \emph{Stochastic delay}, where delay $d_t(a)$ is drawn i.i.d. from an unknown distribution $\D(a)$. 
        \item \emph{Adversarial delay}, where the entire delay sequence $\{d_t(a)\}_{t\in[T], a \in \A}$ is determined arbitrarily by an adaptive adversary.
    \end{itemize}
    For both variants, we also investigate the \textbf{arm-independent} special case, where $\D(a) = \D$ in the stochastic setting and $d_t(a) = d_t$ for all $a \in \A$ in the adversarial setting.
    \item \textbf{Loss-dependent stochastic delay}: In this regime, the delay $d_t(a)$ is stochastically coupled with the loss $\ell_t(a)$. Specifically, the pair $(\ell_t(a), d_t(a))$ is sampled i.i.d. from an arm-specific joint distribution $\calD^{\text{joint}}(a)$. This setting encapsulates real-world scenarios where the feedback latency is inherently tied to the outcome magnitude. For example, in conversion rate optimization, verifying a high-value transaction often incurs a significantly longer latency than registering a simple click.
    \item \textbf{Delay-as-payoff}: As a special instance of loss-dependent delay, we investigate a setting where the latency is a deterministic function of the loss~\citep{schlisselberg2025improved,zhang2025contextual}. Formally, we assume $d_t(a) = D \cdot \ell_t(a)$ for some fixed scaling constant $D > 0$. This is motivated by applications such as network routing, where the goal is to minimize the latency, so the latency is both the cost and the delay in observing the information. 
    
\end{itemize}
\textbf{G-Optimal design.}
We use optimal experimental design to estimate $\thetastar$ efficiently. A design $\pi$ is a distribution over $\A$, and we define $V(\pi)=\sum_{a\in\A}\pi(a)aa^\top$ and $g(\pi)=\max_{a\in\A}\|a\|_{V(\pi)^{-1}}$. A G-optimal design~\citep[Chapter 21]{lattimore2020bandit} minimizes $g(\pi)$. The General Equivalence Theorem~\citep{Kiefer_Wolfowitz} states that the optimum satisfies $g(\pi^\star)=n$ and admits a solution with support size at most $n(n+1)/2$. Moreover, approximate designs with support $O(n\log\log n)$ and $g(\pi)\le 2n$ can be computed via Frank–Wolfe-type methods~\citep{frank_wolfe,fedrov_theory_optimal_design}. Such designs control the variance of least-squares estimators and are central to achieving optimal regret in linear bandits~\citep{bubeck2012minimaxpoliciesonlinelinear, lattimore2020bandit}. Geometrically, they correspond to the minimum-volume enclosing ellipsoid of $\mathrm{conv}(\A)$, with support points as contact points on its boundary~\citep{silvey1972discussion}.

\textbf{Additional Notation.} 
For any time $t \in [T]$, let $M_t \subseteq [t-1]$ denote the set of rounds where feedback for the chosen action has not yet been received by the start of round $t$. We further define $M_t(a) \subseteq M_t$ as the subset of rounds where action $a$ was selected, such that $M_t = \bigcup_{a \in \mathcal{A}} M_t(a)$. Let $\sigma_t(a) = |M_t(a)|$ denote the number of outstanding observations for action $a$, and let $\sigma_t = |M_t| = \sum_{a \in \mathcal{A}} \sigma_t(a)$ represent the total number of missing observations at time $t$. We define the global maximum number of outstanding observations as $\sigma_{\max} = \max_{t \in [T]} \sigma_t$. When the delay follows a distribution (either loss-dependent or independent), let $\mathcal{Q}(q; a)$ denote the $q$-quantile of the delay distribution for action $a$ and let $\mathbb{E}[d(a)]$ denote the expected delay of action $a$. In the arm-independent setting, these notations simplify to $\mathcal{Q}(q)$ and $\mathbb{E}[d]$, respectively.

\section{Loss-Independent Stochastic Delay}\label{sec:lossIndStoc}

In this section, we study delayed feedback models in which the delay process is
independent of the realized loss. The delay distribution may depend on the
action, but conditional on the chosen action, the delay is independent of the
loss observation. This setting captures an important class of delayed bandit
problems in which the learner may face heterogeneous reporting times across
actions, while still avoiding the additional statistical difficulty caused by
loss-dependent censoring.

Our algorithm, \pref{alg:loss-independent-stochastic}, extends the classical
phased elimination framework to this delayed setting. At a high level, the
algorithm proceeds in phases. In phase $\phase$, it maintains an active set
$\A_\phase$ of candidate actions and an accuracy level $\epsilon_\phase$. The
goal of the phase is to collect enough observed feedback to estimate the loss
of every active action up to accuracy $\epsilon_\phase$, eliminate actions that
are provably suboptimal, and then move to the next phase with
$\epsilon_{\phase+1}=\epsilon_\phase/2$.

\begin{algorithm}[t]
\caption{Phased Elimination w/ Loss-Independent Stochastic Delays (Formally in \Cref{alg:loss-independent-stochastic})}
\label{alg:loss-independent-informal-stochastic}
\begin{algorithmic}[1]
\State Initialize active set $\A_1 \gets \A$, accuracy $\epsilon_1 \gets \tfrac{1}{2}$
\For{phases $\phase = 1,2,\dots$}
    \State Compute a design $\pi_\phase$ over $\A_\phase$
    \State Set target samples $N_\phase = \tilde{O}\!\left(\frac{n}{\epsilon_\phase^2}\right)$
    
    \While{some arm $a$ has fewer than $N_\phase(a)=N_\phase\pi_\phase(a)$ observed samples}
        \State Sample $a_t \sim \pi_\phase$
        \State Collect any feedback that arrives
    \EndWhile

    \State Estimate $\widehat{\theta}_\phase$ from the collected samples
    \State $\displaystyle \A_{\phase+1} \gets 
    \curly{a \in \A_\phase : \inprod{\widehat{\theta}_\phase \,,\, a - b} \le 2\epsilon_\phase \quad \forall b\in \A_\phase}$ \label{line:elimination}
    \State Set $\epsilon_{\phase+1} \gets \epsilon_\phase / 2$
\EndFor
\end{algorithmic}
\end{algorithm}

The main challenge is that, under stochastic delayed feedback, the number of actions played in a phase can significantly exceed the number of observations received. In the standard non-delayed setting, this is not an issue: one can simply pull each arm $a$ in the design support deterministically to match the target number of pulls exactly. However, stochastic delays introduce a fundamental randomness that cannot be eliminated through deterministic scheduling. Since a phase can only terminate once the required observations have arrived for every action in the design support, the phase duration is governed by the tail behavior of the delays. Consequently, the phase length depends not only on the statistical complexity $N_\phase$, but also on the interplay between the design’s probability mass and the delay quantiles.

This creates a subtle technical obstacle. Standard optimal designs minimize the estimation error $g(\pi)$, but they may assign extremely small probability mass to certain actions. While this is efficient under immediate feedback, it becomes problematic with delays: if an action is sampled with very low probability, the time needed to collect enough observations can grow significantly, even if the design itself is statistically near-optimal. In this regime, the lower-order concentration terms, which are usually negligible, become the primary bottleneck because the delay process itself functions as an uncontrollable stochastic sampling process.

A contribution of our analysis is demonstrating that this obstacle can be removed without sacrificing statistical efficiency. It can be shown that any design can be converted into a balanced design where support probabilities are uniformly lower bounded, while the design objective $g(\pi)$ increases by only a constant factor; see \Cref{lem:design_with_min}. Combined with standard Frank--Wolfe constructions, this yields a design with support size $O(n \log \log n)$, minimum probability at least $1/(n \log \log n)$ on its support, and design value $g(\pi) = O(n)$; see \Cref{lem:design}. This balancing step is the primary technical ingredient that allows phased elimination to remain efficient under stochastic delays.

With this balanced design in place, the regret analysis separates into two parts. We first control the duration of each phase. Since every supported action is sampled with a non-negligible probability, after the algorithm has played the phase for approximately $(N_m + n \log(T) \log \log(n))/q$ rounds, each action has been selected approximately $qN_m(a)$ times with high probability. Consequently,
after waiting up to the
corresponding $q$-quantile of its delay distribution, a sufficient number of
feedback observations has arrived. This intuition is formalized by the following
phase-length bound.

\begin{restatable}{lemma}{stocDelay}\label{lem:stochastic_delay_phase}
    Let $T_m$ be the length of phase $m$ of \Cref{alg:loss-independent-stochastic}. Under stochastic arm-dependent delays, with probability at least $1 - \frac{1}{T}$,
\[
T_\phase = O\!\left(
\min_{q} \left\{
\frac{N_\phase + n\log(T)\log\log(n)}{q} + \max_a \calQ(q; a)
\right\}
\right).
\]
\end{restatable}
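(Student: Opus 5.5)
Fix any $q\in(0,1]$ and set $\tau = c\,(N_\phase + n\log(T)\log\log(n))/q$ for a suitable absolute constant $c$. The plan is to show that, with probability at least $1-1/T^2$ per phase, every action $a$ in the support of $\pi_\phase$ has at least $N_\phase(a)$ observations by round $\tau + \max_a \calQ(q;a)$ of the phase. Then $T_\phase \le \tau + \max_a\calQ(q;a)$, and since $q$ is arbitrary we may take the minimum; a union bound over at most $T$ phases gives the $1-1/T$ guarantee. (Because the bound is stated as a min over $q$, it suffices to argue for each fixed $q$, or over a grid of $q$ values; the union bound over the grid adds only a log factor absorbed in the constants.)

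\textbf{Step 1 (enough pulls).} By \Cref{lem:design}, $\pi_\phase$ has support size $O(n\log\log n)$ with $\pi_\phase(a)\ge 1/(Cn\log\log n)$ on its support. In the first $\tau$ rounds of the phase, the number of pulls of $a$ is $\mathrm{Bin}(\tau,\pi_\phase(a))$ with mean $\tau\pi_\phase(a)\ge c(N_\phase(a) + \pi_\phase(a)n\log T\log\log n)/q \ge c N_\phase(a)/q + (c/C)\log T /q$. A multiplicative Chernoff bound therefore shows that, with probability $1-1/T^3$, at least $\tfrac{2}{q}N_\phase(a) + \tfrac{8}{q}\log T$ pulls of $a$ occur. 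This is exactly where the minimum-mass property is needed: without it the additive $\log T$ term could not be paid for by $n\log T\log\log n$.

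\textbf{Step 2 (enough arrivals).} Consider only the pulls of $a$ made in the first $\tau$ rounds. Each such pull has delay at most $\calQ(q;a)$ independently with probability at least $q$. The delay is independent of the loss and of the sampling decisions, since the pull indices are determined by the algorithm's internal randomization. Hence, by another Chernoff bound, with probability $1-1/T^3$ at least $N_\phase(a)$ of these pulls have delay at most $\calQ(q;a)$, and their feedback has arrived by round $\tau+\calQ(q;a)$. A union bound over the $O(n\log\log n)\le T$ support actions (assuming $T$ is large; otherwise the claim is trivial) finishes the argument for one phase.

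\textbf{Main obstacle.} The main obstacle I expect is making the independence in Step 2 rigorous. The phase's stopping time and the choice of which pulls to count are adaptive, so I would fix the first $\lceil 2N_\phase(a)/q + 8\log T/q\rceil$ pulls of $a$ in the phase. Their delays are i.i.d. from $\D(a)$ regardless of when they happen, and I would apply Chernoff to that fixed-length i.i.d. sequence. Step 1 then only needs to guarantee that these pulls all occur before round $\tau$. I also need to handle samples observed in earlier phases, which I assume are discarded at the phase boundary, and rounds beyond $\tau$ in which the algorithm keeps pulling; the latter only help.
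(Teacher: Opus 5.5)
Your proof is correct and follows essentially the paper's argument: fix $q$, take a horizon $\tau$ of order $(N_\phase + n\log(T)\log\log(n))/q$, and use the minimum-mass property of the design from \Cref{lem:design} so that a lower-tail concentration bound shows every supported arm has at least $N_\phase(a)$ pulls in the first $\tau$ rounds with delay at most $\calQ(q;a)$, all of which have arrived by round $\tau+\max_a\calQ(q;a)$. The only difference is that the paper applies a single Bernoulli lower bound (\Cref{lem:dann}) directly to the indicators $\mathds{1}[a_t=a,\ d_t\le\calQ(q;a)]$, each with success probability at least $\pi_\phase(a)q$, whereas you split this into two Chernoff steps (first pulls, then arrivals among a fixed block of pulls); your version is also more explicit than the paper about independence and about the union bound over arms and phases.
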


The proof is deferred to Appendix~\ref{sec:loss-indep-sto}.
The term $(N_\phase+n\log(T)\log\log(n))/q$ is the number of plays needed to
ensure that enough samples are generated and observed with high probability,
while $\max_a \calQ(q;a)$ accounts for the additional waiting time induced by
the delay distribution. The minimum over $q$ captures the optimal tradeoff
between playing longer to compensate for delayed arrivals and waiting for a
larger delay quantile.

The second part of the analysis is the standard elimination argument, adapted to
the delayed setting. At the end of phase $\phase$, the algorithm has received
$N_\phase(a)$ observations from every action $a$ in the support of the design.
By the choice of $N_\phase$ and the guarantee that $g(\pi_\phase)=\max_{a\in\A}\|a\|_{V(\pi_m)^{-1}}=O(n)$, the least
squares estimator $\widehat{\theta}_\phase$ is $\epsilon_\phase$-accurate
uniformly over all actions in $\A_\phase$ with high probability. Consequently,
the elimination rule in \Cref{line:elimination} has two properties: (i) the optimal
action is never eliminated, and (ii) every action that remains active after the phase
has suboptimality gap at most $8\epsilon_\phase$; see 
\Cref{lem:optimal_active} and \Cref{lem:active_action}. Therefore, once the algorithm reaches a phase with
accuracy $\epsilon_\phase$, all actions with gap larger than
$8\epsilon_\phase$ are removed.

Combining the phase-length bound with this elimination guarantee gives the
following regret bound.

\begin{restatable}{theorem}{lossIndStoc}\label{thm:loss-independent-stochastic}
    \Cref{alg:loss-independent-stochastic} guarantees that under stochastic arm-dependent delays, 
\[
\mathcal{R}_T = O\!\left(
\min_q \left\{
\sqrt{\frac{nT\log(KT)}{q}} + \max_a \calQ(q; a)
\right\}
\right).
\]
\end{restatable}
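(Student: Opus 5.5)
We will combine the phase-length bound of \Cref{lem:stochastic_delay_phase} with the elimination guarantees (\Cref{lem:optimal_active} and \Cref{lem:active_action}). Fix any $q\in(0,1]$.

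\textbf{Step 1: the good event.} Let $\mathcal{G}$ be the event that the least-squares estimates are $\epsilon_\phase$-accurate on $\A_\phase$ in every phase, and that the conclusion of \Cref{lem:stochastic_delay_phase} holds in every phase. There are at most $O(\log T)$ phases. A union bound with confidence $1/T$ per phase (absorbed into the $\log(KT)$ in $N_\phase$) gives $\Pr(\mathcal{G}^c)=O(\log T/T)$. Since per-round regret is at most $2$, the complement contributes $O(\log T)$, which is lower order.

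\textbf{Step 2: per-phase regret on $\mathcal{G}$.} In phase $\phase\ge 2$, every active action survived phase $\phase-1$, so by \Cref{lem:active_action} its gap is at most $8\epsilon_{\phase-1}=16\epsilon_\phase$. \Cref{lem:optimal_active} guarantees the comparator $\astar$ is never removed. Hence the regret of phase $\phase$ is at most $16\epsilon_\phase T_\phase$. By \Cref{lem:stochastic_delay_phase},
\[
T_\phase \lesssim \frac{N_\phase+n\log T\log\log n}{q}+\max_a\calQ(q;a),
\qquad
N_\phase=\tilde O\!\left(\frac{n\log(KT)}{\epsilon_\phase^2}\right).
\]
The regret of phase $\phase$ therefore splits into a statistical part of order $\frac{n\log(KT)}{q\,\epsilon_\phase}$ and a delay part of order $\epsilon_\phase\max_a\calQ(q;a)$. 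The lower-order term $n\log T\log\log n\cdot\epsilon_\phase/q$ is dominated by the statistical part because $\epsilon_\phase\le 1$.

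\textbf{Step 3: summing over phases.} The delay part sums as a geometric series in $\epsilon_\phase=2^{-\phase}$, giving $O(\max_a\calQ(q;a))$. This is why the delay enters additively, with no $\log T$ factor.

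For the statistical part, let $L$ be the last phase reached. The terms $1/\epsilon_\phase$ grow geometrically, so the sum is $O\big(n\log(KT)/(q\,\epsilon_L)\big)$. I then split into two cases.
\begin{itemize}
\item If $\epsilon_L\ge\epsilon^\ast:=\sqrt{n\log(KT)/(qT)}$, the sum is $O(\sqrt{nT\log(KT)/q})$.
\item Otherwise, let $\phase_0$ be the first phase with $\epsilon_{\phase_0}<\epsilon^\ast$. Phases before $\phase_0$ contribute $O(\sqrt{nT\log(KT)/q})$ by the same geometric sum. From phase $\phase_0$ onward, all active gaps are at most $16\epsilon_{\phase_0}$ and there are at most $T$ rounds, so these phases contribute at most $16\epsilon_{\phase_0}T=O(\sqrt{nT\log(KT)/q})$.
\end{itemize}

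\textbf{Main obstacle.} The delicate point is that $T_\phase$ is controlled only through the bound of \Cref{lem:stochastic_delay_phase}. Phases may overrun, and the final phase may be truncated at $T$. I handle this by charging each phase via $\min(T_\phase,\text{remaining rounds})$ and using the case split in Step 3, so the statistical and delay contributions never multiply. Taking the minimum over $q$ concludes the proof.
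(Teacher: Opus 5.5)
Your proof is correct and is essentially the paper's argument, which plugs \Cref{lem:stochastic_delay_phase} into the generic \Cref{lem:regret} (charge $O(\epsilon_\phase T_\phase)$ per phase, then a geometric sum for the quantile term). The only real difference is that you close the statistical sum with the threshold $\epsilon^\ast=\sqrt{n\log(KT)/(qT)}$, whereas the paper bounds the index of the last phase by assuming worst-case phase lengths; your route is slightly cleaner. One caveat, which you share with the paper: absorbing the $n\log(T)\log\log(n)/q$ term into the statistical part needs $\epsilon_\phase^2\log(T)\log\log(n)\lesssim\log(KT)$, not merely $\epsilon_\phase\le 1$, so a $\log\log n$ factor is quietly hidden in the $O(\cdot)$.
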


The proof is deferred to Appendix~\ref{sec:loss-indep-sto}. \Cref{thm:loss-independent-stochastic} shows that, up to logarithmic factors, delayed feedback affects the
regret through an additive delay-quantile term and a multiplicative
$1/\sqrt{q}$ factor in the statistical term. In particular, the algorithm
recovers the classic $\sqrt{nT}$ regret when the relevant constant quantile of the delay is $O(\sqrt{nT})$.

A direct corollary of \Cref{thm:loss-independent-stochastic} is obtained by translating the delay quantiles to mean delays.
By Markov's inequality, the median delay of action $a$ is at most
$2\mathbb{E}[d(a)]$; see \Cref{lem:markov}. Taking $q=1/2$ in the theorem gives
$$
\mathcal{R}_T = O\!\left(
\sqrt{nT\log(KT)} + \max_a \mathbb{E}[d(a)]
\right),$$
as stated formally in \Cref{cor:regret_with_expect}. Thus, for
loss-independent stochastic delays, the price of delay in linear bandits is additive in the
largest mean delay without dimensional dependency, which is the same as the one in MAB shown in~\citep{lancewicki2021stochastic}.

Finally, we note that the same algorithmic idea also extends beyond the
stochastic arm-dependent setting. In Appendix~\ref{sec:adversarial-independent},
we show that under adversarial but arm-independent delays, the same algorithm also achieves $O(\sqrt{nT\log(KT)}+\sigma_{\max})$ regret. Specifically, the cost of delayed feedback is
only an additive regret term $O(\sigma_{\max})$, which again is the same as the one shown in MAB~\citep{schlisselberg2025improved}.

\section{Loss-Independent Adversarial Delay}\label{sec:loss-independent-adv-delay}

We now extend our analysis to arm-dependent adversarial delays, which remain independent of the realized losses. While our algorithm still follows the high-level phased elimination structure used in the stochastic setting, the previous sampling strategy is no longer sufficient. In the stochastic regime, sampling arms according to a fixed design ensures that missing observations are recovered after certain periods with high probability. In contrast, when delays are chosen by an adversary, the adversary can concentrate delays on specific arms. If an arm is assigned a small probability mass (e.g., $\Theta(1/n)$), delaying $\sigma_{\max}$ of its observations may force the learner to wait an additional $\Theta(n \cdot \sigma_{\max})$ rounds to collect them. This leads to an undesirable dimension-dependent penalty that we want to avoid.

To resolve this issue, we modify how samples are collected in each phase. Instead of the probabilistic sampling used in the stochastic case, we now use a deterministic approach. By playing each arm a fixed number of times, we ensure that the adversary cannot stall the algorithm by targeting arms with low sampling probabilities. Under this approach, each phase is divided into two parts: 

\begin{itemize}[leftmargin=*,nosep]
    \item \textbf{First Part} (Line~\ref{line: 3} to Line~\ref{line: 5} of \Cref{alg:loss-independent}): we play every arm $a \in \mathcal{A}_m$ exactly $N_m(a)$ times. Since feedback is delayed, not all observations will be available immediately after these pulls are completed.
    \item \textbf{Second Part} (Line~\ref{line: iter-6} to Line~\ref{line: iter-8} of \Cref{alg:loss-independent}): the algorithm iteratively replays only the missing samples needed to complete the phase requirements.
\end{itemize}
Similar to the stochastic delay case, to bound the regret, it suffices to bound the length of each phase. Specifically, in the following key lemma, we show that when the delay is adversarial, \Cref{alg:loss-independent-adv-apx} guarantees that the $m$-th phase length is at most $O(N_m+\sigma_{\max}\log(\sigma_{\max}))$.

\begin{restatable}{lemma}{iterLength}\label{lem:phase-length}Let $T_m$ be the length of phase $m$ in \Cref{alg:loss-independent-adv-apx}. Then with probability at least $1-\frac{1}{T}$, \Cref{alg:loss-independent-adv-apx} guarantees that $T_m=O(N_m+\sigma_{\max}\log\sigma_{\max})$.\end{restatable}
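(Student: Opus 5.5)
The plan is to split the phase into its two parts, as in \Cref{alg:loss-independent}, and bound each separately. The first part, which plays every arm $a\in\A_m$ exactly $N_m(a)$ times, has length $\sum_a N_m(a)=O(N_m)$ deterministically, since the ceilings add at most the support size $O(n\log\log n)$, and this is absorbed into $N_m$. The second part runs in iterations. In iteration $j$ the algorithm replays, for each arm, exactly the number of samples still missing from that arm's quota. Call this number $r_j$, the total count of missing samples at the start of iteration $j$. Then iteration $j$ lasts exactly $r_j$ rounds. The whole task therefore reduces to showing $\sum_j r_j = O(\sigma_{\max}\log\sigma_{\max})$.

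First, every missing sample is an outstanding observation, so $r_1\le\sigma_{\max}$, and the same holds for every $r_j$. The key step is a contraction argument. Consider iteration $j$, which plays $r_j$ rounds in a contiguous block. At the end of the block, at most $\sigma_{\max}$ observations are outstanding in total. So if I could argue that at the start of iteration $j+1$ the outstanding observations within the block number at most roughly $\sigma_{\max}-$(something), I would get geometric decay. That is too naive.

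The approach I will actually take is to wait before re-deciding what to replay. Concretely, I want to show that the number of rounds needed to clear the deficit is about $r_j$ plus a logarithmic factor from halving. Fix an iteration and let $r=r_j$. Among its $r$ plays, at most $\sigma_{\max}$ are outstanding at its end. More usefully, the number of plays of that iteration still missing at any time $s$ after it ends is at most $\sigma_s\le\sigma_{\max}$. That alone does not shrink $r$. So I will bound the sum differently, by charging rounds to outstanding observations. At every round $t$ of the second part, the algorithm is playing because some quota is unmet. Summing over the rounds, each round $t$ in iteration $j$ corresponds to a missing slot, and that slot is missing at time $t$, so it is counted in $\sigma_t$.

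Here I use a dyadic grouping. Group iterations by the value of $r_j\in(2^{k-1},2^k]$; there are $O(\log\sigma_{\max})$ groups. I claim each group contributes $O(\sigma_{\max})$ rounds. Within a group, consider consecutive iterations $j<j'$. Each iteration has length at least $2^{k-1}$. Every sample still missing at the start of iteration $j'$ was played during some earlier iteration and remains outstanding. In particular, the replays from iteration $j$ that are still missing are among the $\sigma_{\max}$ outstanding observations. Hence the number of iterations in a group whose deficit consists entirely of samples still outstanding from previous iterations is at most $O(\sigma_{\max}/2^{k-1})$, so the group's total is $O(\sigma_{\max})$. Summing over groups gives $O(\sigma_{\max}\log\sigma_{\max})$.

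The main obstacle is making this group-counting rigorous. The difficulty is that outstanding observations can be counted in several iterations, so a sample must not be double-charged. I would try a potential argument. For each round $t$ of the second part, charge it to a distinct pending observation existing at time $t$, and show that each observation is charged at most $O(\log\sigma_{\max})$ times across the halving scales. Finally, the high-probability qualifier $1-1/T$ is not needed for the length bound itself; it only enters if the lemma's $N_m$ involves randomness, so no concentration step is required beyond a union bound.
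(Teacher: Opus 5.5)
Your decomposition is right (the first part has length $\sum_a N_m(a)\le 2N_m$, the second part has length $\sum_j r_j$, and $r_j\le\sigma_{\max}$). But the step that carries the whole lemma is asserted, not proved. The claim that dyadic group $k$ contains only $O(\sigma_{\max}/2^{k-1})$ iterations does not follow from what you wrote. Knowing that every deficit slot is an outstanding observation only gives $r_j\le\sigma_{\max}$ for each $j$ separately. It says nothing about how many iterations can keep a deficit of size about $2^{k}$. This is exactly the double-counting issue you flag as the ``main obstacle'' and then leave to an unspecified potential/charging argument. Separately, the detour of ``waiting before re-deciding what to replay'' would change \Cref{alg:loss-independent-adv-apx}, so it cannot be used here.

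The missing idea is a per-arm accounting invariant, which is the paper's \Cref{lem:U_bound}: $i\,U_{t_m^i}(a)\le\sigma_{t_m^i}(a)$. Write $U_i(a)$ for $U_{t_m^i}(a)$.
\begin{itemize}
\item All phase-$m$ plays of $a$ are interchangeable, so $U_i(a)$ is nonincreasing in $i$ and decreases only through returns.
\item If $U_i(a)>0$, then $a$ has been played $N_m(a)+\sum_{j<i}U_j(a)$ times in phase $m$, and exactly $N_m(a)-U_i(a)$ of these have returned.
\item Hence the number of outstanding phase-$m$ observations of $a$ is $\sum_{j\le i}U_j(a)\ge i\,U_i(a)$. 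Counting only phase-$m$ observations matters, since earlier-phase returns do not reduce $U$; this count is still at most $\sigma_{t_m^i}(a)$.
\item Summing over arms gives $i\,r_i\le\sigma_{\max}$. Unreturned replays pile up, so a persistent deficit forces the outstanding pool to grow linearly in the iteration index.
\end{itemize}
With this invariant your grouping goes through. Iterations with $r_i>2^{k-1}$ have $i<\sigma_{\max}/2^{k-1}$, so group $k$ contributes at most $2\sigma_{\max}$ rounds. It is simpler, though, to do what the paper does. Every executed iteration has $r_i\ge 1$, so there are at most $\sigma_{\max}$ iterations, and the total is $\sum_{i\le\sigma_{\max}}\sigma_{\max}/i=O(\sigma_{\max}\log\sigma_{\max})$. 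Your remark that no probabilistic step is needed is correct; the paper's argument is deterministic too.
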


FFThe proof of \pref{lem:phase-length} is based on a technical analysis of the number of times Line~\ref{line: iter-6} is executed and the number of arms played during each execution. The full proof is deferred to Appendix~\ref{app: lossIndAdv}. Based on this phase-length bound, the regret guarantee follows the same logic as the stochastic case, yielding an additive dependence on $\sigma_{\max}\log(\sigma_{\max})$.

\begin{restatable}{theorem}{lossIndAdvApx}\label{thm:loss-independent-adv-delay}\Cref{alg:loss-independent-adv-apx} guarantees that $\mathcal{R}_T = O\left(\sqrt{nT\log(KT)} + \sigma_{\max}\log(\sigma_{\max})\right)$.\end{restatable}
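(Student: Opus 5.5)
The proof follows the standard phased-elimination template, with \Cref{lem:phase-length} supplying the phase-length control. First I fix the good event $\calE$. On $\calE$, for every phase $m$ the least-squares estimate satisfies $|\inprod{\widehat{\theta}_m - \thetastar, a}| \le \epsilon_m$ for all $a \in \A_m$, and the phase-length bound of \Cref{lem:phase-length} holds. The accuracy part uses the fixed-design concentration argument: the observed samples at the end of phase $m$ are exactly $N_m(a)$ per support arm, and delays are independent of losses, so the selection of which samples arrive does not bias the noise. Together with $g(\pi_m)=O(n)$ and $N_m = \tilde O(n/\epsilon_m^2)$ (carrying $\log(KT)$), a union bound over $K$ arms and $O(\log T)$ phases gives failure probability $O(1/T)$. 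On $\calE^c$ I bound the regret trivially by $2T \cdot O(1/T) = O(1)$.

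On $\calE$, \Cref{lem:optimal_active} and \Cref{lem:active_action} apply verbatim. Hence $\astar$ is never eliminated, and every arm in $\A_m$ with $m\ge 2$ has gap at most $8\epsilon_{m-1}=16\epsilon_m$. So every round of phase $m$ costs at most $16\epsilon_m$ (and at most $2$ in phase $1$). The regret is then bounded by $\sum_m 16\epsilon_m T_m$, and plugging in \Cref{lem:phase-length} gives
\[
\R_T \lesssim \sum_{m \le M} \epsilon_m \left( N_m + \sigma_{\max}\log \sigma_{\max}\right) + O(1),
\]
where $M$ is the last phase reached.

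I handle the two terms separately. For the first, $\epsilon_m N_m = \tilde O(n\log(KT)/\epsilon_m)$ grows geometrically in $m$, so the sum is dominated by the last phase, $O(n\log(KT)/\epsilon_M)$. To convert this into $\sqrt{nT\log(KT)}$, I use the usual cutoff argument. Since $T_{M-1} \ge N_{M-1} \gtrsim n\log(KT)/\epsilon_M^2$ and $T_{M-1}\le T$, we get $\epsilon_M \gtrsim \sqrt{n\log(KT)/T}$, which yields the bound. Equivalently, I can split at $\epsilon^\star = \sqrt{n\log(KT)/T}$, charging phases with $\epsilon_m \le \epsilon^\star$ by $16\epsilon^\star T$. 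For the second term, $\sum_m \epsilon_m \sigma_{\max}\log\sigma_{\max} \le \sigma_{\max}\log\sigma_{\max}\sum_m 2^{-m} = O(\sigma_{\max}\log\sigma_{\max})$, because the $\epsilon_m$ are geometric. This is exactly why the additive delay term does not pick up a $\log T$ factor from the number of phases.

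The main obstacle I expect is the first step, justifying the concentration for $\widehat{\theta}_m$ under adversarial, arm-dependent delays with the deterministic replay scheme. The set of observed rounds is chosen by an adaptive adversary, and replays create a random stopping structure. I would handle it by indexing the noises per arm in order of arrival: the $j$-th observed sample of arm $a$ is a fresh 1-sub-Gaussian noise independent of the past, since delays do not depend on losses. This lets the per-arm averages be treated as sums of $N_m(a)$ independent sub-Gaussians, after which the standard bound $\|a\|_{V^{-1}}^2 \le g(\pi_m)/N_m$ finishes the argument. The final phase may be truncated at $T$; this is harmless because it is covered by the same per-round cost bound.
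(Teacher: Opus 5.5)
Your proposal is correct and is essentially the paper's proof: the paper gets the theorem by plugging \Cref{lem:phase-length} (with $\alpha=O(1)$, $\beta=O(\sigma_{\max}\log\sigma_{\max})$) into the generic \Cref{lem:regret}, and that lemma's proof is exactly the computation you write out inline on $\calE$ (per-round cost $O(\epsilon_m)$, geometric summation of the $\beta\epsilon_m$ terms, last-phase cutoff for the $\epsilon_m N_m$ terms, and an $O(1)$ contribution from the failure event). The only differences are that you justify the concentration event under adversarial, replay-driven arrivals more explicitly than the paper, which asserts it in one line, and that you use the looser gap bound $16\epsilon_m$ where the lemma gives $8\epsilon_m$, which is immaterial.
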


\Cref{thm:loss-independent-adv-delay} shows that in the adversarial delay setting, the price of delay in linear bandits is additive in the
maximum number of missing observations but still without dimensional dependency. Finally, we note that in the arm-independent stochastic setting, $\sigma_{\max}$ can be bounded in terms of the expected delay. Consequently, the same algorithm also achieves the desired guarantees for stochastic delays; see Appendix~\ref{sec:arm-ind-sto} for details.

\begin{algorithm}[t]
\caption{Phased Elimination w/ Adversarial Loss-Independent Delay (Formally in \Cref{alg:loss-independent-adv-apx})}\label{alg:loss-independent}
\begin{algorithmic}[1]
\State Initialize active set $\A_1 \gets \A$, accuracy $\epsilon_1 \gets \tfrac{1}{2}$
\For{phases $\phase = 1,2,\dots$}
    \State Compute a design $\pi_\phase$ over $\A_\phase$ \label{line: 3}
    \State Set target samples $N_\phase = \tilde{O}\!\left(\frac{n}{\epsilon_\phase^2}\right)$ 
    
    \State Play each arm $a$ approximately $N_\phase(a)=N_\phase\pi_\phase(a)$ times \label{line: 5}
    
    \While{some arm $a$ has fewer than $N_\phase(a)$ observed samples} \label{line: iter-6}
        \State Replay the missing samples of each arm
        \State Collect delayed feedback as it arrives \label{line: iter-8}
    \EndWhile

    \State Estimate $\widehat{\theta}_\phase$ from the collected samples
\State $\displaystyle \A_{\phase+1} \gets 
    \curly{a \in \A_\phase : \inprod{\widehat{\theta}_\phase \,,\, a - b} \le 2\epsilon_\phase \quad \forall b\in \A_\phase}$
    \State Set $\epsilon_{\phase+1} \gets \epsilon_\phase / 2$
\EndFor
\end{algorithmic}
\end{algorithm}

\section{Loss-Dependent Delays}\label{sec: loss-dependent-delay}

In this section, we switch our focus to the general loss-dependent delay model. In contrast to the
loss-independent setting, the delay may depend on the realized loss. This creates
a statistical difficulty that is absent from the previous cases: the set of
observations that arrive early is no longer an unbiased sample of the losses. For
example, if smaller losses tend to be observed faster (i.e. the delay is positively correlated with the loss), then a phase that simply
waits until enough feedback has arrived will over-represent such losses, leading to a biased estimate. Hence, the standard delayed phased-elimination strategy cannot be applied directly. In the following, we will first introduce our algorithm obtaining a regret bound of $O(\sqrt{n}\max_{a\in\A}\E[d(a)])$ delay dependency, showing a strictly worse dimensional dependency compared to the MAB case~\citep{lancewicki2021stochastic} where they do not have the $\sqrt{n}$ dependency. Then, we compliment our upper bound by showing a nearly matching lower bound with delay dependency $\Omega(\frac{\sqrt{n}\max_{a\in\A}[d(a)]}{\log (n\max_{a\in\A}[d(a)])})$, showing that the $\sqrt{n}$ dimensional dependency is intrinsic in linear bandits.

\subsection{Upper Bound}\label{sec:upper}

Our algorithm (\Cref{alg:loss-dependent-maintext}, formally given in \Cref{alg:loss-dependent}) still follows a phased elimination framework, but the estimation
step has to be modified to account for this censoring bias. In phase $\phase$,
we first fix the entire collection of samples that will be used for estimation:
we compute a design $\pi_\phase$ over the active set $\A_\phase$, set the target
number of samples
$N_\phase
    =
    \widetilde{O}\!\big(
        \frac{n}{\epsilon_\phase^2}
        +
        \frac{n^{3/2}}{\epsilon_\phase}
    \big)$,
and play each arm $a$ approximately
$N_\phase(a)=N_\phase\pi_\phase(a)$ times.\footnote{The $O({n^{3/2}}/{\epsilon_\phase})$ part is due to the fact that we are interested in the quintile $1-\epsilon_\phase/\sqrt{n}$, and to get samples from this quintile one needs $O(\sqrt{n}/\epsilon_\phase)$ samples per arms, one needs $O(n^{3/2}/\epsilon_\phase)$ samples overall.}

The important point is that the
estimation sample is determined before observing which feedback arrives. Once these designated samples have been collected, the learner continues playing arbitrary active arms solely to allow the feedback from the estimation pulls to arrive. The phase terminates once at least a $1 - \epsilon_\phase / \sqrt{n}$ fraction of the designated samples for every arm in the design has been observed. As we will demonstrate later, this specific fraction is the minimum required to ensure sufficient estimation accuracy for each action.

At the end of the phase, a small fraction of the designated samples may still be
missing. Since their delays may depend on their realized losses, simply ignoring
them would introduce bias. Instead, we construct directional confidence
estimates. For every candidate arm $a$, we form two arm-dependent estimators:
a lower estimator $\widehat{\theta}_{\phase,a}^{-}$ and an upper
estimator $\widehat{\theta}_{\phase,a}^{+}$ for the loss of arm $a$. We construct these two estimators for each action individually because of the dependency among actions: an estimator that overestimates the loss of one action may not overestimate some other actions.

Concretely, for a missing sample generated by arm $b$, its contribution to the
prediction of arm $a$ enters through $a^\top V_\phase^{-1}b$, where $V_\phase\triangleq\sum_{a\in\supp(\pi_\phase)}N_\phase(a)aa^\top$. Therefore, to form
the upper estimate for arm $a$, we complete this missing observation in the
direction that maximizes the predicted loss of $a$, using the sign of
$a^\top V_\phase^{-1}b$. The lower estimate uses the opposite completion. Thus,
although the learner cannot form a single unbiased least-squares estimator from
the partially observed data, it can still bracket the loss of each arm in the
specific direction needed for elimination. (See \Cref{eq:pes_opt} in \Cref{app:lossDependent}.)

\begin{algorithm}[t]
\caption{Phased Elimination w/ Loss-Dependent Delays (Formally in \Cref{alg:loss-dependent})}
\label{alg:loss-dependent-maintext}
\begin{algorithmic}[1]
\State Initialize active set $\A_1 \gets \A$, accuracy $\epsilon_1 \gets \tfrac{1}{2}$
\For{phases $\phase = 1,2,\dots$}
    \State Compute a design $\pi_\phase$ over $\A_\phase$ and
    
    set target samples $N_\phase = \tilde{O}\!\left(\frac{n}{\epsilon_\phase^2} + \frac{n^{3/2}}{\epsilon_\phase}\right)$
    
    \State Play each arm $a$ approximately $N_\phase(a)=N_\phase\pi_\phase(a)$ times
    
    \While{some arm $a$ has fewer than $(1-\frac{\epsilon_\phase}{\sqrt n})N_\phase(a)$ observed samples}
        \State Play arbitrary $a' \in \A_\phase$
        \State Collect delayed feedback from the initial samples
    \EndWhile

    \For{each arm $a \in \A_\phase$} 
        \State Construct optimistic estimator $\widehat{\theta}^+_{\phase,a}$ and
        
        pessimistic estimator $\widehat{\theta}^-_{\phase,a}$ using \Cref{eq:pes_opt}.
    \EndFor

    \State $\displaystyle \A_{\phase+1} \gets 
    \curly{a \in \A_\phase : \inprod{\widehat{\theta}^-_{\phase,b} \,,\, b} - \inprod{\widehat{\theta}^+_{\phase,a} \,,\, a} \le 6\epsilon_\phase \quad \forall b\in \A_\phase}$
    \State Set $\epsilon_{\phase+1} \gets \epsilon_\phase / 2$
\EndFor
\end{algorithmic}
\end{algorithm}

The elimination rule is a robust analogue of phased elimination: an arm $a$ is eliminated if there exists another arm $b$ whose pessimistic estimated loss is smaller than the optimistic estimated loss of $a$.

The idea of optimistic and pessimistic estimation appears in the MAB analysis of \citet{lancewicki2021stochastic}. Here, however, the challenge is to define optimism and pessimism in the linear setting, where missing observations affect each arm differently. The following lemma, proved in the appendix, shows that these estimators are both valid and sufficiently accurate
\begin{restatable}{lemma}{estimatorDependent}\label{lem:opt_pes_accuracy_helper}
For every phase $\phase$ and $a\in\A_\phase$,
$    0 \le \inprod{\widehat{\theta}^+_{\phase,a} - \widehat{\theta}_\phase,\, a} \le 2\epsilon_{\phase},$ and
$     0 \le \inprod{\widehat{\theta}_\phase - \widehat{\theta}^-_{\phase,a},\, a} \le 2\epsilon_{\phase}$, where $\widehat{\theta}_\phase$, defined in \Cref{def:estimator_loss_dep}, is the estimator if we had all $N_\phase$ samples.
\end{restatable}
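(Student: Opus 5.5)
We will make the estimators explicit and reduce both inequalities to a single deterministic bound on a sum of missing contributions. Write the full-information least-squares estimator as
$\widehat{\theta}_\phase = V_\phase^{-1}\sum_{b\in\supp(\pi_\phase)}\sum_{s\in S_\phase(b)} b\,\ell_s$,
where $S_\phase(b)$ is the set of $N_\phase(b)$ designated rounds for arm $b$. Let $O_\phase(b)\subseteq S_\phase(b)$ be the observed ones and $U_\phase(b)=S_\phase(b)\setminus O_\phase(b)$ the missing ones. By the stopping rule, $|U_\phase(b)|\le \frac{\epsilon_\phase}{\sqrt n}N_\phase(b)$.

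The estimator in \Cref{eq:pes_opt} replaces each missing $\ell_s$, $s\in U_\phase(b)$, by $+1$ or $-1$ (the a.s.\ range of the loss), according to the sign of $a^\top V_\phase^{-1}b$. For $\widehat{\theta}^+_{\phase,a}$ we use $\operatorname{sign}(a^\top V_\phase^{-1}b)$, and for $\widehat{\theta}^-_{\phase,a}$ the opposite sign. Then
\[
\inprod{\widehat{\theta}^+_{\phase,a}-\widehat{\theta}_\phase,\,a}=\sum_{b}\sum_{s\in U_\phase(b)} a^\top V_\phase^{-1}b\,\bigl(\operatorname{sign}(a^\top V_\phase^{-1}b)-\ell_s\bigr).
\]
Each summand equals $|a^\top V_\phase^{-1}b|\,(1-\operatorname{sign}(a^\top V_\phase^{-1}b)\,\ell_s)$, which lies in $[0,\,2|a^\top V_\phase^{-1}b|]$ because $|\ell_s|\le 1$. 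This gives nonnegativity immediately, and the pessimistic case is symmetric. It remains to show
\[
\sum_b |U_\phase(b)|\,|a^\top V_\phase^{-1}b|\le \epsilon_\phase .
\]
If the estimator in \Cref{eq:pes_opt} instead uses a different completion range (e.g.\ $[0,1]$), the same argument goes through with the constant adjusted.

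The main step is this bound. Plug in $|U_\phase(b)|\le \frac{\epsilon_\phase}{\sqrt n}N_\phase\pi_\phase(b)$ and apply Cauchy--Schwarz with weights $\pi_\phase(b)$:
\[
\sum_b \pi_\phase(b)\,|a^\top V_\phase^{-1}b|\le\Bigl(\sum_b\pi_\phase(b)\,(a^\top V_\phase^{-1}b)^2\Bigr)^{1/2}.
\]
Since $V_\phase=N_\phase V(\pi_\phase)$, we have $\sum_b\pi_\phase(b)\,bb^\top=V_\phase/N_\phase$. The inner sum is therefore $a^\top V_\phase^{-1}(V_\phase/N_\phase)V_\phase^{-1}a=\|a\|^2_{V_\phase^{-1}}/N_\phase=\|a\|^2_{V(\pi_\phase)^{-1}}/N_\phase^2$. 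Combining,
\[
\sum_b |U_\phase(b)|\,|a^\top V_\phase^{-1}b|\le \frac{\epsilon_\phase}{\sqrt n}\,\|a\|_{V(\pi_\phase)^{-1}}\le\frac{\epsilon_\phase}{\sqrt n}\sqrt{g(\pi_\phase)} .
\]

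Finally, invoke \Cref{lem:design}, which gives $g(\pi_\phase)=O(n)$. Here $g$ must be read as the squared norm, $\max_a\|a\|^2_{V(\pi)^{-1}}=O(n)$, consistent with the Kiefer--Wolfowitz value $n$. This yields a bound of $O(\epsilon_\phase)$. To get exactly $2\epsilon_\phase$ we need the constant to be at most $1$, up to the factor $2$ from the loss range. We arrange this by tightening the threshold fraction in the stopping rule by a constant (e.g.\ $\epsilon_\phase/(c\sqrt n)$), which the $\tilde O$ in $N_\phase$ absorbs.

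A second wrinkle is that $N_\phase(b)$ is a rounding of $N_\phase\pi_\phase(b)$. We handle it by defining $V_\phase$ with the actual counts, so the identity $\sum_b N_\phase(b)\,bb^\top=V_\phase$ holds exactly and the Cauchy--Schwarz step is unchanged, with weights $N_\phase(b)/N_\phase$. The key point is that the argument is deterministic: no concentration is needed, since the loss-dependent bias is dominated worst-case by the bracketing.
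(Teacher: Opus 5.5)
Your proposal is correct and follows the paper's proof essentially step for step: bracket each missing sample's contribution in $[0,2|a^\top V_\phase^{-1}b|]$, use the stopping rule $\sigma_\phase(b)\le\frac{\epsilon_\phase}{\sqrt n}N_\phase(b)$, and apply Cauchy--Schwarz against $V_\phase$ together with the design bound. You also correctly take $\widehat{\theta}^-_{\phase,a}$ with the opposite-sign completion and read $g$ as a squared norm, both of which the paper's displayed definitions state imprecisely. Your caveat about the constant is well founded rather than a weakness: the paper reaches exactly $2\epsilon_\phase$ only by asserting $\sqrt{N_\phase\|a\|^2_{V_\phase^{-1}}}\le\sqrt n$, whereas \Cref{lem:norm_a_bound} with $g(\pi_\phase)\le 4n$ (and $\sum_b N_\phase(b)\le 2N_\phase$) gives only a constant multiple of that, so either version needs some rescaling of the threshold or of the elimination constants.
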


 Thus, $\widehat{\theta}^+_{\phase,a}$ is optimistic and $\widehat{\theta}^-_{\phase,a}$ is pessimistic for arm $a$, and observing only a
$
1-\frac{\epsilon_\phase}{\sqrt n}
$
fraction of the samples introduces only an additional $O(\epsilon_\phase)$ error.
Since the optimistic and pessimistic errors are of order $\epsilon_\phase$, the standard phased elimination analysis again implies that every arm surviving phase $\phase$ is $O(\epsilon_\phase)$-suboptimal. 

It remains to bound the phase length $T_\phase$. Since we wait for a $1-\frac{\epsilon_\phase}{\sqrt{n}}$ fraction of the feedback to arrive, \Cref{lem:E'} shows that, with high probability, this requires an additional $\calQ(1 - \frac{\epsilon_\phase}{\sqrt{n}})$ rounds. Applying Markov’s inequality yields
$
T_\phase = O (N_\phase + \frac{\sqrt{n}\max_a \E[d(a)]}{\epsilon_\phase}).$

Since each active arm has suboptimality $O(\epsilon_\phase)$, the regret incurred in phase $\phase$ is bounded by
$
O\left(N_\phase \epsilon_\phase + \sqrt{n}\max_a \E[d(a)]\right)$.
Summing over phases gives the following bound.
\begin{restatable}{theorem}{lossDependGeneral}\label{thm:lossDependGeneral} \pref{alg:loss-dependent} guarantees that
    $$\mathcal{R}_T =
    O\!\left(
    \sqrt{nT\log(KT)}
    + \log(T)^2\log\log(n)\,n^{3/2}
    + \sqrt{n}\log(T)\max_a\E[d(a)]
    \right).$$
\end{restatable}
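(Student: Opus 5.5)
We follow the standard phased-elimination template. The argument has three parts: a good event, per-phase regret, and a sum over phases.

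\textbf{Good event.} Let $\calE$ be the event on which, for every phase $\phase$ and every $a\in\A_\phase$,
\begin{itemize}[nosep,leftmargin=*]
\item the full-sample estimator $\widehat\theta_\phase$ satisfies $|\inprod{\widehat\theta_\phase-\thetastar, a}|\le \epsilon_\phase$;
\item the phase-length bound of \Cref{lem:E'} holds.
\end{itemize}
The first item is the usual least-squares concentration with the design guarantee $g(\pi_\phase)=O(n)$ from \Cref{lem:design}. Since $N_\phase \gtrsim n\log(KT)/\epsilon_\phase^2$, a union bound over $K$ arms and at most $\log T$ phases gives failure probability $O(1/T)$. Outside $\calE$ the regret is at most $O(T)\cdot O(1/T)=O(1)$.

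\textbf{Elimination guarantees on $\calE$.} Combine the accuracy above with \Cref{lem:opt_pes_accuracy_helper}. For every $a$,
\[
\inprod{\widehat\theta^+_{\phase,a},a}\in[a^\top\thetastar-\epsilon_\phase,\ a^\top\thetastar+3\epsilon_\phase],
\]
and symmetrically for $\widehat\theta^-_{\phase,a}$. Two consequences follow.
\begin{itemize}[nosep,leftmargin=*]
\item \emph{$\astar$ is never eliminated.} For any $b$,
\[
\inprod{\widehat\theta^-_{\phase,b},b}-\inprod{\widehat\theta^+_{\phase,\astar},\astar}\le b^\top\thetastar+\epsilon_\phase-(\astar)^\top\thetastar+\epsilon_\phase .
\]
This can be as large as the gap of $b$, not at most $6\epsilon_\phase$ as I first hoped. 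I will therefore use the rule in the intended direction: $a$ survives if no $b$ certifiably beats it, i.e.\ the optimistic (lower-side) estimate of $a$ is not larger than the pessimistic estimate of $b$ plus $6\epsilon_\phase$. With that reading, the bound $2\epsilon_\phase+4\epsilon_\phase$ shows $\astar$ survives.
\item \emph{Survivors are good.} Any surviving $a$ satisfies $\Delta_a\le c\,\epsilon_\phase$ for $c\approx 10$, because otherwise $b=\astar$ witnesses its elimination.
\end{itemize}
Hence every arm played in phase $\phase+1$ has gap $O(\epsilon_\phase)$. Both the designated pulls and the arbitrary "waiting" pulls come from $\A_\phase$, so this covers all rounds of the phase.

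\textbf{Phase length and summation.} By \Cref{lem:E'} together with Markov's inequality, which gives $\calQ(1-\epsilon_\phase/\sqrt n;a)\le \sqrt n\,\E[d(a)]/\epsilon_\phase$,
\[
T_\phase=O\Big(N_\phase+\tfrac{\sqrt n\max_a\E[d(a)]}{\epsilon_\phase}\Big).
\]
The additive support term $|\supp(\pi_\phase)|=O(n\log\log n)$ from rounding $N_\phase(a)$ is absorbed into $N_\phase$. Phase $\phase$ then contributes regret
\[
O\big(\epsilon_\phase N_\phase+\sqrt n\max_a\E[d(a)]\big)=O\Big(\tfrac{n\log(KT)}{\epsilon_\phase}+n^{3/2}\log(T)\log\log n+\sqrt n\max_a\E[d(a)]\Big).
\]
Now sum over phases, of which there are at most $O(\log T)$.
\begin{itemize}[nosep,leftmargin=*]
\item \emph{Delay term.} Summing gives $\sqrt n\log(T)\max_a\E[d(a)]$.
\item \emph{Dimension term.} Summing gives $\log(T)^2\log\log(n)\,n^{3/2}$, where the second $\log T$ comes from the log factor inside $N_\phase$.
\item \emph{Statistical term.} Let $\phase^\star$ be the last phase with $\epsilon_{\phase^\star}\gtrsim\sqrt{n\log(KT)/T}$. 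The first part is a geometric sum dominated by its last term, giving $O(\sqrt{nT\log(KT)})$. Beyond $\phase^\star$, each round has gap $O(\epsilon_{\phase^\star})$, so those rounds contribute at most $T\epsilon_{\phase^\star}=O(\sqrt{nT\log(KT)})$.
\end{itemize}

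\textbf{Main obstacle.} The delicate point is the interaction between the elimination rule's constants and the one-sided errors from \Cref{lem:opt_pes_accuracy_helper}. Each bracket is valid only in its own direction, so I must verify separately that the optimal arm survives and that large-gap arms are removed, with constants $2+2+1+1\le 6$. A second point is that \Cref{lem:E'} must control the waiting time even though the arbitrary pulls generate extra outstanding feedback. Only the designated samples matter, and these are i.i.d.\ given the arm, so the quantile argument applies per arm and a union bound over the support finishes it.
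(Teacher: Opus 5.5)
Your proposal is correct and follows essentially the same route as the paper's proof. The shared steps are the good events $\calE$ and $\calE'$, and the two elimination lemmas under the rule $\inprod{\widehat\theta^-_{\phase,a},a}-\inprod{\widehat\theta^+_{\phase,b},b}\le 6\epsilon_\phase$; this is the direction used in the formal \Cref{alg:loss-dependent} that the theorem refers to, so your correction of the reversed main-text rule is right. Then come Markov's inequality on the $1-\epsilon_\phase/(3\sqrt n)$ delay quantile and a sum of $O(\epsilon_\phase T_\phase)$ over $O(\log T)$ phases. The only slip is cosmetic: survivors of phase $\phase$ satisfy $\Delta(a)\le 12\epsilon_\phase$ rather than $c\approx 10$, which does not affect the bound.
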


\subsection{Lower Bound}\label{sec:lower}
In this section, we show that the dependence on $\sqrt n$ in the delay-dependent term in our upper bound shown in \Cref{thm:lossDependGeneral} is necessary up to logarithmic factors.

\begin{restatable}{theorem}{lowerBoundGeneral}\label{thm:lowerBoundGeneral}
    For any $T,\overline{d},n \ge 32\log(\overline{d}n)$ there exists a distribution over loss-dependent delayed linear bandit instances with expected delay for all arms bounded by $\overline{d}$ such that any algorithm will suffer a regret of $\Omega(\frac{\sqrt{n}\overline{d}}{\log(\overline{d}n)})$ with a constant probability over the distribution and the randomness of the algorithm.
\end{restatable}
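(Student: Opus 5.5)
We build a hard family of instances on a hypercube-like action set and argue that loss-dependent delays let an adversary hide roughly a $1/\overline{d}$ fraction of the information. With a delay budget of mean $\overline{d}$, we may give each observation delay $T' \approx \overline{d}/p$ with probability $p$, and this fraction $p$ is chosen by the realized loss. \textbf{Construction.} Take $\A=\{\pm 1/\sqrt n\}^n$ (or its $K$-point discretization), and let $\thetastar=\Delta\cdot \xi/\sqrt n$ with a uniformly random sign vector $\xi\in\{\pm1\}^n$. Let $\Delta \approx 1/\sqrt n$, so that each coordinate contributes a gap of order $\Delta/\sqrt n \cdot 1/\sqrt n$. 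Losses are Bernoulli-type on $\{-1,+1\}$-valued coordinates. The delay rule is: when the realized loss is "informative" we apply a long delay $H\approx \sqrt n\,\overline{d}/\log(\overline d n)$, otherwise delay $0$. We couple the loss and the delay so that the \emph{immediately} observed losses have a conditional distribution independent of $\xi$. Concretely, write the loss law of arm $a$ as a mixture $(1-p_a)\,\mathcal{P}_0+p_a\,\mathcal{P}_a^{\xi}$, where $\mathcal{P}_0$ is a fixed symmetric distribution, and delay only the $\mathcal{P}_a^{\xi}$ component by $H$. Then $\E[d(a)]=p_aH\le\overline d$ forces $p_a\le \overline d/H$. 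Since the mean $a^\top\thetastar$ must be realized entirely through the delayed component, we need $|a^\top\thetastar|\lesssim p_a$. With $|a^\top\thetastar|$ of order $\Delta$ this gives $H\approx \overline d/\Delta$.

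\textbf{Main argument.} Before round $H$, every observation the learner receives is drawn from $\mathcal{P}_0$, independent of $\xi$. The learner's actions in rounds $1,\dots,H$ are therefore independent of $\thetastar$. For any fixed action, averaging over $\xi$, the expected gap $(a-\astar)^\top\thetastar$ equals $\Delta$ times a constant: the optimum is $-\Delta$ while a random action has mean $0$. This yields regret $\Omega(H\Delta)$, which naively is just $\overline d$. The $\sqrt n$ gain has to come from the fact that the learner never observes $\thetastar$ exactly, only the scalar $a^\top\thetastar$ of each played arm. So $p_a$ needs to be large only when $|a^\top\thetastar|$ is large. For a typical arm $a$ with $\xi$ random, $a^\top\thetastar$ concentrates at scale $\Delta/\sqrt n$, up to $\sqrt{\log}$ factors from sub-Gaussian tails, while the optimal arm has gap $\Delta$. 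Hence we can set $p_a \approx |a^\top\thetastar|$, which is $\lesssim \Delta\sqrt{\log(n\overline d)/n}$ with high probability for every arm the learner actually plays. The bound $\E[d(a)]\le\overline d$ then permits $H\approx \overline d\sqrt n/(\Delta\sqrt{\log})$. Arms whose correlation with $\xi$ is atypically large can be left with a delay cap, handled by a union bound over the at most $H$ played arms, or excluded by conditioning on a constant-probability event. The regret in the first $H$ rounds is then $\Omega(H\Delta)=\Omega(\sqrt n\,\overline d/\log(\overline d n))$. We take $\Delta$ constant, which we may do since we only need $T\ge H$, and the log factor absorbs the concentration. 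The hypothesis $n,\overline d,T\ge 32\log(\overline d n)$ is exactly what makes these concentration and union-bound steps valid.

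\textbf{Main obstacle.} The key difficulty is that the delay distribution is fixed per arm, yet must satisfy $\E[d(a)]\le\overline d$ for \emph{every} arm, including the optimal arm and arms aligned with $\xi$, where $|a^\top\thetastar|=\Delta$. For those arms $p_a$ must be about $\Delta$, so their delay is only $\overline d/\Delta$. This is harmless for the lower bound only if the learner cannot find them without information. I would handle it by making the aligned arms' informative samples arrive after $\overline d/\Delta$ steps. Locating an aligned arm requires learning about $n$ bits of $\xi$, while each early observation from a typical arm carries no information. As a result, any arm played within the first $H$ rounds is aligned only with exponentially small probability, and information from aligned arms never arrives in time. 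Formalizing this requires a change-of-measure (KL chain) argument over the first $H$ rounds, conditioned on the high-probability event that all played arms are typical.
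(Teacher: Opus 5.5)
\textbf{There is a genuine gap.} Your mechanism is the paper's mechanism: every suboptimal arm censors a fraction $p$ of its samples with delay $\overline d/p$, where $p$ bounds all suboptimal means, so early feedback carries no information and the learner pays a constant gap for about $\overline d/p\approx\overline d\sqrt{n/\log(\overline d n)}$ rounds. The problem is that you only check that the \emph{values} arriving before round $H$ are drawn from $\mathcal P_0$. The learner also observes, every round, \emph{whether} the loss just generated came back immediately, and this happens with probability $1-p_a$.

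With your choice $p_a\approx|a^\top\thetastar|$, or any other $\xi$-dependent $p_a$, the arrival pattern leaks $|\langle s,\xi\rangle|$. So the claim that the actions in rounds $1,\dots,H$ are independent of $\thetastar$ is false, and the leak is exploitable. By comparing non-arrival frequencies of arms that differ in one coordinate, a learner recovers $\xi$ up to sign in $\mathrm{poly}(n)$ rounds, which is far below $H$ once $\overline d$ is large. The fix is to use one common censoring probability $p=\Delta\sqrt{C\log(\overline d n)/n}$ for every arm with $|a^\top\thetastar|\le p$. The paper insists on exactly this: for all suboptimal arms the delay event ``has probability exactly $q$''. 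Atypical arms can then be given any law, for example zero delay. Also, $\Delta$ must be a constant, as you eventually take it; with $\Delta\approx 1/\sqrt n$ the $\sqrt n$ gain disappears.

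\textbf{A second, smaller gap.} A change-of-measure argument ``conditioned on the event that all played arms are typical'' is circular, because that event is determined by the learner's behaviour in the true environment. What works is a coupling. Run the algorithm against a reference environment in which, before round $H$, each round's loss returns immediately as a $\mathcal P_0$ sample with probability $1-p$ and otherwise not at all. The reference actions are independent of $\xi$, and each is atypical ($|a^\top\thetastar|>p$) with probability at most $2(\overline d n)^{-C/2}$. A union bound over $H\le \overline d\sqrt n$ rounds then shows that, with constant probability, no atypical arm is played. On that event the true and reference trajectories coincide, and every played arm has gap at least $\Delta-p\ge\Delta/2$.

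This per-arm probability is only polynomially small, not exponentially small as you write. That is precisely why $\log(\overline d n)$ enters $p$. The KL chain and the special treatment of ``aligned'' arms are then unnecessary.

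\textbf{Comparison with the paper.} With these two repairs your hypercube route works and differs from the paper only in the action set. The paper takes $K=\lceil 2\overline d/q\rceil$ nearly orthogonal unit vectors (Lemma~\ref{lem:hard_action_set}) and $\thetastar=-a_{i^*}$. Then every non-optimal arm satisfies $|\mu_a|\le q$ deterministically and there is a single informative arm. The conclusion is pure counting: at most $\overline d/q$ of the at least $2\overline d/q$ candidates can be tried before any delayed feedback arrives. This needs no concentration along the learner's trajectory, whereas your version trades that counting for a random $\xi$ and the coupling plus union bound above.
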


The proof of \Cref{thm:lowerBoundGeneral} is deferred to Appendix~\ref{app:lower}. The argument leverages a standard construction of a ``hard'' action set, establishing the existence of a collection of $K$ unit vectors in $n$-dimensional space such that all pairs of distinct actions are nearly orthogonal.

\begin{lemma}[Lemma 3.1 in \citet{lattimore2020learning}]\label{lem:hard_action_set_main}
There exists a set of actions $\{a_1,\dots,a_K\}\subset\mathbb{R}^n$ such that $\norm{a_i}_2 = 1~\forall i$ and $\abs{\inprod{a_i,a_j}} \le \sqrt{\frac{8\log(K)}{n}}~\forall i\neq j$.
\end{lemma}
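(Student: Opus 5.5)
We prove the statement by the probabilistic method: we draw $K$ vectors at random and show that, with positive probability, every pair is nearly orthogonal. Concretely, let $x_1,\dots,x_K$ be independent and uniform on $\{-1,+1\}^n$, and set $a_i = x_i/\sqrt{n}$. Then $\norm{a_i}_2 = 1$ holds deterministically, so only the inner-product condition needs work. Two degenerate cases need no argument. If $K=1$, there are no pairs. If $8\log(K)/n \ge 1$, then Cauchy--Schwarz already gives $\abs{\inprod{a_i,a_j}}\le 1 \le \sqrt{8\log(K)/n}$. So we assume $K\ge 2$.

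First we bound a single pair. Fix $i\neq j$. Then
\[
\inprod{a_i,a_j}=\frac1n\sum_{k=1}^n x_{i,k}x_{j,k},
\]
and the products $x_{i,k}x_{j,k}$ are i.i.d.\ Rademacher variables, because $x_i$ and $x_j$ are independent uniform sign vectors. Each summand $x_{i,k}x_{j,k}/n$ lies in an interval of length $2/n$, so Hoeffding's inequality gives, for every $\varepsilon>0$,
\[
\bbP\big(\abs{\inprod{a_i,a_j}}>\varepsilon\big)\le 2\exp\!\big(-n\varepsilon^2/2\big).
\]
Choosing $\varepsilon=\sqrt{8\log(K)/n}$ makes the right-hand side equal to $2\exp(-4\log K)=2K^{-4}$.

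Next we take a union bound over the $\binom{K}{2}\le K^2/2$ pairs. The probability that some pair violates the bound is at most
\[
\frac{K^2}{2}\cdot 2K^{-4}=K^{-2}<1 \qquad\text{for } K\ge 2 .
\]
Hence some realization of $x_1,\dots,x_K$ satisfies $\abs{\inprod{a_i,a_j}}\le\sqrt{8\log(K)/n}$ for all $i\neq j$. The corresponding $a_1,\dots,a_K$ are the desired action set.

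No step is a real obstacle. The only point to check is the Hoeffding constant: it must be at least $4$ in the exponent after substituting $\varepsilon$, so that the union bound over $K^2/2$ pairs still leaves total failure probability below $1$. The constant $8$ in the statement gives exponent exactly $4$, which is enough, with slack.
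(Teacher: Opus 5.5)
Your proof is correct. It takes essentially the same route as the cited Lemma~3.1 of Lattimore et al., which the paper imports without proof: random sign vectors scaled by $1/\sqrt{n}$, Hoeffding's tail bound $2\exp(-n\varepsilon^2/2)$, and a union bound over the $\binom{K}{2}$ pairs, leaving failure probability at most $K^{-2}<1$. Your separate treatment of the case $8\log(K)/n\ge 1$ is harmless but unnecessary, since the union bound already works for every $K\ge 2$.
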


To provide a proof sketch for \Cref{thm:lowerBoundGeneral}, let $q = \sqrt{\frac{8\log(n\overline{d})}{n}}$. We utilize the hard action set from \Cref{lem:hard_action_set_main} with $K = \lceil 2\overline{d}/q \rceil$ and choose $\theta^* = -a_{i^*}$ uniformly at random. Under this construction, arm $i^*$ has a loss of $-1$, while every other arm $i \neq i^*$ satisfies $\abs{\inprod{a_i, \theta^*}} \le q.$

Consequently, the mean losses of all suboptimal arms fall within a $q$-window of one another. The delay distribution leverages this $q$-level proximity to obscure the information that distinguishes the suboptimal arms. Specifically, the $q$-quantile of each suboptimal arm is delayed by $d/q$, while any feedback received immediately is distributed identically across all suboptimal arms. Therefore, for any $t < \overline{d}/q$, pulling any arm other than $a_{i^*}$ reveals no information about which arm is optimal; all suboptimal arms remain indistinguishable to the learner.

Since $i^*$ is chosen uniformly from $K = 2\overline{d}/q$ arms, any algorithm will sample the optimal arm during the first $\overline{d}/q$ rounds with probability at most $1/2$. Thus, with constant probability, the learner receives no informative feedback before time $\overline{d}/q$ and incurs a regret of $\Omega(\overline{d}/q) = \Omega\left(\frac{\sqrt{n}\overline{d}}{\log(n\overline{d})}\right)$.

\section{Delay as Payoff}\label{sec:payoff}

We now consider the \emph{delay-as-payoff} setting, a special case of delay-dependent delay where the delay is proportional to the loss, i.e., $d_t = D \cdot \ell_t$. This model introduces a strong coupling between losses and delays, and has been studied in the multi-armed bandit setting, where it is known that the regret admits an additive term of order $d^*$, where $d^*$ is the expected delay of the optimal arm. A follow-up work by~\citep{zhang2025contextual} provides an upper bound of order $D$ in the linear bandits case.

A natural question is whether this $D$ dependence is unavoidable in linear bandits or it can be improved to $d^*$ similar to the MAB case. In the following theorem, we show that in contrast to MAB, an additive dependence of order $d^*$ is not achievable in linear bandits. In fact, even when $d^* = 0$, an additive $\Omega(D)$ term is unavoidable, showing that the delay dependency obtained in~\citep{zhang2025contextual} is optimal.

\begin{restatable}{theorem}{lowerPayoff}\label{thm:lowerPayoff}
    For any $n\ge 24,K \le e^{n/100}, D \le 10K, T \ge D$, there exists a distribution over delay-as-payoff linear bandit instances with no noise and $d^* = 0$ for which any algorithm suffers regret $\Omega(D)$ with constant probability.
\end{restatable}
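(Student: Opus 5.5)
The plan is to embed the nearly orthogonal hard action set of \Cref{lem:hard_action_set_main} into a construction where the optimal arm has loss exactly $0$ and every other arm has loss bounded away from $0$. Delay-as-payoff then forces all informative feedback from suboptimal arms to arrive only after $\Omega(D)$ rounds. Until then, the only way to learn anything is to play the optimal arm itself, which is hidden uniformly among $K$ arms.

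\textbf{Construction.} Apply \Cref{lem:hard_action_set_main} in dimension $n-1$ to get unit vectors $a_1,\dots,a_K\in\mathbb{R}^{n-1}$ with $|\inprod{a_i,a_j}|\le q\triangleq\sqrt{8\log(K)/(n-1)}$ for $i\neq j$. Since $K\le e^{n/100}$ and $n\ge 24$, we get $q\le\sqrt{8n/(100(n-1))}\le 0.3$. Define the actions $b_i=(1,a_i)\in\mathbb{R}^n$. Draw $i^\star$ uniformly from $[K]$ and set $\thetastar=\tfrac12(1,-a_{i^\star})$, with no noise. Then
\[
\ell(b_i)=\tfrac12\bigl(1-\inprod{a_i,a_{i^\star}}\bigr),
\]
so $\ell(b_{i^\star})=0$, giving $d^\star=0$. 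For $i\neq i^\star$ we have $\ell(b_i)\in[\tfrac{1-q}{2},\tfrac{1+q}{2}]\subset[\tfrac13,1]$. All losses therefore lie in $[0,1]$, as the model requires, and every suboptimal pull has gap at least $1/3$ and delay $D\,\ell\ge D/3$.

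\textbf{Information argument.} Let $t_0=\min\{\lfloor D/3\rfloor,\lfloor K/2\rfloor\}$. Since $D\le 10K$, we have $t_0=\Omega(D)$, and $t_0\le T$ because $T\ge D$. Consider the event $E$ that the learner never plays $b_{i^\star}$ during rounds $1,\dots,t_0$. On $E$, every pull in those rounds is suboptimal, and its feedback arrives no earlier than round $D/3\ge t_0$. Hence the learner receives no feedback at all before the end of round $t_0$.

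To bound $\bbP(E^c)$, couple the learner with a "blind" run that receives no feedback. On $E$ the two runs coincide, so the first hit of $i^\star$ within $t_0$ rounds happens in the real run if and only if it happens in the blind run. The blind run's action sequence is independent of $i^\star$ and contains at most $t_0\le K/2$ distinct arms. Therefore $\bbP(E^c)\le t_0/K\le 1/2$.

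\textbf{Conclusion and main obstacle.} On $E$, the regret is at least $t_0\cdot\tfrac13=\Omega(D)$, and $E$ has probability at least $1/2$ over the random instance and the algorithm's internal randomness; this gives the claim. The main obstacle is the coupling step. It must be argued carefully that when the first hit on $i^\star$ occurs, the learner has received nothing that depends on $i^\star$, including the zero-delay feedback from the optimal arm, which arrives only after that arm is pulled. This is what makes the learner's choices up to the first hit a function of its internal randomness alone.

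A secondary check is the parameter bookkeeping that keeps $q$ small enough that suboptimal delays stay $\ge D/3$; this is where $K\le e^{n/100}$ and $n\ge 24$ are used. The other is the truncation $t_0\le K/2$, where $D\le 10K$ guarantees that the truncation costs only a constant factor.
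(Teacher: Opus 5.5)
Your proof is correct. The second half matches the paper's argument, which you spell out more carefully: until $b_{i^\star}$ is pulled no feedback can arrive by round $t_0$, so the learner's first $t_0$ actions coincide with those of a feedback-free run independent of $i^\star$, and they miss $i^\star$ with probability at least $1/2$. You also include the truncation $t_0\le\lfloor K/2\rfloor$. Your version needs it because the threshold $D/3$ may exceed $K/2$, whereas the paper's threshold $D/20$ is automatically at most $K/2$.

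Where you differ is the hard instance. The paper builds it from scratch by the probabilistic method. It takes actions $\frac{1}{\sqrt n}\mathbf{1}_{S_i}$ for half-size sets with $|S_i\setminus S_j|\ge n/20$, obtained via a hypergeometric Chernoff bound and a union bound over pairs; this is where $K\le e^{n/100}$ and $n\ge 24$ enter. It sets $\thetastar=\frac{1}{\sqrt n}\mathbf{1}_{\overline{S_{i^*}}}$. Nonnegativity of all entries then puts the losses in $[0,\frac12]$ automatically, with optimal loss $0$ and all others at least $1/20$.

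You instead reuse the nearly orthogonal set of \Cref{lem:hard_action_set_main} in dimension $n-1$ and append a constant bias coordinate. In your version, $K\le e^{n/100}$ is used only to force coherence $q\le 0.3$.

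The two instances are really the same idea. Writing $\mathbf{1}_S=\frac12\mathbf{1}+\frac12 s$ with $s$ a balanced sign vector gives $\inprod{a(S_i),\thetastar}=\frac14\bigl(1-\inprod{s_i,s_{i^*}}/n\bigr)$. So the paper's instance is also a bias direction plus a code, and it only needs the weak one-sided bound $\inprod{s_i,s_{i^*}}/n\le 4/5$.

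What each route buys:
\begin{itemize}
\item Yours is shorter and reuses a lemma the paper already needs for \Cref{thm:lowerBoundGeneral}.
\item Yours gives better constants: gap and delay threshold $1/3$ and $D/3$ rather than $1/20$ and $D/20$.
\item Yours needs no parity assumption on $n$.
\item The paper's is fully self-contained.
\end{itemize}

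Your actions have norm $\sqrt2$. This is harmless, since the model only constrains the losses, and rescaling $b_i$ by $1/\sqrt2$ and $\thetastar$ by $\sqrt2$ leaves every loss unchanged anyway.
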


\begin{proofsketch}
The construction ensures that one action has zero delay, while all other actions incur delay at least $\Omega(D)$. Thus, the only way to obtain feedback early in the game is to identify and play this special action.

We use the probabilistic method to construct a large set of actions $\{a_1,\dots,a_K\} \subset \mathbb{R}^n$

and $\thetastar$ so that for a uniformly random index $i^*$,$\inprod{a_{i^*}, \thetastar} = 0$, \text{while} $\inprod{a_i, \thetastar} \ge c$ \text{for} all $i \neq i^*$ for some constant $c > 0$. This implies that the optimal action $a_{i^*}$ has delay $d^* = 0$, whereas all other actions incur delay at least $\Omega(D)$.

As a result, unless the algorithm plays $a_{i^*}$ early on, it receives no feedback for $\Omega(D)$ rounds. Since $i^*$ is chosen uniformly at random and the number of actions is large, with constant probability the optimal action is not played in the first $\Theta(D)$ rounds, leading to regret $\Omega(D)$.
\end{proofsketch}

We remark that the gap between MAB and linear bandits in the delay-as-payoff setting stems from a fundamental difference in how information is acquired.

In MAB, the $d^*$ bound arises because after roughly $d^*$ rounds, the learner receives feedback from the optimal arm. Moreover, arms that have accumulated a large amount of missing feedback can safely be inferred to be suboptimal, since each arm is explored independently.

In contrast, linear bandits do not allow for such independent exploration. Learning about a single action requires sampling other actions, as information is shared through the underlying parameter $\thetastar$. As a result, a large amount of missing feedback associated with a particular action does not imply that the action is suboptimal. It may simply reflect that the actions used to explore it yield large delays.

This coupling between actions prevents the learner from ruling out suboptimal directions based on missing feedback alone, and leads to the unavoidable $\Omega(D)$ regret even when $d^* = 0$.
\section{Discussion}

We studied delayed feedback in linear bandits under several models of delay, identifying a clear separation between settings in which linear bandits behave similarly to MAB and those in which they fundamentally differ. In particular, extending from arm-independent to arm-dependent delays preserves the qualitative dependence on delay and allows MAB-style guarantees without dimension dependence, whereas loss-dependent delays lead to a fundamentally harder problem in which dimension becomes unavoidable. An important direction for future work is to understand adversarial loss-dependent delays, a setting that remains unexplored even in MAB. More broadly, our results highlight that the impact of delay depends critically on how it interacts with the feedback structure, and it would be interesting to investigate whether a similar dichotomy arises in richer settings such as contextual bandits or MDPs, which have a different structure of information.
\section*{Acknowledgements}
OS and YM are supported by the European Research Council (ERC) under the European Union’s Horizon 2020 research and innovation program (grant agreement No. 882396), by the Israel Science Foundation and the Yandex Initiative for Machine Learning at Tel Aviv University and by a grant from the Tel Aviv University Center for AI and Data Science (TAD). OS is also supported by the TAD Excellence Program for Doctoral Students in Artificial Intelligence and Data Science from the Tel Aviv University Center for AI and Data Science (TAD) and from the Israeli Council for Higher Education (CHE) Fellowship for Outstanding PhD Students in Data Science.

\newpage
\bibliography{refs}
\bibliographystyle{plainnat}

\newpage
\appendix

\section{General Phased Elimination Lemmas}

We slightly abuse notation and write $(\cdot)_\phase$ to indicate this quantity at the end of phase $\phase$, e.g $\sigma_\phase$ is the number of outstanding observation at the end of phase $\phase$ and $\sigma_\phase(a)$ is the number of outstanding observation for action $a$ at the end of phase $\phase$.

This section is for general lemmas for \Cref{alg:loss-independent-stochastic,alg:loss-independent-adv-apx,alg:loss-dependent}. All those algorithms are based on Phased Elimination with some common notations: $N_\phase$ for the needed samples, $N_\phase(a) \triangleq \ceil{N_\phase\pi_\phase(a)}$, $V_\phase \triangleq \sum_{a\in \A}N_\phase(a)aa^\top$, and $S_\phase$ for the set of observed samples in the end of the phase $\phase$.  

The following lemma shows that similar designs provide similar $g$ values .
\begin{lemma}\label{lem:two_designs}
Let $\pi,\pi'$ be two distributions over $\calA$ such that for every $a\in\calA$:
\begin{align*}
    \pi(a) \le 2\pi'(a)
\end{align*}
Recall that $g(\pi) = \max_{a\in \A}\norm{a}_{V(\pi)^{-1}}$, with $V(\pi) = \sum_{a\in \A}\pi(a)aa^\top$. Thus:
\begin{align*}
    g(\pi') \le 2g(\pi).
\end{align*}
\end{lemma}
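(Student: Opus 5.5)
The plan is to compare the two design matrices in the Loewner order and then invert. Since each $aa^\top$ is positive semidefinite and $\pi'(a) \ge \tfrac12 \pi(a)$ for every $a\in\calA$, summing termwise gives
\[
V(\pi') = \sum_{a\in\calA}\pi'(a)aa^\top \succeq \tfrac12\sum_{a\in\calA}\pi(a)aa^\top = \tfrac12 V(\pi).
\]
Formally, $V(\pi') - \tfrac12 V(\pi) = \sum_a (\pi'(a)-\tfrac12\pi(a))aa^\top$ is a nonnegative combination of PSD matrices, hence PSD.

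Next I handle invertibility. If $V(\pi)$ is singular, then $g(\pi)=\infty$ under the usual convention, and there is nothing to prove. Otherwise $V(\pi)\succ 0$, so $V(\pi')\succeq \tfrac12 V(\pi)\succ 0$ and $V(\pi')$ is invertible. I then use the standard fact that matrix inversion is order-reversing on positive definite matrices: $A\succeq B\succ 0$ implies $B^{-1}\succeq A^{-1}$. This can be seen by writing $B^{-1/2}AB^{-1/2}\succeq I$ and inverting eigenvalues. Applying it gives
\[
V(\pi')^{-1}\preceq 2V(\pi)^{-1}.
\]

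Finally, for any $a\in\calA$ this yields $\norm{a}_{V(\pi')^{-1}}^2 = a^\top V(\pi')^{-1}a \le 2a^\top V(\pi)^{-1}a = 2\norm{a}_{V(\pi)^{-1}}^2$. Taking square roots gives $\norm{a}_{V(\pi')^{-1}}\le \sqrt2\,\norm{a}_{V(\pi)^{-1}}$. Maximizing over $a$ gives $g(\pi')\le\sqrt2\, g(\pi)\le 2g(\pi)$.

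If $g$ is instead interpreted with squared norms, as in the convention $g(\pi^\star)=n$, the same argument gives exactly the factor $2$. So the claim holds under either reading.

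The only step that is not pure bookkeeping is the order-reversal of the inverse. It is standard, and I would cite it rather than reprove it.
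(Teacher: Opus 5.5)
Your proof is correct and follows the same route as the paper: show $2V(\pi')\succeq V(\pi)$ in the Loewner order, invert using the order-reversing property, and compare $\norm{a}_{V(\pi')^{-1}}$ with $\norm{a}_{V(\pi)^{-1}}$ for each $a$ before maximizing. You are slightly more careful than the paper in two ways: you address invertibility, which the paper skips, and you get the factor right, since the paper writes $\norm{a}_{(2V(\pi'))^{-1}} = \frac{1}{2}\norm{a}_{V(\pi')^{-1}}$ where the exact factor for unsquared norms is $1/\sqrt{2}$ (this slip is harmless because it still yields the stated constant $2$).
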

\begin{proof}
\begin{align*}
    2V(\pi') = \sum_a 2\pi'(a)aa^T\succeq \sum_a \pi(a)aa^T = V(\pi)    \implies V(\pi)^{-1} \succeq \roundy{2V(\pi')}^{-1}
\end{align*}

Thus, for every $a\in \A$:
\begin{align*}
    \norm{a}_{V(\pi)^{-1}} &\ge \norm{a}_{(2V(\pi'))^{-1}} = \frac{1}{2}\norm{a}_{V(\pi')^{-1}}\\
    \implies g(\pi') &= \max_a \norm{a}_{V(\pi')^{-1}} \le 2\max_a \norm{a}_{V(\pi)^{-1}} = 2g(\pi)
\end{align*}

\end{proof}

\begin{lemma}\label{lem:design_with_min}
Let $\pi$ be a design with $\abs{\supp(\pi)} = \psi$, then there is a design $\pi'$ with $\min_{a: \pi(a) > 0} \pi'(a) \ge 1/2\psi$ and $g(\pi') \le 2g(\pi)$.
\end{lemma}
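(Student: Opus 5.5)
The plan is to mix $\pi$ with the uniform distribution on its own support and then invoke \Cref{lem:two_designs}. Let $\supp(\pi)$ have size $\psi$, and define
\[
\pi'(a) \triangleq \tfrac{1}{2}\pi(a) + \tfrac{1}{2\psi}\mathds{1}\{a\in\supp(\pi)\}.
\]

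First we check that $\pi'$ is a distribution. Its entries are nonnegative, and summing gives $\tfrac12\cdot 1 + \tfrac{1}{2\psi}\cdot\psi = 1$. Its support equals $\supp(\pi)$, since the uniform part is placed only on $\supp(\pi)$. In particular, the minimum-probability requirement is imposed exactly on the points where $\pi'$ is positive.

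Next we verify the two required properties. For the lower bound, every $a$ with $\pi(a)>0$ satisfies $\pi'(a)\ge \tfrac{1}{2\psi}$, directly from the uniform component. For the domination condition, every $a\in\A$ satisfies $\pi(a)\le 2\pi'(a)$. On $\supp(\pi)$ this holds because $2\pi'(a)=\pi(a)+1/\psi\ge \pi(a)$, and off the support both sides are $0$.

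Finally, \Cref{lem:two_designs} applied to the pair $(\pi,\pi')$ gives $g(\pi')\le 2g(\pi)$, which completes the argument. The only point needing care is invertibility: if $V(\pi)$ is singular then $g(\pi)=\infty$ and the claim is trivial. Otherwise $V(\pi')\succeq \tfrac12 V(\pi)$ is invertible as well, so \Cref{lem:two_designs} applies as stated. There is no real obstacle here; the whole content lies in choosing the mixture weight $1/2$, which simultaneously yields the factor-$2$ domination and the $1/(2\psi)$ floor.
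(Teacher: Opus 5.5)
Your proof is correct and follows the paper's route: build a design on the same support with a $1/(2\psi)$ floor and the pointwise domination $\pi(a)\le 2\pi'(a)$, then invoke \Cref{lem:two_designs}. The only difference is cosmetic. You mix $\pi$ half-and-half with the uniform distribution on its support, whereas the paper raises every mass below $1/\psi$ up to $1/\psi$ and renormalizes, using that the total mass is at most $2$; your mixture keeps the normalization exact and is slightly cleaner.
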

\begin{proof}
Let $f$ be the following mapping:
\begin{align*}
    f(a) = \begin{cases}
        \frac{1}{\psi} & 0 <\pi(a) < \frac{1}{\psi},\\
        \pi(a) & else.
    \end{cases}
\end{align*}
Now denote $\pi'$ to be the normalized version of $f$:
\begin{align*}
    \pi(a) = \frac{f(a)}{\sum_a f(a)}.
\end{align*}

Since the support of $\pi$ is bounded by $\psi$:
\begin{align*}
    \sum_a f(a) \le \sum_a \pi(a) + \psi\frac{1}{\psi} = 2.
\end{align*}

Since $\min_{a: f(a) > 0} f(a) \ge \psi$, we have the same for $1/2\psi$ for $\pi'$. Additionally, we have that for every $a$, $\pi(a) \le 2\pi'(a)$, and thus \Cref{lem:two_designs} concludes the proof.
\end{proof}

\begin{lemma}\label{lem:design}
There is a design $\pi$ with $\min_{a:\pi(a) > 0}\pi(a) \ge \frac{1}{n\log\log(n)}$ and $g(\pi) \le 4n$.
\end{lemma}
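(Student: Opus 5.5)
The plan is to combine an approximate G-optimal design with small support, obtained by a Frank--Wolfe-type construction, with the balancing step of \Cref{lem:design_with_min}.

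\textbf{Step 1: a small-support near-optimal design.} I would invoke the known result, cited in the preliminaries \citep{frank_wolfe,fedrov_theory_optimal_design}; see also \citet[Chapter 21]{lattimore2020bandit}. It provides a design $\pi_0$ over $\A$ (or over the current active set) with $g(\pi_0)\le 2n$ and $\abs{\supp(\pi_0)}=\psi$, where $\psi\le c\, n\log\log n$ for an absolute constant $c$. The relevant statement in Lattimore--Szepesv\'ari is Note~3 of Chapter 21, which gives support $O(n\log\log n)$ with $g(\pi)\le 2n$ via Frank--Wolfe started from a Kumar--Yildirim-style initialization.

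\textbf{Step 2: balancing.} I would apply \Cref{lem:design_with_min} to $\pi_0$. This yields $\pi'$ with the same support, $\min_{a:\pi_0(a)>0}\pi'(a)\ge 1/(2\psi)$, and $g(\pi')\le 2g(\pi_0)\le 4n$. Two facts from the construction in that proof are needed here. First, the support of $\pi'$ equals the support of $\pi_0$, so the minimum over the support of $\pi'$ coincides with the minimum over the support of $\pi_0$. Second, the bound $\pi_0(a)\le 2\pi'(a)$ is what gives the factor $2$ in $g$.

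\textbf{Step 3: the constant.} Step 2 gives a lower bound of $1/(2c\,n\log\log n)$. The statement asks for $1/(n\log\log n)$, so the absolute constant has to be absorbed. This is the only delicate point. The resolution I would try first is to read the bound, as the main text does in \Cref{sec:lossIndStoc}, as holding up to absolute constants: the minimum probability is $\Omega(1/(n\log\log n))$. This form is all that \Cref{lem:stochastic_delay_phase} uses.

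If the exact constant is wanted, I would instead modify \Cref{lem:design_with_min}. The idea is to raise every support mass below $1/\psi$ to a larger threshold $\tau$ and renormalize by at most $1+\psi\tau$. The cost is that the factor $2$ in $g$ becomes $1+\psi\tau$. One then checks that, with the support bound of Step 1, the choices can be arranged so that $g\le 4n$ and the minimum mass is at least $1/(n\log\log n)$. If this accounting does not close, I would state the lemma with an explicit absolute constant in front of $1/(n\log\log n)$.
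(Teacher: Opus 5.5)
Your plan matches the paper's proof: it takes the Frank--Wolfe design of Note 21.2.3 in \citet{lattimore2020bandit} (support of order $n\log\log n$, $g\le 2n$) and applies \Cref{lem:design_with_min} to get $g\le 4n$. Your Step~3 concern is real and the paper glosses over it: by treating the support as exactly $n\log\log n$, its argument actually yields $1/(2n\log\log n)$, which is the form used later in \Cref{lem:E'}; and since any design has minimum mass at most $1/\abs{\supp(\pi)}$ while the support bound holds only up to an absolute constant, your exact-constant fallback cannot close in general, so the right reading is the one you give first, namely $\Omega(1/(n\log\log n))$.
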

\begin{proof}
We use the design $\pi'$ from Note 21.2.3 in \cite{lattimore2020bandit}, which has a support of $n\log\log(n)$ and $g(\pi') \le 2n$. \Cref{lem:design_with_min} concludes the construction.
\end{proof}

The following is general good event that will be used in the analysis of \Cref{alg:loss-independent-stochastic,alg:loss-independent-adv-apx,alg:loss-dependent}.
\begin{definition}\label{def:E}
Let $\calE$ be the event that, for every phase $\phase$ and action $a$, at the end of phase $\phase$:
\begin{align}\label{eq:E_eq}
    \abs{\sum_{t\in S_\phase}a^TV^{-1}_\phase a_t\eta_t} \le \sqrt{4\norm{a}_{V_\phase^{-1}}^2\log\roundy{TK}}.
\end{align}
\end{definition}

Next, we prove that this event happens with high probability.
\begin{lemma}
\begin{align*}
    \Pr[\calE] \ge 1 - \frac{1}{T}.
\end{align*}
\end{lemma}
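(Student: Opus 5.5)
The plan is to fix a phase $\phase$ and an action $a$, show that the noise sum in \Cref{eq:E_eq} is sub-Gaussian with variance proxy $\norm{a}_{V_\phase^{-1}}^2$, apply a Gaussian tail bound, and take a union bound over all pairs $(\phase,a)$.

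\textbf{Structure of the weights.} The key structural fact is that in every algorithm the estimation samples of a phase are fixed in advance. Arm $b$ contributes exactly $N_\phase(b)$ designated samples, so $V_\phase=\sum_b N_\phase(b)bb^\top$ is determined by the design. The design is determined by $\A_\phase$, which depends only on data from earlier phases. Conditional on the history up to the start of phase $\phase$, the weights $a^\top V_\phase^{-1}a_t$ are therefore deterministic. For the loss-independent algorithms, the noises $\eta_t$ of the designated samples are independent, zero-mean and 1-sub-Gaussian given the actions. This also holds for the set of samples actually used, because delays are independent of losses, so which samples arrive reveals nothing about their noise. For \Cref{alg:loss-dependent}, $S_\phase$ is interpreted as the full designated sample set, which is how $\widehat\theta_\phase$ in \Cref{def:estimator_loss_dep} is defined.

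\textbf{Tail bound for fixed $(\phase,a)$.} The sum is sub-Gaussian with variance proxy
\begin{align*}
\sum_{t\in S_\phase}(a^\top V_\phase^{-1}a_t)^2 = a^\top V_\phase^{-1}\Big(\sum_{t\in S_\phase}a_ta_t^\top\Big)V_\phase^{-1}a = \norm{a}_{V_\phase^{-1}}^2,
\end{align*}
since $\sum_{t\in S_\phase}a_ta_t^\top=V_\phase$. The two-sided sub-Gaussian tail then gives
\begin{align*}
\Pr\Big[\Big|\sum_{t\in S_\phase}a^\top V_\phase^{-1}a_t\eta_t\Big|>\sqrt{2\norm{a}^2_{V_\phase^{-1}}\log(2/\delta)}\Big]\le\delta .
\end{align*}

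\textbf{Union bound.} Each phase lasts at least one round, so there are at most $T$ phases, and there are $K$ actions. Setting $\delta=1/(K T^2)$ gives $2\log(2KT^2)\le 4\log(KT)$ up to the constant in the statement; if needed this step absorbs a small constant, for example by using that the number of phases is at most $\log T$. A union bound over at most $KT$ pairs then yields total failure probability at most $1/T$.

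\textbf{Main obstacle.} The delicate step is justifying conditional independence. $S_\phase$ and the phase boundaries are random, and in the loss-dependent case arrival correlates with the noise. I would handle this by defining $S_\phase$ as the designated sample set, indexed by pull count rather than arrival, so that it is independent of the noises. The union bound over phase indices should also be taken via a martingale or predictable-weights argument, using a standard sub-Gaussian martingale bound applied to each fixed phase index.
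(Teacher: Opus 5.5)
Your proposal is correct and follows essentially the same route as the paper: fix $(\phase,a)$, apply a sub-Gaussian (Hoeffding) tail bound with failure probability $1/(KT^2)$, use $\sum_{t\in S_\phase}a_ta_t^\top=V_\phase$ to identify the variance proxy as $\norm{a}_{V_\phase^{-1}}^2$, and union bound over at most $KT$ pairs. The constant works out exactly whenever $K\ge 2$, so no absorption is needed. Your extra care, namely conditioning on the start of the phase and taking $S_\phase$ to be the designated set $\tilde S_\phase$ of \Cref{def:estimator_loss_dep} for \Cref{alg:loss-dependent}, makes explicit points the paper only gestures at, and it is what the later use of $\calE$ in that section actually requires.
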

\begin{proof}
Fix phase $\phase$ and arm $a$. Notice that the sum in LHS of \Cref{eq:E_eq} is a sum of 1-subgaussian r.v. According to Hoeffding' inequality, with probability at least $1 - \frac{1}{T^2K}$, we have
\begin{align}\label{eq:hoeffding}
    \abs{\sum_{t\in S_\phase}a^TV^{-1}_\phase a_t\eta_t} \le \sqrt{4\sum_{t\in S_\phase}\roundy{a^TV^{-1}_\phase a_t}^2\log\roundy{TK}}.
\end{align}
We note that the design of phase $m$ and the number of times each $a\in \supp(\pi_\phase)$ is present in the above summation is in the filtration of the beginning of phase $\phase$.

Taking a union bound over all $a\in\A_\phase$ and $m\in[T]$, \Cref{eq:hoeffding} is true with probability $1-\frac{1}{T}$ for all $a\in\A_\phase$ and $m\in[T]$.

To bound the inner part we need:
\begin{align}\label{eq:helper}
\begin{aligned}    
    \sum_{t\in S_\phase}\roundy{a^TV^{-1}_\phase a_t}^2 &= \sum_{t\in S_\phase}a^TV^{-1}_\phase a_ta_t^TV_\phase^{-1}a\\
    &= a^TV^{-1}_\phase \roundy{\sum_{t\in S_\phase}a_ta_t^T}V_\phase^{-1}a\\
    &= a^TV^{-1}_\phase \roundy{N_\phase(a)a_ta_t^T}V_\phase^{-1}a\\
    &= a^TV_{\phase}^{-1}a\\
    &= \norm{a}_{V_\phase^{-1}}^2,
\end{aligned}
\end{align} 
where we used the fact that in the end of the phase, $\abs{S_\phase(a)} = N_\phase(a)$. 

Incorporating \Cref{eq:helper} in \Cref{eq:hoeffding} we get:
\begin{align*}
    \abs{\sum_{t\in S_\phase}a^TV^{-1}_\phase a_t\eta_t} \le \sqrt{4\norm{a}_{V_\phase^{-1}}^2\log\roundy{TK}}.
\end{align*}
\end{proof}

The following lemma shows that when we have enough samples, in the direction of the design, we can safely bound the $g$ function.
\begin{lemma}\label{lem:norm_a_bound}
Assume that there is some $N$ such that $N_\phase(b) \ge N\cdot \pi_\phase(b)$ for all $b\in \supp(\pi_\phase)$. Then, for every $a\in \A_\phase\subseteq\mathbb{R}^n$:
\begin{align*}
    \norm{a}_{V^{-1}_\phase}^2 \le\frac{4n}{N}.
\end{align*}
\end{lemma}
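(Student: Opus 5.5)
The plan is a Loewner-order comparison. We compare $V_\phase$ with $N\cdot V(\pi_\phase)$ and then use the design guarantee on $g(\pi_\phase)$.

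First, since $N_\phase(b)\ge N\pi_\phase(b)$ for every $b\in\supp(\pi_\phase)$, and every term $bb^\top$ is positive semidefinite, we get
\[
V_\phase=\sum_{b}N_\phase(b)bb^\top\succeq \sum_b N\pi_\phase(b)bb^\top = N\,V(\pi_\phase).
\]
Arms outside the support contribute zero to the right-hand side and something nonnegative to the left. We assume $V(\pi_\phase)$ is invertible, i.e.\ the design spans the space, as is implicit in $g(\pi_\phase)<\infty$. Operator monotonicity of inversion then gives $V_\phase^{-1}\preceq \frac{1}{N}V(\pi_\phase)^{-1}$. This is the same step used in the proof of \Cref{lem:two_designs}.

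Next, for any $a\in\A_\phase$ we have
\[
\norm{a}_{V_\phase^{-1}}^2\le \frac1N \norm{a}_{V(\pi_\phase)^{-1}}^2\le \frac{g(\pi_\phase)}{N},
\]
because $\pi_\phase$ is a design over $\A_\phase$ and $a\in\A_\phase$. Finally, we invoke \Cref{lem:design}, which says the algorithms compute $\pi_\phase$ so that $g(\pi_\phase)\le 4n$. This yields the claimed bound $4n/N$.

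The only delicate point is reading $g$ consistently. The paper writes $g(\pi)=\max_a\norm{a}_{V(\pi)^{-1}}$ but states the Kiefer--Wolfowitz value as $g=n$, so $g$ must really mean the squared norm. With that reading, the constant $4n$ from \Cref{lem:design} plugs in directly. If instead one takes the unsquared convention literally, the bound would read $16n^2/N$. I would therefore state explicitly that $g$ denotes $\max_a a^\top V(\pi)^{-1}a$, consistent with the General Equivalence Theorem. Note that \Cref{lem:two_designs} holds for either convention, and in the squared convention it even improves by a factor of $2$.

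Apart from this bookkeeping, the argument is routine; there is no genuine obstacle.
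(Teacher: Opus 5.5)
Your proof is correct and follows the paper's own argument: the Loewner comparison $V_\phase \succeq N\,V(\pi_\phase)$, inversion, and the bound $g(\pi_\phase)\le 4n$ from \Cref{lem:design}, with your explicit use of squared norms being exactly what the paper's final line (which drops the squares) needs to be correct. A minor clarification of your aside: under the squared convention the constant $2$ in \Cref{lem:two_designs} is exactly what the argument gives and is tight, while under the unsquared reading it improves to $\sqrt{2}$; none of this affects your proof.
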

\begin{proof}
\begin{align*}
    V_\phase &= \sum_a N_\phase(a)aa^T \succeq N\sum_a \pi_\phase(a)aa^T = NV(\pi) \\
    \implies V^{-1}_\phase &\preceq \frac{1}{N}V(\pi_\phase)^{-1}\\
    \implies \norm{a}_{V_\phase^{-1}} &\le \frac{1}{N}\norm{a}_{V(\pi_\phase)^{-1}} \le \frac{4n}{N}.
\end{align*}
\end{proof}

This is a core lemma in the analysis of \Cref{alg:loss-independent-stochastic,alg:loss-independent-adv-apx,alg:loss-dependent}. We show that if we have enough samples in the direction of the design, we can safely bound the estimation error. In \Cref{alg:loss-independent-stochastic,alg:loss-independent-adv-apx} the algorithm actually have the needed number of samples and thus $\hat{\theta}_\phase$ is known to the algorithm. In \Cref{alg:loss-dependent} doesn't have enough samples, so this quantity is unknown to the algorithm.
\begin{lemma}\label{lem:estimator_accuracy}
Assume $\calE$ and phase $\phase$, and let 
\begin{align*}
    \hat{\theta}_\phase \triangleq V^{-1}_\phase \sum_{a\in \supp(\pi_\phase)}a \sum_{t\in S_\phase(a)}\ell_t,
\end{align*}
where $S_\phase(a)$ is a set of time steps with i.i.d losses such that $\abs{S_\phase(a)} \ge \frac{16n\log(TK)\pi_\phase(a)}{\epsilon_\phase^2}$.

Then for every action $a\in \A_\phase$, at the end of phase $\phase$, we have
\begin{align*}
    \abs{\inprod{\widehat{\theta}_\phase - \thetastar,\, a}} \le \epsilon_\phase.
\end{align*}
\end{lemma}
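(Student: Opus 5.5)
The plan is the standard least-squares decomposition. Substitute $\ell_t = a_t^\top\thetastar + \eta_t$ into the definition of $\widehat{\theta}_\phase$. I will take $V_\phase$ to be the matrix built from the samples actually used, i.e.\ with $N_\phase(b)$ replaced by $\abs{S_\phase(b)}$; this is consistent with the notation, since at the end of the phase $\abs{S_\phase(b)} = N_\phase(b)$. Then
\begin{align*}
\widehat{\theta}_\phase = V_\phase^{-1}\sum_{b\in\supp(\pi_\phase)}\sum_{t\in S_\phase(b)} b\,b^\top\thetastar + V_\phase^{-1}\sum_{b}\sum_{t\in S_\phase(b)} b\,\eta_t = \thetastar + V_\phase^{-1}\sum_{t\in S_\phase} a_t\eta_t .
\end{align*}
Hence for any $a\in\A_\phase$ the error is
\[
\inprod{\widehat{\theta}_\phase-\thetastar,\,a} = \sum_{t\in S_\phase} a^\top V_\phase^{-1}a_t\eta_t ,
\]
which is exactly the quantity controlled on the event $\calE$ (\Cref{def:E}). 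Under $\calE$ this gives
\[
\abs{\inprod{\widehat{\theta}_\phase-\thetastar,\,a}} \le \sqrt{4\norm{a}_{V_\phase^{-1}}^2\log(TK)}.
\]

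Next I will bound $\norm{a}_{V_\phase^{-1}}^2$ using \Cref{lem:norm_a_bound}. Set $N = 16 n\log(TK)/\epsilon_\phase^2$. The hypothesis $\abs{S_\phase(b)}\ge N\pi_\phase(b)$ for every $b$ in the support is exactly what that lemma requires. Its conclusion, with the design from \Cref{lem:design} satisfying $g(\pi_\phase)\le 4n$, is
\[
\norm{a}_{V_\phase^{-1}}^2 \le \frac{4n}{N} = \frac{\epsilon_\phase^2}{4\log(TK)}.
\]
Plugging this in yields
\[
\abs{\inprod{\widehat{\theta}_\phase-\thetastar,a}} \le \sqrt{4\log(TK)\cdot\frac{\epsilon_\phase^2}{4\log(TK)}} = \epsilon_\phase,
\]
as claimed.

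The main obstacle is bookkeeping rather than mathematics. The statement of \Cref{lem:norm_a_bound} is slightly loose: it writes the norm and $g$ without squares. The correct chain is $V_\phase \succeq N\,V(\pi_\phase)$, so $\norm{a}^2_{V_\phase^{-1}} \le \frac1N\norm{a}^2_{V(\pi_\phase)^{-1}} \le \frac{g}{N}$. Here $g$ must be read as a bound on the squared norm, consistent with the Kiefer--Wolfowitz value $n$. I will state this squared version explicitly.

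I also need to check that $\calE$ applies to the particular set $S_\phase$ used here. That requires $\abs{S_\phase(b)}$ to be fixed in advance, i.e.\ measurable at the start of the phase, and the noises in $S_\phase(b)$ to be i.i.d.\ zero-mean sub-Gaussian. This is what the hypothesis ``time steps with i.i.d.\ losses'' provides, so the Hoeffding step behind $\calE$ goes through. In the loss-dependent case this is precisely why the lemma is applied to the full designated sample rather than the observed subset.
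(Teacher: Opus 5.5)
Your proposal is correct and follows the paper's proof step for step. You rewrite the error as the noise term $\sum_{t\in S_\phase} a^\top V_\phase^{-1}a_t\eta_t$, bound it on $\calE$, and then apply \Cref{lem:norm_a_bound} with $N = 16n\log(TK)/\epsilon_\phase^2$ to get exactly $\epsilon_\phase$. Your extra bookkeeping only makes explicit what the paper's looser computation implicitly assumes: reading $g$ and \Cref{lem:norm_a_bound} as bounds on the squared norm, and matching $V_\phase$ to the sample counts actually used.
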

\begin{proof}
We short $V \coloneqq V_\phase$, $\widehat{\theta} \coloneqq \widehat{\theta}_\phase$. Then direct calculation shows that
\begin{align*}
    \inprod{\widehat{\theta} - \thetastar,\, a} &= \sum_{t\in S_\phase(a)}a^T\roundy{V^{-1}a_t\ell_t - \thetastar}\\
    &= \sum_{t\in S_\phase(a)}a^T\roundy{V^{-1}a_t\roundy{a_t^T\theta^\star+ \eta_t} - \thetastar}\\
    &= \sum_{t\in S_\phase(a)}a^T\roundy{V^{-1}a_ta_t^T\theta^\star- \theta^\star+ V^{-1}\eta_t}\\
    &= a^T\roundy{V^{-1}\roundy{\sum_{t\in S_\phase(a)}a_ta_t^T}\theta^\star- \theta^\star+ \sum_{t\in S_\phase(a)}V^{-1}a_t\eta_t}\\
    &= \sum_{t\in S_\phase(a)}a^TV^{-1}a_t\eta_t.\\
\end{align*}
From $\calE$ and \Cref{lem:norm_a_bound}, we know that
\begin{align*}
    \abs{\inprod{\widehat{\theta} - \thetastar,\, a}} &\le \sqrt{4\norm{a}_{V^{-1}}^2\log\roundy{TK}}\\
    &\le \sqrt{\frac{16n\log\roundy{TK}\epsilon_\phase^2}{16n\log\roundy{KT}}}\\
    &= \epsilon_\phase.
\end{align*}
\end{proof}

The following lemma shows that $N_\phase$ is big enough so that the fact that $N_\phase(a)$ can be $N_\phase\pi_\phase(a) + 1$ does affect asymptotically.
\begin{lemma}\label{lem:N_m}
For every phase $\phase$, assume $N_\phase > n\log(K)$, then we have
\begin{align*}
    \sum_a N_\phase(a) \le 2N_\phase.
\end{align*}
\end{lemma}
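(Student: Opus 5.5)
The plan is to bound the rounding overhead directly. Since $N_\phase(a) = \ceil{N_\phase \pi_\phase(a)} \le N_\phase\pi_\phase(a) + 1$ for every $a$ in the support of $\pi_\phase$, and $N_\phase(a)=0$ off the support, summing gives
\begin{align*}
    \sum_a N_\phase(a) \le N_\phase \sum_a \pi_\phase(a) + \abs{\supp(\pi_\phase)} = N_\phase + \abs{\supp(\pi_\phase)}.
\end{align*}
It then suffices to show $\abs{\supp(\pi_\phase)} \le N_\phase$.

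The key step is controlling the support size of the design. The algorithms use the design from \Cref{lem:design}. It is obtained via \Cref{lem:design_with_min} from the Frank--Wolfe design of Note 21.2.3 in \cite{lattimore2020bandit}, and its support has size $O(n\log\log n)$. The reweighting in \Cref{lem:design_with_min} does not enlarge the support, since $f(a)>0$ iff $\pi(a)>0$. Alternatively, the General Equivalence Theorem bounds the support by $n(n+1)/2$. Either way the support size is at most $n\log\log n$ up to constants.

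The main obstacle is reconciling this bound with the hypothesis $N_\phase > n\log(K)$. That hypothesis is stated in terms of $\log K$, not $\log\log n$. I would handle it in two steps:
\begin{itemize}
    \item Note that the support is a subset of $\A_\phase$, so $\abs{\supp(\pi_\phase)} \le \min\{K, c\,n\log\log n\}$.
    \item Combine this with the choice of $N_\phase$ in the algorithms, which is $\tilde{O}(n/\epsilon_\phase^2)$ and carries a $\log(TK)\ge \log\log n$-type factor with a suitable constant. This gives $\abs{\supp(\pi_\phase)} \le N_\phase$.
\end{itemize}
If needed, I would absorb the constant by reading the hypothesis as $N_\phase$ exceeding the support size. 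The intended regime is that the support size is dominated by $n\log(K)$, which holds as soon as $K \ge \log n$; otherwise the support is at most $K \le n\log K$ trivially. With $\abs{\supp(\pi_\phase)} \le N_\phase$ in hand, the display above yields $\sum_a N_\phase(a) \le 2N_\phase$.
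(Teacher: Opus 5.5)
Your proposal is correct and follows the paper's proof: you bound the ceiling overhead by $\abs{\supp(\pi_\phase)}$, use the $n\log\log n$ support bound of the design from \Cref{lem:design} (whose reweighting step indeed preserves the support), and compare this with $N_\phase > n\log K$. Your case split ($K\ge\log n$ gives $n\log\log n\le n\log K$, otherwise $\abs{\supp(\pi_\phase)}\le K\le n\log K$) is slightly more careful than the paper's chain $n\log K > n\log n > n\log\log n$, which tacitly assumes $K>n$; like the paper, though, it relies on taking the support bound as exactly $n\log\log n$, with no hidden constant.
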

\begin{proof}
\begin{align*}
    \sum_a N_\phase(a) &\le \sum_{a\colon \pi_\phase(a) > 0} N_\phase\pi_\phase(a) + 1\\
    &= \supp(\pi_\phase) + \sum_{a} N_\phase\pi_\phase(a)\\
    &\le n\log\log(n) + N_\phase\\
    &< 2N_\phase.
\end{align*}
The last follows from the fact that $N_\phase > n\log(K) > n\log(n) > n\log\log(n)$.
\end{proof}
\section{Omitted Details in \Cref{sec:lossIndStoc}}\label{sec:loss-indep-sto}

\subsection{Algorithm}
In this section, we provide \Cref{alg:loss-independent-stochastic}, the formal version of \Cref{alg:loss-independent} shown in the main text.
\begin{algorithm}[H]
\caption{Phased Elimination for Delayed Linear Bandits}
\label{alg:loss-independent-stochastic}
\begin{algorithmic}[1]
\Require Action set $\A \subset \mathbb{R}^d$
\State Initialize $\A_1 \gets \A$, $\epsilon_1 \gets \tfrac{1}{2}$
\For{$\phase = 1,2,\dots$}
    \State Compute a design $\pi_\phase$ over $\A_\phase$ from \Cref{lem:design}
    \State $N_\phase = \frac{16n\log\roundy{KT}}{\epsilon_\phase^2}$
    \State $N_\phase(a) \gets \ceil{N_\phase\pi_\phase(a)}$
    \State $S_\phase(a) \gets \emptyset$ for all $a \in \A_\phase$ \Comment{observations collected in phase $\phase$ for action $a$}
    
    \While{there exists $a \in \A_\phase$ such that $|S_\phase(a)| < N_\phase(a)$}
        \State Sample $a_t \sim \pi_\phase$
        \State Receive all losses that arrive at round $t$
        \For{each newly arrived loss generated in phase $\phase$ from action $a \in \A_\phase$}
            \State Add this loss to $S_\phase(a)$
        \EndFor
    \EndWhile

    \State Set $V_\phase\triangleq  \sum_{a\in \supp(\pi_\phase)} N_\phase(a)aa^T$
    \State $\widehat{\theta}_\phase \gets V_{\phase}^{-1}\sum_{a\in \supp(\pi_\phase)}a\sum_{t\in S_\phase(a)}\ell_t$
    \State Compute estimator $\widehat{\theta}_\phase$ using data from phase $\phase$

    \State $\displaystyle \A_{\phase+1} \gets 
    \curly{a \in \A_\phase : \inprod{\widehat{\theta}_\phase \,,\, a - b} \le 2\epsilon_\phase \quad \forall b\in \A_\phase}$
    
    \State $\epsilon_{\phase+1} \gets \epsilon_\phase / 2$
\EndFor
\end{algorithmic}
\end{algorithm}

\subsection{Analysis}
We first show that, as long as $\calE$ holds (\Cref{def:E}), the optimal arm isn't eliminated. This is critical when we'll analyze below the elimination time of suboptimal arms.
\begin{lemma}\label{lem:optimal_active}
Assume $\calE$ (\Cref{def:E}), then we have for every $\phase$, $a^*\in\A_\phase$.
\end{lemma}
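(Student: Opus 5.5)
The plan is to argue by induction on the phase index $\phase$. The base case is immediate, since $\A_1=\A$ contains $\astar$. For the inductive step, assume $\astar\in\A_\phase$; we must show that $\astar$ survives the elimination rule in \Cref{line:elimination}. That is, for every $b\in\A_\phase$ we need $\inprod{\widehat{\theta}_\phase,\astar-b}\le 2\epsilon_\phase$.

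The key input is \Cref{lem:estimator_accuracy}. First we check its hypotheses at the end of phase $\phase$.
\begin{itemize}
\item The while loop of \Cref{alg:loss-independent-stochastic} terminates only when $|S_\phase(a)|\ge N_\phase(a)=\ceil{N_\phase\pi_\phase(a)}\ge \frac{16n\log(TK)\pi_\phase(a)}{\epsilon_\phase^2}$ for every supported $a$. This is exactly the sample-size requirement. The estimator is formed from exactly $N_\phase(a)$ samples per arm, matching the definition of $V_\phase$, so any surplus arrivals are simply discarded.
\item The losses in $S_\phase(a)$ are i.i.d. with mean $a^\top\thetastar$. This holds because, under loss-independent delays, which feedback arrives (and in what order) is independent of the noise realizations. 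So conditioning on membership in $S_\phase(a)$ introduces no bias. This is the only point where loss-independence is used, and it is what makes the event $\calE$ applicable to $S_\phase$.
\end{itemize}
Under $\calE$, \Cref{lem:estimator_accuracy} then gives $|\inprod{\widehat{\theta}_\phase-\thetastar,a}|\le\epsilon_\phase$ for all $a\in\A_\phase$.

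With this in hand the inductive step is a one-line chain. For any $b\in\A_\phase$,
\[
\inprod{\widehat{\theta}_\phase,\astar-b}
=\inprod{\widehat{\theta}_\phase-\thetastar,\astar}-\inprod{\widehat{\theta}_\phase-\thetastar,b}+\inprod{\thetastar,\astar-b}
\le \epsilon_\phase+\epsilon_\phase+0 .
\]
The last term is nonpositive by optimality of $\astar$. Hence $\astar\in\A_{\phase+1}$, which closes the induction.

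The main (though mild) obstacle is justifying the application of \Cref{lem:estimator_accuracy} and $\calE$ despite the random, delay-driven composition of $S_\phase(a)$. The subtlety is ensuring that the index set is determined by quantities independent of the $\eta_t$. I would handle this by noting that the arrival times are functions of the delays and the action choices only, both of which are independent of the noise under the loss-independent model.
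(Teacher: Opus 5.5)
Your proof is correct and follows the paper's argument: induction on $\phase$, with the inductive step decomposing $\inprod{\widehat{\theta}_\phase,\astar-b}$ into two estimation-error terms, each bounded by $\epsilon_\phase$ via \Cref{lem:estimator_accuracy}, plus the nonpositive term $\inprod{\thetastar,\astar-b}$. The only differences are that the paper phrases the step as a contradiction while you argue directly, and that you explicitly check the hypotheses of \Cref{lem:estimator_accuracy} (sample counts, truncation to exactly $N_\phase(a)$ samples so they match $V_\phase$, and noise-independence of the arrival set), which the paper leaves implicit.
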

\begin{proof}
We prove by induction. The base holds trivially since $\A_1 = \A$.
Assume by contradiction that $a^*\in \A_\phase$ but $a^*\notin \A_{\phase+1}$, namely it was eliminated in phase $\phase$. This means there is an action $b\in \A$ such that:
\begin{align*}
    \inprod{\widehat{\theta}_\phase, \, a - b}  > 2\epsilon_\phase.
\end{align*}

From \Cref{lem:estimator_accuracy}:
\begin{align*}
    2\epsilon_\phase &< \inprod{\widehat{\theta}_\phase, \,  - b}\\
    &= \inprod{\widehat{\theta}_\phase - \thetastar, \, a^*} + \inprod{\thetastar,\, a^* - b} + \inprod{\theta^\star- \widehat{\theta}_\phase, \, b}\\
    &\le \inprod{\thetastar,\, a^* - b} + 2\epsilon_\phase\\
    \implies 0 &< \inprod{\thetastar,\, a^* - b},
\end{align*}
which contradicts the fact that $a^*$ is the optimal arm.
\end{proof}

Here we use the fact that the optimal arm is never eliminated to bound when an arm is eliminated. 
\begin{lemma} \label{lem:active_action}
Assume $\calE$ (\Cref{def:E}) and action $a$ such that $a\in\A_\phase$. Then, $\Delta(a) \le 8\epsilon_\phase$.
\end{lemma}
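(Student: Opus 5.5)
The plan is a direct induction-free argument: use the elimination rule of the \emph{previous} phase together with the fact that the optimal arm survives (\Cref{lem:optimal_active}) and the uniform accuracy of the least-squares estimator (\Cref{lem:estimator_accuracy}). Throughout, let $\Delta(a)=\inprod{\thetastar, a-\astar}$ and assume $\calE$ holds.

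\emph{Base case} $\phase=1$. Here $\epsilon_1=\tfrac12$. Since $\abs{b^\top\thetastar}\le 1$ for every action $b$ (the losses are bounded by $1$ and the noise has mean zero), we get $\Delta(a)\le 2\le 4=8\epsilon_1$.

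\emph{Case} $\phase\ge 2$. If $a\in\A_\phase$, then $a$ survived the elimination step at the end of phase $\phase-1$. By \Cref{lem:optimal_active}, $\astar\in\A_{\phase-1}$, so we may take $b=\astar$ in the elimination rule, which gives
$$\inprod{\widehat{\theta}_{\phase-1}, a-\astar}\le 2\epsilon_{\phase-1}.$$

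Next I apply \Cref{lem:estimator_accuracy} in phase $\phase-1$ to both $a$ and $\astar$, which lie in $\A_{\phase-1}$. Its hypothesis requires, for each $b\in\supp(\pi_{\phase-1})$, at least $16n\log(TK)\pi_{\phase-1}(b)/\epsilon_{\phase-1}^2$ samples. This holds because the phase ends only once $|S_{\phase-1}(b)|=N_{\phase-1}(b)=\ceil{N_{\phase-1}\pi_{\phase-1}(b)}$ with $N_{\phase-1}=16n\log(KT)/\epsilon_{\phase-1}^2$. The lemma then yields
$$\abs{\inprod{\widehat{\theta}_{\phase-1}-\thetastar, a}}\le\epsilon_{\phase-1}, \qquad \abs{\inprod{\widehat{\theta}_{\phase-1}-\thetastar, \astar}}\le\epsilon_{\phase-1}.$$

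Combining the three bounds,
$$\Delta(a)=\inprod{\thetastar-\widehat{\theta}_{\phase-1}, a}+\inprod{\widehat{\theta}_{\phase-1}, a-\astar}+\inprod{\widehat{\theta}_{\phase-1}-\thetastar, \astar}\le 4\epsilon_{\phase-1}=8\epsilon_\phase,$$
using $\epsilon_\phase=\epsilon_{\phase-1}/2$.

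The only delicate point is checking that the sample-count hypothesis of \Cref{lem:estimator_accuracy} is met at the end of each phase. As shown above, this follows from the stopping rule of \Cref{alg:loss-independent-stochastic} together with the choice of $N_\phase$. The remaining steps are triangle inequalities.
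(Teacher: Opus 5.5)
Your proof is correct and follows the paper's argument. You apply the phase-$(\phase-1)$ elimination rule with $b=\astar$, which survives by \Cref{lem:optimal_active}, and then use \Cref{lem:estimator_accuracy} on both $a$ and $\astar$ to get $\Delta(a)\le 4\epsilon_{\phase-1}=8\epsilon_\phase$; your separate treatment of $\phase=1$ via $\abs{b^\top\thetastar}\le 1$ covers a case the paper leaves implicit.
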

\begin{proof}
From \Cref{lem:optimal_active} also $a^*\in\A_\phase$. If $a\in \A_\phase$, we know that:
\begin{align*}
    2\epsilon_{\phase-1} &\ge \inprod{\widehat{\theta}_{\phase-1},\, a - a^*}\\
    &= \inprod{\widehat{\theta}_{\phase-1} - \thetastar, \, a} + \inprod{\thetastar,\, a - a^*} + \inprod{\theta^\star- \widehat{\theta}_{\phase-1}, \, a^*}\\
    &\ge \inprod{\thetastar,\, a - a^*} - 2\epsilon_{\phase-1}\\
    \implies \Delta(a) &\le 4\epsilon_{\phase-1} \le 8\epsilon_\phase,
\end{align*}
where the second inequality is from \Cref{lem:estimator_accuracy}.
\end{proof}

We now construct a general lemma to bound the regret given a bound to each phase. We will use this lemma multiple times in the followings.
\begin{lemma}\label{lem:regret}
Assume that there is some $\alpha,\beta > 0$ such that, with probability at least $1-\frac{1}{T}$, $T_\phase \le \alpha N_\phase + \beta$, then:
\begin{align*}
    \R_T = O\roundy{\sqrt{\alpha nT\log(KT)} + \beta}.
\end{align*}
\end{lemma}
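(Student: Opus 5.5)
We will decompose the regret by phases on the good event $\calE$, and bound its complement crudely. On $\calE^c$, or on the failure event of the phase-length bound (each of probability at most $1/T$), the regret is at most $2T$ because losses are bounded by $1$. Hence these events contribute only $O(1)$ to the expectation, and from now on we assume both $\calE$ and $T_\phase \le \alpha N_\phase + \beta$ for every phase (taking a union bound over phases, which at worst changes logarithmic constants). By \Cref{lem:active_action}, every arm played in phase $\phase$ lies in $\A_\phase$, so its gap is at most $8\epsilon_\phase$. The regret of phase $\phase$ is therefore at most $8\epsilon_\phase T_\phase \le 8\epsilon_\phase(\alpha N_\phase + \beta)$.

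Next we substitute $N_\phase = 16n\log(KT)/\epsilon_\phase^2$, which gives a per-phase regret of $O(\alpha n\log(KT)/\epsilon_\phase + \beta\epsilon_\phase)$. The $\beta$ terms sum to $O(\beta)$ because $\epsilon_\phase = 2^{-\phase}$ is geometric. For the first term we use the standard trick of bounding it in two ways. First, the number of phases is controlled because $\sum_\phase T_\phase \le T$ and $T_\phase \ge N_\phase$; phase lengths grow geometrically, so the last phase $M$ satisfies $\alpha n\log(KT)/\epsilon_M^2 \lesssim T$ up to constants. (We only need $N_M \le T$, together with $\alpha \ge 1$ or an adjustment for $\alpha<1$.) Summing the geometric series $\sum_{\phase\le M} 1/\epsilon_\phase = O(1/\epsilon_M)$, the total becomes $O(\alpha n\log(KT)/\epsilon_M)$.

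The main obstacle is relating $\epsilon_M$ to $T$ with the correct $\alpha$ dependence. Since phase $M$ (or $M-1$, if the last phase is truncated) started and the preceding phase was completed within $T$ rounds, I would use $N_{M-1} \le T_{M-1} \le T$. This gives $1/\epsilon_M \le 2/\epsilon_{M-1} \le 2\sqrt{T/(16n\log(KT))}$ and hence total regret $O(\alpha\sqrt{nT\log(KT)})$. That bound is off by a factor of $\sqrt{\alpha}$. To obtain $\sqrt{\alpha}$, I would split each phase's contribution as $\epsilon_\phase T_\phase \le \min\{\epsilon_\phase T_\phase,\ \epsilon_\phase(\alpha N_\phase+\beta)\}$ and threshold at $\epsilon_0 = \sqrt{\alpha n\log(KT)/T}$:
\begin{itemize}
\item phases with $\epsilon_\phase \le \epsilon_0$ contribute at most $8\epsilon_0 \sum_\phase T_\phase \le 8\epsilon_0 T = O(\sqrt{\alpha nT\log(KT)})$;
\item phases with $\epsilon_\phase > \epsilon_0$ contribute $\sum \alpha n\log(KT)/\epsilon_\phase = O(\alpha n\log(KT)/\epsilon_0) = O(\sqrt{\alpha nT\log(KT)})$ by the geometric sum, plus the $O(\beta)$ already accounted for.
\end{itemize}
Combining the two cases with the $O(\beta)$ term and the $O(1)$ failure contribution yields $\R_T = O(\sqrt{\alpha nT\log(KT)} + \beta)$.
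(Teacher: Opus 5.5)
Your proof is correct, but its key step differs from the paper's. The paper gets the $\sqrt{\alpha}$ dependence through a worst-case reduction. It replaces every phase length by its upper bound $\tilde T_\phase = \alpha N_\phase + \beta$ and asserts that this can only increase the regret bound. That claim is true but left unjustified: the per-round gap bound $8\epsilon_\phase$ is nonincreasing in $\phase$, and lengthening phases only delays reaching later phases. Under this hypothetical schedule the last phase index satisfies $\alpha N_{\phase} \lesssim T$. Hence $2^{\phase_{end}} \lesssim \sqrt{T/(\alpha n\log(KT))}$, and the geometric sum $\sum_{\phase} 128\alpha n\log(KT)\,2^{\phase}$ is $O(\sqrt{\alpha nT\log(KT)})$.

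Your threshold at $\epsilon_0=\sqrt{\alpha n\log(KT)/T}$ handles the two regimes separately. For the late phases it uses only $\sum_\phase T_\phase\le T$ together with the gap bound $8\epsilon_\phase\le 8\epsilon_0$. For the early phases it uses the per-phase bound plus a geometric sum truncated at $\epsilon_0$. This needs no comparison or monotonicity argument, no bound on the number of phases, and no relation between $\alpha$ and $1$. It is therefore the cleaner and more robust of the two routes; you rightly discarded your first attempt, which loses a factor $\sqrt{\alpha}$.

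Your treatment of the failure events matches the paper's: regret at most $2T$ on an event of probability $O(1/T)$. The union-bound-over-phases caveat you mention adds at most an additive $O(\log T)$, which is absorbed.
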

\begin{proof}
Assume event $\calE$ holds, which occurs with probability at least $1-\frac{1}{T}$. From \Cref{lem:active_action}, we can assume that in the $\phase$-th phase, all actions incur a bounded regret hit of $8\epsilon_\phase$. Let $\tilde{T}_\phase \triangleq \alpha N_\phase + \beta$ denote the maximal phase length. If the phase lengths were exactly $T_\phase = \tilde{T}_\phase$ and the regret hit in every step was $\epsilon_\phase$, this would only strictly increase our upper bound on the total regret. We proceed under this worst-case assumption to bound the realized regret.

Notice that a phase with $T_\phase = T$ will be the last. Thus:
\begin{align*}
    T &> \frac{16\alpha n\log(KT)}{\epsilon_\phase^2} = 16\alpha n2^{2\phase}\log(KT)\\
    \implies  \phase &< \frac{1}{2}\log\roundy{\frac{T}{16\alpha n\log(KT)}}.
\end{align*}
Denote the final phase as $\phase_{end}$.

Based on \Cref{lem:active_action} and our definition of $T_\phase$, we can bound the regret conditioned on $\calE$ holding as follows:
\begin{align*}
    &\sum_{\phase=1}^{\phase_{end}-1} 8T_\phase\epsilon_\phase \\
    &\le \sum_{\phase=1}^{\phase_{end}-1} 8(\alpha N_\phase + \beta)\epsilon_\phase\\
    &\le \sum_{\phase=1}^{\infty} 8\beta 2^{-\phase} + \sum_{\phase=1}^{\phase_{end}-1} \alpha\frac{128n\log(KT)}{\epsilon_\phase}\\
    &\le 8\beta + 128\alpha n\log(KT) \sum_{\phase=1}^{\phase_{end}-1} 2^\phase \\
    &< 8\beta + 128\alpha n\log(KT)2^{\phase_{end}}\\
    &= 8\beta + 128\alpha n\log(KT)\sqrt{\frac{T}{16\alpha n\log(KT)}}\\
    &= 8\beta + 32\sqrt{\alpha nT\log(KT)}.
\end{align*}

Finally, we compute the expected regret $\R_T$. The probability that either event $\calE$ fails or the phase bounds do not hold is at most $\frac{2}{T}$. In the worst-case failure scenario, the maximum possible cumulative regret is bounded by $T$. Therefore, the failure case contributes at most $T \times \frac{2}{T} = 2$ to the expected regret. This yields:
$$\R_T \le 8\beta + 32\sqrt{\alpha nT\log(KT)} + 2$$
This adds only a constant term to the expected regret, concluding the proof.

\end{proof}

In the following, we prove \pref{lem:stochastic_delay_phase}. For convenience, we restate the lemma as follows.
\stocDelay*

\begin{proof}
Fix some $q\in (0,1]$. After $T_\phase$ steps in this phase we have for every $a\in \A$:
\begin{align*}
    \abs{S_\phase(a)} &= \sum_{t=1}^{T_\phase}\mathds{1}\squary{a_t=a\land t+d_t \le T_\phase}\\
    \implies
    \E\squary{\abs{S_\phase(a)}} &= \sum_{t=1}^{T_\phase}\Pr\squary{a_t=a\land t+d_t \le T_\phase}\\
    &= \sum_{t=1}^{T_\phase}\Pr\squary{a_t=a}\Pr\squary{t+d_t \le T_\phase \mid a_t=a} \\
    &\ge \sum_{t=1}^{T_\phase-\calQ(q; a)}\Pr\squary{a_t=a}\Pr\squary{d_t \le \calQ(q; a)} \\
    &= \roundy{T_\phase-\calQ(q; a)}\pi_\phase(a)q.
\end{align*}

From \Cref{lem:dann}, we know that with probability at least $1-\frac{1}{T}$,
\begin{align*}
      \abs{S_\phase(a)} &\ge \frac{1}{2}\roundy{T_\phase-\calQ(q; a)}\pi_\phase(a)q - \log\roundy{T}.
\end{align*}

Thus, on $T_\phase=\frac{2}{q}\roundy{ 2N_\phase + n\log\log(n)(\log\roundy{T}+1)} + \max_a \calQ(q; a)$, for all $a\in \A$:
\begin{align}\label{eq:S_bound}
    \abs{S_\phase(a)} \ge N_\phase\pi_\phase(a) + n\log\log(n)\pi_\phase(a)(\log\roundy{T}+1)\pi_\phase(a) - \log\roundy{T} \ge  N_\phase\pi_\phase(a) + 1 \ge N_\phase(a),
\end{align}
where the second inequality is from \Cref{lem:design}. \Cref{eq:S_bound} is the condition to end the phase, and thus on $T_\phase=\frac{2}{q}\roundy{ 2N_\phase + n\log\log(n)(\log\roundy{T}+1)} + \max_a \calQ(q; a)$ the phase is over.
\end{proof}

Now we are ready to bound the total regret.
\lossIndStoc*

\begin{proof}
The proof is done by directly combining \Cref{lem:regret,lem:stochastic_delay_phase}.
\end{proof}

We now show that a simple Markov argument can give us the desired $\E[d(a)]$ dependent bound.
\begin{lemma}\label{lem:markov}
For any random variable $d$ with its corresponding qunatile $\calQ$,
\begin{align*}
    \calQ\roundy{\frac{1}{2}} \le 2\E[d].
\end{align*}
\end{lemma}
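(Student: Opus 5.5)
We prove \Cref{lem:markov} directly from Markov's inequality applied to the nonnegative delay $d$. The only care needed is to fix the convention for the quantile. We take $\calQ(q)=\inf\{x : \Pr[d\le x]\ge q\}$, which is consistent with its use in the proof of \Cref{lem:stochastic_delay_phase}, where we needed $\Pr[d\le \calQ(q;a)]\ge q$. It then suffices to exhibit a point $x$ with $\Pr[d\le x]\ge \tfrac12$ and $x\le 2\E[d]$, since the infimum is at most any such $x$.

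Choose $x=2\E[d]$. We split into two cases.

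If $\E[d]=0$, then $d=0$ almost surely because $d\ge 0$. Hence $\Pr[d\le 0]=1\ge\tfrac12$, and so $\calQ(\tfrac12)\le 0=2\E[d]$.

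If $\E[d]>0$, Markov's inequality gives
\[
\Pr[d>2\E[d]]\le \Pr[d\ge 2\E[d]]\le \frac{\E[d]}{2\E[d]}=\frac12 .
\]
Taking complements, $\Pr[d\le 2\E[d]]\ge \tfrac12$, which by the definition of the quantile yields $\calQ(\tfrac12)\le 2\E[d]$. If $\E[d]=\infty$ the statement is trivial.

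There is no real obstacle. The only subtlety is matching the quantile convention: the point $2\E[d]$ must satisfy the non-strict condition $\Pr[d\le x]\ge q$ that defines $\calQ$, and this holds because Markov bounds the strict tail $\Pr[d>x]$. If the paper instead used a strict-inequality convention, the same argument would go through with $x=2\E[d]+\delta$ for any $\delta>0$, followed by letting $\delta\to 0$.
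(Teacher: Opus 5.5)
Your proof is correct and is essentially the paper's argument: a single application of Markov's inequality to the nonnegative delay. The paper applies Markov at the point $\calQ(1/2)$ itself, writing $\frac12 = \Pr[d \ge \calQ(1/2)] \le \E[d]/\calQ(1/2)$, whereas you apply it at $2\E[d]$ and invoke the infimum definition of the quantile; this avoids the paper's implicit assumptions that $\Pr[d\ge\calQ(1/2)]$ equals exactly $\frac12$ (in general only $\ge \frac12$ holds) and that $\calQ(1/2)>0$.
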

\begin{proof}
From Markov inequality:
\begin{align*}
    \frac{1}{2} &= \Pr\squary{d \ge \calQ\roundy{\frac{1}{2}}} \le \frac{\E[d]}{\calQ\roundy{\frac{1}{2}}}\\
    \implies \calQ\roundy{\frac{1}{2}} &\le 2\E[d].
\end{align*}
\end{proof}

\begin{corollary}\label{cor:regret_with_expect}
The regret of \Cref{alg:loss-independent-stochastic} is bounded by,
\begin{align*}
    \R_T = O\roundy{\sqrt{nT\log(KT)} + \max_a\E[d(a)]}.
\end{align*}
\end{corollary}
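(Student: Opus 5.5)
The plan is to specialize \Cref{thm:loss-independent-stochastic} to the quantile level $q=\tfrac12$ and then convert the median into a mean using \Cref{lem:markov}.

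\Cref{thm:loss-independent-stochastic} holds for every admissible $q$, and the minimum over $q$ is at most its value at any particular $q$. So it suffices to evaluate the bound at $q=1/2$, which gives
\[
\mathcal{R}_T = O\!\left(\sqrt{2nT\log(KT)} + \max_a \calQ(\tfrac12; a)\right).
\]
The factor $\sqrt{2}$ is absorbed into the $O(\cdot)$.

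Next, apply \Cref{lem:markov} to the delay distribution $\calD(a)$ of each action separately. This gives $\calQ(\tfrac12;a)\le 2\E[d(a)]$ for every $a$. Taking the maximum over $a$ preserves the inequality, so $\max_a\calQ(\tfrac12;a)\le 2\max_a\E[d(a)]$. Substituting this into the display yields $\mathcal{R}_T = O\big(\sqrt{nT\log(KT)} + \max_a\E[d(a)]\big)$, which is the claim.

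The only point needing any care is the quantile convention in \Cref{lem:markov}. One must make sure that the $q$-quantile used in \Cref{lem:stochastic_delay_phase} satisfies $\Pr[d\le \calQ(q;a)]\ge q$ and $\Pr[d\ge\calQ(\tfrac12;a)]\ge\tfrac12$. Both hold if we take the median to be the smallest value $x$ with $\Pr[d\le x]\ge \tfrac12$. Alternatively, Markov's inequality applied directly at the value $2\E[d(a)]$ gives $\Pr[d\le 2\E[d(a)]]\ge \tfrac12$, which is all that the phase-length argument uses. The corner case $\E[d(a)]=0$ is trivial, since then $d(a)=0$ almost surely and the quantile is zero. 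Beyond this, no further obstacle is expected, since the corollary is a direct instantiation of results already established in the paper.
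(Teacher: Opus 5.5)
Your proposal is correct and is exactly the paper's argument: evaluate the bound of \Cref{thm:loss-independent-stochastic} at $q=\tfrac12$, bound $\max_a \calQ(\tfrac12;a)$ by $2\max_a\E[d(a)]$ via \Cref{lem:markov}, and absorb the constants into the $O(\cdot)$. Your remark on the quantile convention is a harmless refinement. The paper's proof of \Cref{lem:markov} writes the equality $\Pr[d\ge\calQ(\tfrac12)]=\tfrac12$, which can fail for discrete delays, and you correctly replace it with the inequality $\ge\tfrac12$, which is all the Markov step needs.
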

\begin{proof}
From \Cref{thm:loss-independent-stochastic}:
\begin{align*}
    \R_T &= O\roundy{min_q \curly{\sqrt{\frac{nT\log(KT)}{q}}+\max_a \calQ(q; a)}} \\
    &\le  \sqrt{2nT\log(KT)}+\max_a \calQ(\frac{1}{2}; a) \\
    &\le \sqrt{2nT\log(KT)} + \max_a \E[d(a)],
\end{align*}
where the last is from \Cref{lem:markov}.
\end{proof}

\subsection{Arm Independent Adversarial Delay}\label{sec:adversarial-independent}
In this section we show that the same algorithm (\Cref{alg:loss-independent-stochastic}) can achieve an improved regret in the arm-independent adversarial delay case.

\begin{lemma}\label{lem:adversarial2}
\Cref{alg:loss-independent-stochastic} guarantees that with probability at least $1-\frac{1}{T}$, for all $m$,
\begin{align*}
    T_\phase = O\roundy{N_\phase + n\log\roundy{T}\log\log(n) + \sigma_{\max}}.
\end{align*}
\end{lemma}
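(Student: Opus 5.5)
Under arm-independent adversarial delays, every round $t$ has a single delay $d_t$ no matter which arm is drawn from $\pi_\phase$. This decouples the delay from the sampling randomness. The plan is to separate two questions: how many rounds are needed until enough samples of each supported arm have been \emph{generated} inside the phase, and how many extra rounds are needed until those samples \emph{arrive}. The second question is answered with $\sigma_{\max}$. Fix a phase $\phase$ that starts at round $t_0$, and write $\tau$ for the number of rounds played so far in it.

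\textbf{Step 1 (generation).} Let $\tau_0 = c\,(N_\phase + n\log(T)\log\log(n))$ for a suitable constant $c$. Count the plays of each $a\in\supp(\pi_\phase)$ among the first $\tau_0$ rounds. Because the delays do not depend on the arm, the draws $a_t\sim\pi_\phase$ are independent of the delay sequence, even when the adversary is adaptive. The count is therefore a sum of Bernoulli$(\pi_\phase(a))$ variables. Applying \Cref{lem:dann} exactly as in the proof of \Cref{lem:stochastic_delay_phase}, but now with $q=1$, gives the following with probability $1-1/T$ (after a union bound over arms and phases): each supported arm $a$ is played at least $\frac12\tau_0\pi_\phase(a)-\log T$ times. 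By the minimum-mass guarantee $\pi_\phase(a)\ge 1/(n\log\log n)$ of \Cref{lem:design}, this is at least $N_\phase(a)$ once $c$ is large enough. This is the same calculation as \Cref{eq:S_bound}, so I will simply reuse it.

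\textbf{Step 2 (arrival).} Now consider a time $t_0+\tau_0+s$ with $s\ge 1$, and suppose some sample generated during the first $\tau_0$ rounds of the phase is still missing. I want to show that $s\le \sigma_{\max}$. The missing sample was issued at some round $u\le t_0+\tau_0$, so its index lies in $M_{t}$ for every $t\in(t_0+\tau_0,\,t_0+\tau_0+s]$. I will work with $\sigma_{\max}=\max_t|M_t|$, using the standard counting fact for arm-independent delays: if the observation from round $u$ is outstanding at time $t$, then every round in $[u,t-1]$ either has its observation outstanding at $t$ or was already counted earlier. The cleaner route is the known bound $d_u\le \sigma_{\max}+$(small) on rounds whose delay overlaps them. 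Concretely, I will argue as follows. At time $t=t_0+\tau_0+\sigma_{\max}+1$, the rounds $u, u+1,\dots,t-1$ number more than $\sigma_{\max}$. Each such round $u'$ with $u'+d_{u'}\ge t$ is in $M_t$. Since $|M_t|\le\sigma_{\max}$, some later round must have returned first. This alone does not yet bound $d_u$, so instead I will use the standard lemma (as in \citet{schlisselberg2025improved}) that the number of rounds $t$ during which a fixed observation is outstanding is at most $\sigma_{\max}$ plus the observations that overtake it. I expect this step to be the main obstacle: making the $\sigma_{\max}$ accounting rigorous, so that no observation from the first $\tau_0$ rounds stays missing for more than $O(\sigma_{\max})$ extra rounds.

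If that direct argument is awkward, I will fall back on a simpler variant that still suffices. At the stopping time, all but at most $\sigma_{\max}$ of the generated samples have arrived. So it is enough to enlarge $\tau_0$ by a constant factor so that each arm has at least $N_\phase(a)+\sigma_{\max}$ plays. Under arm-independent delays, however, that would require roughly $\sigma_{\max}/\min_a\pi_\phase(a)$ extra rounds and reintroduce a factor $n$. So the waiting-time argument of Step 2 is genuinely necessary, and arm-independence must be used there: the overtaking observations come from rounds of the same phase regardless of arm.

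\textbf{Conclusion.} Combining the two steps on the good event gives $T_\phase\le \tau_0+O(\sigma_{\max})=O(N_\phase+n\log(T)\log\log(n)+\sigma_{\max})$ with probability $1-1/T$ simultaneously for all phases. Plugging this into \Cref{lem:regret} will then yield the regret bound.
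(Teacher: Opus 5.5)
Your proposal has a genuine gap. Step 2 cannot be completed, because its target claim is false. An observation can stay outstanding for far longer than $O(\sigma_{\max})$ rounds. For example, let the adversary set $d_{t_0+1}=T$ and $d_t=0$ for every other round. Then $\sigma_t\le 1$ for all $t$, so $\sigma_{\max}\le 1$, yet the sample from round $t_0+1$ never arrives. The overtaking lemma you invoke does not rescue this, since the number of overtaking observations is not controlled by $\sigma_{\max}$. In this example the phase still ends quickly, but only because \emph{other} samples of the same arm arrive. A generate-then-wait-for-all decomposition cannot capture that.

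The missing idea sits in your fallback, which you discarded for the wrong reason. The at most $\sigma_{\max}$ outstanding samples cannot be piled onto a low-mass arm. Since $d_t$ is fixed without reference to $a_t$, whether round $t$ has arrived by local time $\tau$ carries no information about which arm was drawn at $t$. So count \emph{arrived} rounds rather than generated ones, as the paper does:
\begin{itemize}
\item At local time $\tau$, at most $\sigma_{\max}$ rounds of the phase are outstanding, so at least $\tau-\sigma_{\max}$ have arrived.
\item For each round, $\Pr[a_t=a,\ t+d_t\le\tau \mid \text{past}, d_t]=\pi_\phase(a)\,\mathds{1}[t+d_t\le\tau]$.
\item These conditional probabilities therefore sum to at least $\pi_\phase(a)(\tau-\sigma_{\max})$.
\item \Cref{lem:dann} then gives $|S_\phase(a)|\ge \tfrac12\pi_\phase(a)(\tau-\sigma_{\max})-\log T$.
\end{itemize}
Take $\tau=2(N_\phase+n\log\log(n)\log T)+\sigma_{\max}$, with a constant factor to absorb the ceiling in $N_\phase(a)$, and use $\pi_\phase(a)\ge 1/(n\log\log n)$. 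This yields $|S_\phase(a)|\ge N_\phase(a)$ for every supported arm. Put differently, arm $a$ is only missing about $\pi_\phase(a)\sigma_{\max}$ samples, not $\sigma_{\max}$. The delay therefore costs $\sigma_{\max}$ extra rounds rather than $\sigma_{\max}/\pi_\phase(a)$, and no per-observation waiting-time bound is needed.
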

\begin{proof}
After $T_\phase$ steps, notice for $S_\phase = \bigcup_a S_\phase(a)$,   $\abs{S_\phase} \ge T_\phase - \sigma_{\max}$. We have:
\begin{align*}
    \E\squary{\abs{S_\phase(a)}} &= \E\squary{\sum_{t\in S_\phase}Pr\squary{a_t=a}}  \ge \roundy{T_\phase - \sigma_{\max}}\pi_\phase(a).
\end{align*}

From \Cref{lem:dann}, we know that with probability at least $1-\frac{1}{T}$,
\begin{align*}
      \abs{S_\phase(a)} &\ge \frac{1}{2}\roundy{T_\phase-\sigma_{\max}}\pi_\phase(a) - \log\roundy{T}.
\end{align*}

Thus, on $T_\phase=2\roundy{N_\phase + n\log\log(n)\log\roundy{T}} + \sigma_{\max}$, we have:
\begin{align*}
    \abs{S_\phase(a)} \ge N_\phase\pi_\phase(a) + n\log\log(n)\log\roundy{T}\pi_\phase(a) - \log\roundy{T} \ge  N_\phase\pi_\phase(a) = N_\phase(a),
\end{align*}
which ends the phase.
\end{proof}

\begin{theorem}\label{thm:adversarial2}
\begin{align*}
    \R_T = O\roundy{\sqrt{nT\log(KT)} + \sigma_{\max}}
\end{align*}
\end{theorem}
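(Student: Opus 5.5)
The plan is to combine the phase-length bound of \Cref{lem:adversarial2} with the generic regret-to-phase-length reduction of \Cref{lem:regret}. The only work is to absorb the lower-order term $n\log(T)\log\log(n)$ into the $\alpha N_\phase + \beta$ template, with $\alpha = O(1)$ and $\beta = O(\sigma_{\max})$.

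\textbf{Step 1.} By \Cref{lem:adversarial2}, with probability at least $1-\frac1T$, simultaneously for all phases,
\[
T_\phase \le C\bigl(N_\phase + n\log(T)\log\log(n) + \sigma_{\max}\bigr)
\]
for an absolute constant $C$. Since $N_\phase = 16n\log(KT)/\epsilon_\phase^2$ and $\epsilon_\phase \le \frac12$, we have $N_\phase \ge 64 n\log(KT)$. The term $n\log(T)\log\log(n)$ is of the same order or smaller except for the extra $\log\log(n)$ factor.

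\textbf{Step 2.} Absorb this extra factor. The cleanest route is to note that $\log\log n \le \log K$ whenever $K \ge n$, so that $n\log(T)\log\log(n) \le n\log(KT)\log K$. This is still not quite $O(N_\phase)$ with $\alpha = O(1)$. If it is not, I will instead keep it as part of $\beta$: set $\alpha = C'$ and $\beta = C(\sigma_{\max} + n\log(T)\log\log(n))$.

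\textbf{Step 3.} Apply \Cref{lem:regret} with these $\alpha,\beta$. This gives
\[
\R_T = O\bigl(\sqrt{nT\log(KT)} + \sigma_{\max} + n\log(T)\log\log n\bigr).
\]
The last term is at most $O(\sqrt{nT\log(KT)})$ whenever $T \gtrsim n\log(T)(\log\log n)^2$. Otherwise, the trivial bound $\R_T \le 2T = O(n\log(T)(\log\log n)^2)$ applies and is dominated by the same quantity up to the stated orders. Concretely, when $T < n\log(KT)$, the bound $\sqrt{nT\log(KT)} \ge T$ already dominates trivially. In the intermediate range I would use the cruder fact $\log\log n \le \sqrt{T/(n\log T)}$, which holds there.

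The main obstacle is this bookkeeping of the $n\log(T)\log\log(n)$ term from the balanced design support. Everything else is a direct invocation of the earlier lemmas.
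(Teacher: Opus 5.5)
Your high-level route is exactly the paper's: its proof of this theorem is the one-line combination of \Cref{lem:regret} and \Cref{lem:adversarial2}, and it drops the $n\log(T)\log\log(n)$ term from \Cref{lem:adversarial2} without comment. You correctly notice that this term cannot be folded into $\alpha N_\phase$ with $\alpha=O(1)$ in the early phases. Putting it into $\beta$ correctly yields $\R_T=O\bigl(\sqrt{nT\log(KT)}+\sigma_{\max}+n\log(T)\log\log n\bigr)$.

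The gap is in your final step, where you try to discard this extra term. Your ``cruder fact'' $\log\log n\le\sqrt{T/(n\log T)}$ is equivalent to $T\ge n\log(T)(\log\log n)^2$. So it holds precisely \emph{outside} the intermediate range $n\log(KT)\le T<n\log(T)(\log\log n)^2$, not inside it. Also, saying that $2T=O(n\log(T)(\log\log n)^2)$ is ``dominated by the same quantity'' does not bound it by $\sqrt{nT\log(KT)}+\sigma_{\max}$. In fact no case split on $T$ can close this. The only two facts you have are $\R_T\le 2T$ and the displayed bound. At, e.g., $T\approx n\log(KT)\log\log n$ with $T\ge K$ and $\sigma_{\max}$ small, both exceed $\sqrt{nT\log(KT)}$ by a factor of order $\sqrt{\log\log n}$, which is not an absolute constant.

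So your argument actually establishes one of the following:
\begin{itemize}
\item the bound with the extra additive $n\log(T)\log\log n$;
\item the stated bound, once $T\gtrsim n\log(T)(\log\log n)^2$;
\item the stated bound with $\log\log n$ counted among the logarithmic factors the paper's table hides.
\end{itemize}
To obtain the statement verbatim, you have to remove the term upstream rather than by bookkeeping. It equals $\log(T)/\min_a\pi_\phase(a)$, coming from the deviation term of \Cref{lem:dann} combined with the $\Theta(n\log\log n)$ support of the design in \Cref{lem:design}. Instead, start from a design with support $O(n)$ and $g=O(n)$; for instance, Batson--Spielman--Srivastava sparsification of the Kiefer--Wolfowitz design gives support at most $4n$ and $g\le 9n$. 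Then \Cref{lem:design_with_min} yields $\min_a\pi_\phase(a)=\Omega(1/n)$. The extra term in \Cref{lem:adversarial2} becomes $O(n\log T)=O(N_\phase)$, because $N_\phase\ge 64n\log(KT)$. \Cref{lem:regret} then applies with $\alpha=O(1)$ and $\beta=O(\sigma_{\max})$, after adjusting the constant in $N_\phase$ for the larger $g$.
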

\begin{proof}
The proof is done by directly combining \Cref{lem:regret,lem:adversarial2}.
\end{proof}
\section{Omitted Detailed in \Cref{sec:loss-independent-adv-delay}}\label{app: lossIndAdv}
\subsection{Algorithm}
In this section, we show \pref{alg:loss-independent-adv-apx}, the formal version of \pref{alg:loss-independent}.
\begin{algorithm}[H]
\caption{Phased Elimination for Loss Independent Delay}
\label{alg:loss-independent-adv-apx}
\begin{algorithmic}[1]
\Require Action set $\A \subset \mathbb{R}^d$
\State Initialize $\A_1 \gets \A$, $\epsilon_1 \gets \tfrac{1}{2}$
\For{$\phase = 1,2,\dots$}
    \State Compute a design $\pi_\phase$ over $\A_\phase$ from \Cref{lem:design}
    \State $N_\phase = \frac{16n\log\roundy{KT}}{\epsilon_\phase^2}$
    \State $N_\phase(a) \gets \ceil{N_\phase\pi(a)}$
    \State $S_\phase(a) \gets \emptyset$ for all $a \in \A_\phase$ \Comment{observations collected in phase $\phase$ for action $a$}

    \State Play $N_\phase(a)$ times each $a\in \A$
    \While{there exists $a \in \A_\phase$ such that $|S_\phase(a)| < N_\phase(a)$}\alglinelabel{alg:iterations}
        \State Play $N_\phase(a) - |S_\phase(a)|$ times each $a$
        \State Update $S_\phase(a)$ with newly arrived loss
    \EndWhile

    \State $V_\phase = \sum_{a\in \supp(\pi_\phase)} N_\phase(a)aa^T$
    \State $\widehat{\theta}_\phase \gets V^{-1}_\phase\sum_{a\in \supp(\pi_\phase)}a\sum_{t\in S_\phase(a)}\ell_t$
    \State Compute estimator $\widehat{\theta}_\phase$ using data from phase $\phase$

    \State $\displaystyle \A_{\phase+1} \gets 
    \curly{a \in \A_\phase : \inprod{\widehat{\theta}_\phase \,,\, a - b} \le 2\epsilon_\phase \quad \forall b\in \A_\phase}$
    
    \State $\epsilon_{\phase+1} \gets \epsilon_\phase / 2$
\EndFor
\end{algorithmic}
\end{algorithm}

\subsection{Analysis}
For some phase $\phase$, we denote the start times of the while-loop of \Cref{alg:iterations} with $t_\phase^1, t_\phase^2,\dots$. Namely, the time before the $i$-th iteration of the while-loop is $t_\phase^i$. Additionally, we denote $U_t(a)$ to be $N_m(a) - \abs{S_m(a)}$ at time $t$. 

Our goal in the next few lemmas is two bound the total length of the while loop in \Cref{alg:iterations}. The following Lemma is the main ingredient in the upcoming lemmas.
\begin{lemma}\label{lem:U_bound}
Fix phase $\phase$, We have:
\begin{align*}
    i\cdot U_{t_\phase^i}(a) \le \sigma_{t_\phase^i}(a).
\end{align*}
\end{lemma}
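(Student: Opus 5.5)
We argue by induction on the iteration index $i$, for a fixed phase $\phase$ and arm $a$. The key observation is that at the start of iteration $i$ (time $t_\phase^i$), every sample of $a$ played in the current phase has either arrived (and been added to $S_\phase(a)$) or is still outstanding. So the number of outstanding phase-$\phase$ observations of $a$, which is at most $\sigma_{t_\phase^i}(a)$, equals the total number of plays of $a$ in the phase so far minus $\abs{S_\phase(a)}$. We will show that this difference is at least $i\cdot U_{t_\phase^i}(a)$.

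\textbf{Base case $i=1$.} Before the first while-loop iteration the algorithm has played $a$ exactly $N_\phase(a)$ times in the phase. Hence the outstanding count is $N_\phase(a)-\abs{S_\phase(a)} = U_{t_\phase^1}(a)$, so $U_{t_\phase^1}(a)\le \sigma_{t_\phase^1}(a)$.

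\textbf{Inductive step.} Let $P_i(a)$ be the total number of plays of $a$ in phase $\phase$ before $t_\phase^i$. In iteration $i$ the algorithm plays $a$ exactly $U_{t_\phase^i}(a)$ more times, so
\[
P_{i+1}(a)=P_i(a)+U_{t_\phase^i}(a).
\]
I would strengthen the induction hypothesis to $P_i(a) \ge N_\phase(a) + \sum_{j<i} U_{t_\phase^j}(a)$; in fact this holds with equality by construction. Since $\abs{S_\phase(a)}$ is nondecreasing in time, $U_{t}(a)$ is nonincreasing, so $U_{t_\phase^j}(a)\ge U_{t_\phase^i}(a)$ for all $j\le i$. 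Then at time $t_\phase^i$ the outstanding count is
\[
P_i(a)-\abs{S_\phase(a)} = \big(N_\phase(a)-\abs{S_\phase(a)}\big)+\sum_{j<i}U_{t_\phase^j}(a) = U_{t_\phase^i}(a)+\sum_{j<i}U_{t_\phase^j}(a)\ge i\,U_{t_\phase^i}(a).
\]
This is the desired inequality once we bound the left-hand side by $\sigma_{t_\phase^i}(a)$. Note that the argument is really a direct telescoping, so the induction is only needed to pin down $P_i(a)$.

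\textbf{Main obstacle.} The delicate point is justifying that outstanding phase-$\phase$ plays of $a$ are counted in $\sigma_{t_\phase^i}(a)$. This requires two things:
\begin{itemize}[nosep,leftmargin=*]
\item the definition of $M_t(a)$ includes all rounds where $a$ was chosen and whose feedback is still missing, which holds regardless of phase, so restricting to phase $\phase$ only decreases the count;
\item arrivals cannot push $\abs{S_\phase(a)}$ above $N_\phase(a)$. If extra samples could arrive, we would cap $S_\phase(a)$ at $N_\phase(a)$, or interpret $U$ as $\max(0,\cdot)$; in the case $U=0$ the claim is trivial.
\end{itemize}
There is also a timing convention to fix: samples arriving exactly at $t_\phase^i$ should be counted as arrived when computing $U_{t_\phase^i}(a)$, consistent with the algorithm updating $S_\phase(a)$ before checking the loop condition.
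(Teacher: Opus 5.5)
Your proof is correct and is essentially the paper's induction written in closed form. The paper tracks $\sigma$ across one iteration via $\sigma_{t_\phase^{i+1}}(a)=\sigma_{t_\phase^i}(a)+U_{t_\phase^i}(a)-\nu$ and combines the inductive hypothesis with the monotonicity of $U$; you instead count the outstanding phase-$\phase$ plays of $a$ directly as $U_{t_\phase^i}(a)+\sum_{j<i}U_{t_\phase^j}(a)\ge i\,U_{t_\phase^i}(a)$ and then bound this count by $\sigma_{t_\phase^i}(a)$, which is slightly cleaner because the paper's update identity tacitly ignores arrivals of earlier-phase samples of $a$, and these decrease $\sigma$ without decreasing $U$.
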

\begin{proof}
We'll prove by induction on $i$. The base is $i = 1$ (before all iterations) - every sample in $N_\phase(a)$ that is not in $S_\phase(a)$ is a missing observation so $U_{t_\phase^1}(a) \le \sigma_{t_\phase^1}(a)$.

Assume true for some $i$, in the next iteration each $a$ was sampled $U_{{t_\phase^i}}(a)$ times. Assume samples of $a$ was returned $\nu$ times. Notice that $U_{{t_\phase^{i+1}}}(a) = \max\curly{0, U_{t_\phase^i}(a) - \nu}$. If $U_{{t_\phase^{i+1}}}(a) = 0$, it is trivial. Else:
\begin{align*}
    \sigma_{t_\phase^{i+1}}(a) &= \sigma_{t_\phase^i}(a) + U_{t_\phase^i}(a) - \nu\\
    &\ge iU_{t_\phase^i}(a) + U_{t_\phase^i}(a) - \nu\\
    &= iU_{t_\phase^i}(a) + U_{t_\phase^{i+1}}(a) \\
    &\ge (i+1) U_{t_\phase^i}(a),
\end{align*}
where the last inequality is because $U$ only decreases. 
\end{proof}

We can now bound the total number of iterations of \Cref{alg:iterations} as a simple consequence of \Cref{lem:U_bound}.
\begin{lemma}\label{lem:num_iter}
The number of iterations of \Cref{alg:iterations} in \Cref{alg:loss-independent-adv-apx} is bounded by $\sigma_{\max}$.
\end{lemma}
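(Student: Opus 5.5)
The plan is to obtain this as a direct consequence of \Cref{lem:U_bound}. Fix a phase $\phase$ and suppose the while-loop of \Cref{alg:iterations} starts its $i$-th iteration at time $t_\phase^i$. An iteration is only entered if the loop condition holds. So at $t_\phase^i$ there is some arm $a\in\A_\phase$ with $|S_\phase(a)|<N_\phase(a)$, that is, $U_{t_\phase^i}(a)\ge 1$, since $U$ takes integer values.

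Apply \Cref{lem:U_bound} to this arm. It gives $i \le i\cdot U_{t_\phase^i}(a) \le \sigma_{t_\phase^i}(a)$. Missing observations for one arm are a subset of all missing observations, so $\sigma_{t_\phase^i}(a)\le \sigma_{t_\phase^i}\le\sigma_{\max}$. Hence every iteration index $i$ that is actually executed satisfies $i\le\sigma_{\max}$. It follows that the loop runs at most $\sigma_{\max}$ times in each phase.

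Two small points need checking. First, the quantities $\sigma_t(a)$ in \Cref{lem:U_bound} count outstanding observations in the global sense used in the definition of $\sigma_{\max}$. That lemma's inductive step used $\sigma_{t_\phase^{i+1}}(a)=\sigma_{t_\phase^i}(a)+U_{t_\phase^i}(a)-\nu$, which ignores arrivals from earlier phases. If those arrivals make this an inequality in the wrong direction, I would restrict $\sigma$ to observations generated in phase $\phase$. That restricted count is still bounded by the global $\sigma_t\le\sigma_{\max}$, so the conclusion is unaffected. Second, if $\sigma_{\max}=0$ then all feedback is immediate. The initial plays already fill every $S_\phase(a)$, and the loop is never entered, which is consistent with the bound.

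The only potentially delicate step is this reconciliation of the per-phase outstanding count with $\sigma_{\max}$. The rest is a one-line consequence of \Cref{lem:U_bound} together with the integrality of $U$.
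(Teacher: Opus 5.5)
Your proof is correct and follows the paper's argument. The paper applies \Cref{lem:U_bound} at the start of the last iteration $\nu$ and sums over arms, $\nu \le \nu\sum_a U_{t_\phase^\nu}(a) \le \sum_a \sigma_{t_\phase^\nu}(a) \le \sigma_{\max}$, while you pick a single arm with $U_{t_\phase^i}(a)\ge 1$; this is equivalent, and your side remark is well taken, since counting only phase-$\phase$ outstanding observations (still at most $\sigma_{\max}$) repairs the inductive step of \Cref{lem:U_bound} without changing the conclusion.
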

\begin{proof}
Let $\nu$ be the number of iterations. It means that before the $\nu$th iteration, there was some $a$ such that $U_{{t_\phase^\nu}}(a) \ge 1$. Thus:
\begin{align*}
    \sum_a U_{{t_\phase^\nu}}(a) \ge 1.
\end{align*}

From \Cref{lem:U_bound}:
\begin{align*}
    \nu &\le \nu\sum_a U_{t_\phase^\nu}(a)\\
    &\le \sum_a \sigma_{{t_\phase^\nu}}(a)\\
    &\le \sigma_{\max}.\\
\end{align*}
\end{proof}

Another simple consequence of \Cref{lem:U_bound} is the length of each iteration.
\begin{lemma}\label{lem:iter_length}
The length of iteration $i$ of \Cref{alg:iterations} in \Cref{alg:loss-independent-adv-apx} is bounded by $\frac{\sigma_{\max}}{i}$.
\end{lemma}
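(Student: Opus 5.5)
The length of iteration $i$ of the while-loop in \Cref{alg:iterations} is exactly the number of arm pulls it makes. At time $t_\phase^i$ the algorithm plays each arm $a$ exactly $U_{t_\phase^i}(a) = N_\phase(a) - \abs{S_\phase(a)}$ times, so the length of iteration $i$ equals
\begin{align*}
    \sum_a U_{t_\phase^i}(a).
\end{align*}
The plan is therefore to bound this sum using \Cref{lem:U_bound}.

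First, I will apply \Cref{lem:U_bound} to each arm separately, which gives $U_{t_\phase^i}(a) \le \sigma_{t_\phase^i}(a)/i$ for every $a$. Summing over all arms then yields
\begin{align*}
    \sum_a U_{t_\phase^i}(a) \le \frac{1}{i}\sum_a \sigma_{t_\phase^i}(a) = \frac{\sigma_{t_\phase^i}}{i} \le \frac{\sigma_{\max}}{i}.
\end{align*}
Here the equality is the decomposition $\sigma_t = \sum_a \sigma_t(a)$ from the notation section, and the last step uses the definition of $\sigma_{\max}$ as the maximum of $\sigma_t$ over all $t$.

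The only point needing care is the bookkeeping: $\sigma_{t}(a)$ counts all outstanding observations of $a$, including those from earlier phases. That only makes the bound looser, so the inequality from \Cref{lem:U_bound} still applies as stated. I would also note that the iteration's length is fixed when it starts, because the number of plays is decided at $t_\phase^i$, so no feedback arriving during the iteration changes it.

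I expect no real obstacle: the statement is an immediate corollary of \Cref{lem:U_bound}, summed over arms.
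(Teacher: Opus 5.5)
Your proposal is correct and follows the paper's proof: the iteration length is $\sum_a U_{t_\phase^i}(a)$, and applying \Cref{lem:U_bound} to each arm and summing gives at most $\sigma_{t_\phase^i}/i \le \sigma_{\max}/i$. Evaluating everything at the iteration's start time $t_\phase^i$ is in fact cleaner than the paper's $\sigma_{i-1}(a)$ indexing.
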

\begin{proof}
Since the length of the iteration is $\sum_a U_{i}(a)$, from \Cref{lem:U_bound}:
\begin{align*}
    U_{i}(a) &\le \frac{\sigma_{i-1}(a)}{i}\\
    \implies \sum_a U_{i}(a) &\le \frac{\sigma_{\max}}{i}.
\end{align*}
\end{proof}

Now that we have a bound for both the number of iterations and the length of each one, we can bound the total length using a harmonic series bound.
\iterLength*
\begin{proof}
In this proof we call the part of the algorithm that samples $N_\phase(a)$ times "first part" and the iterations afterwards "second part".

The total length of the first part is given by \Cref{lem:N_m}. The second part length, from \Cref{lem:num_iter,lem:iter_length}:
\begin{align*}
    \sum_{i=1}^{\sigma_{\max}}\frac{\sigma_{\max}}{i}  = O\roundy{\sigma_{\max}\log\roundy{\sigma_{\max}}}.
\end{align*}
\end{proof}

Finally, we prove our regret guarantee of \pref{alg:loss-independent-adv-apx} in the arm-dependent adversarial delay case, which is to combine \Cref{lem:regret,lem:phase-length}.
\lossIndAdvApx*
\begin{proof}
We use the same analysis as for \Cref{alg:loss-independent-adv-apx} since, beside the phase length, it is the same algorithm. Thus, this is proven
directly from \Cref{lem:regret,lem:phase-length}.
\end{proof}

\subsection{Arm Independent Stochastic Delay}\label{sec:arm-ind-sto}
In this subsection we show that the same \Cref{alg:loss-independent-adv-apx} can have a regret guarantee in the stochastic delay case, under the assumption that the delay is arm-independent.
\begin{lemma} \label{lem:missing to delay mean}
If the delays are stochastic and arm-independent, with probability $1 - \frac{1}{T}$:
\begin{align*}
    \sigma_{\max} \le 2\E[d] + 8\log{T}
\end{align*}
\end{lemma}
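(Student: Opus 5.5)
Fix a round $t$ and bound $\sigma_t$, the number of outstanding observations at time $t$. With arm-independent delays, round $s<t$ is outstanding at $t$ exactly when $d_s > t-1-s$. So
\[
\sigma_t = \sum_{s=1}^{t-1}\mathds{1}\squary{d_s \ge t-s},
\]
and the summands are independent Bernoulli variables, because the $d_s$ are i.i.d.\ from $\D$ and do not depend on which arm is played. This is the point where arm-independence is essential: the actions $a_s$ may depend on past feedback, but the indicators do not.

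The first step is to bound the mean. Setting $j=t-s$,
\[
\E[\sigma_t] = \sum_{j=1}^{t-1}\Pr[d\ge j] \le \sum_{j\ge 1}\Pr[d\ge j] = \E[d],
\]
using the tail-sum formula for the nonnegative integer variable $d$. (If delays are real-valued, use $\lceil d\rceil$ instead; this changes the bound by at most an additive constant, which the $8\log T$ term absorbs.) Hence $\E[\sigma_t]\le \E[d]$ uniformly in $t$.

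The second step is concentration for each fixed $t$. I would use a multiplicative Chernoff bound, or equivalently the concentration lemma \Cref{lem:dann} already used in \Cref{lem:stochastic_delay_phase}, in its upper-tail form: for a sum $X$ of independent Bernoullis with mean $\mu$,
\[
\Pr\squary{X \ge 2\mu + c\log(1/\delta)} \le \delta .
\]
Since $\mu$ may be smaller than $\E[d]$, I first pad the indicators with extra independent Bernoullis so that the mean equals exactly $\E[d]$. Padding only increases the sum, so the bound still applies to $\sigma_t$. Taking $\delta = 1/T^2$ gives $\sigma_t \le 2\E[d] + 8\log T$ with probability at least $1-1/T^2$; the constant $8$ is loose enough to come out of the standard computation.

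The last step is a union bound over the $T$ rounds, which makes the bound hold simultaneously for all $t$ with probability at least $1-1/T$, and hence for $\sigma_{\max}$. The only delicate part is getting the upper-tail inequality into the clean form $2\mu + O(\log(1/\delta))$, handling small $\mu$ by the padding argument; everything else is routine.
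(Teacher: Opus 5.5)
Your proof is correct and follows the same route as the paper: bound $\E[\sigma_t]\le\E[d]$ via the tail-sum formula, apply an upper-tail bound of the form $2\mu+O(\log(1/\delta))$ at each fixed $t$ with $\delta=1/T^2$, and take a union bound over $t$. Two small corrections: \Cref{lem:dann} is a lower-tail bound and does not give what you need, whereas the paper uses \Cref{lem:cons-freedman} with $R=1$, which yields exactly $8\log T$; and the padding step is unnecessary, since $2\E[\sigma_t]+8\log T\le 2\E[d]+8\log T$ already follows from $\E[\sigma_t]\le\E[d]$.
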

\begin{proof}
For every $t\le T$:
\begin{align*}
    \E[\sigma(t)] &= \sum_{s=1}^t \Pr[d > t-s]\\
    &= \sum_{s=1}^t \sum_{i=t-s}^{\infty} \Pr[d=i]\\
    &\le \sum_{i=1}^\infty i\Pr[d=i]\\
    &= \E[d].
\end{align*}

Fix some $t \le T$. From \Cref{lem:cons-freedman}, w.p $1 - \frac{1}{T^2}$:
\begin{align*}
    \sigma(t) \le 2\E[d] + 8\log{T}.
\end{align*}

Taking a union bound over all $t\in[T]$ concludes the proof.
\end{proof}

\begin{corollary}
If the delays are stochastic and arm-independent, \Cref{alg:loss-independent-adv-apx} promises a regret of,
\begin{align*}
    \R_T = O\roundy{\sqrt{nT\log(KT)} + \E[d]\log\roundy{\E[d]}}.
\end{align*}
\end{corollary}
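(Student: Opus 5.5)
The corollary should follow by combining \Cref{thm:loss-independent-adv-delay} with \Cref{lem:missing to delay mean}, with a little care at two points. The first is the failure probability of the $\sigma_{\max}$ bound. The second is that $\sigma_{\max}$ enters the regret through $\sigma_{\max}\log\sigma_{\max}$, which we want to turn into $\E[d]\log\E[d]$.

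\textbf{Good event.} Let $\calE'$ be the event of \Cref{lem:missing to delay mean}, on which $\sigma_{\max}\le 2\E[d]+8\log T$. It holds with probability at least $1-\frac1T$, since the arm-independent stochastic delays are i.i.d.\ and independent of the learner's choices. On $\calE'$, \Cref{lem:phase-length} gives, for every phase,
\[
T_\phase = O\bigl(N_\phase + \sigma'\log\sigma'\bigr), \qquad \sigma' \triangleq 2\E[d]+8\log T .
\]
The proof of \Cref{lem:phase-length} is deterministic given $\sigma_{\max}$. The only randomness in the phase-length argument is through $\calE$, and the first part of each phase is bounded by \Cref{lem:N_m}. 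So on $\calE'$ we may simply substitute $\sigma'$ for $\sigma_{\max}$.

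\textbf{Regret.} Apply \Cref{lem:regret} with $\alpha=O(1)$ and $\beta=O(\sigma'\log\sigma')$. The phase-length bound now fails with probability at most $\frac1T$ from $\calE'$ in addition to $\frac1T$ from $\calE$. As in the proof of \Cref{lem:regret}, failure contributes at most $T\cdot\frac{O(1)}{T}=O(1)$ to the expected regret. This yields
\[
\R_T = O\bigl(\sqrt{nT\log(KT)} + (\E[d]+\log T)\log(\E[d]+\log T)\bigr).
\]

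\textbf{Simplification (the main obstacle).} It remains to reduce this to the stated form. If $\E[d]\ge\log T$, the term is $O(\E[d]\log\E[d])$. Otherwise it is $O(\log T\log\log T)$, which is dominated by $\sqrt{nT\log(KT)}$ for all $T$. In both cases the stated bound follows, with the convention $\log\E[d]\ge1$ (i.e.\ reading $\log$ as $\log(\E[d]+2)$) to cover small expected delays.

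A subtle point is whether conditioning on $\calE'$ is legitimate. Here $\sigma_{\max}$ depends on the algorithm's plays only through the number of rounds, not through the arms chosen, because the delays are arm-independent. Hence the concentration in \Cref{lem:missing to delay mean} holds uniformly regardless of the algorithm's choices, and the union bound of $\calE$ with $\calE'$ suffices.
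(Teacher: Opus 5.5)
Your proposal is correct and follows the paper's route: the paper's proof is the one-line combination of the high-probability bound $\sigma_{\max}\le 2\E[d]+8\log T$ with \Cref{thm:loss-independent-adv-delay}. Your version is more careful than that one-liner in two ways: you feed the deterministic $\beta=O(\sigma'\log\sigma')$ into \Cref{lem:regret} on the high-probability event instead of plugging the random $\sigma_{\max}$ into an expectation bound, and you explicitly absorb the resulting $O(\log T\log\log T)$ term into $\sqrt{nT\log(KT)}$.
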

\begin{proof}
Directly from \Cref{lem:missing to delay mean,thm:loss-independent-adv-delay}.
\end{proof}
\section{Omitted Details in \Cref{sec: loss-dependent-delay}}\label{app:lossDependent}
\subsection{Omitted Details in \Cref{sec:upper}}\label{app:upper}
\subsubsection{Algorithm}
In this section, we provide \pref{alg:loss-dependent}, the formal version of \pref{alg:loss-dependent-maintext}. Before that, we first introduce some notations.
\begin{definition}
For every phase $\phase$ with design $\pi_\phase$, let $S_\phase(a)$ be the set of observed pulls of arm $a$. We define the optimistic and pessimistic estimators as,
\begin{align}\label{eq:pes_opt}
\begin{aligned}
    \widehat{\theta}^+_{\phase,a} &\coloneqq V^{-1}_\phase\sum_{b\in \supp(\pi_\phase)}b\roundy{ \sigma_\phase(b)Sign\roundy{a^TV^{-1}_\phase b} +  \sum_{t\in S_\phase(b)} \ell_t},\\
    \widehat{\theta}^-_{\phase,a} &\coloneqq V^{-1}_\phase\sum_{b\in \supp(\pi_\phase)} b\roundy{\sigma_\phase(b)Sign\roundy{a^TV_\phase^{-1}b} +  \sum_{t\in S_\phase(b)} \ell_t}
\end{aligned}
\end{align}
where we recall that $V_\phase \triangleq \sum_{a\in\supp(\pi_m)}N_\phase(a)aa^\top$.
\end{definition}

\begin{algorithm}[H]
\caption{Phased Elimination for Dependent Delayed Linear Bandits}\label{alg:loss-dependent}
\begin{algorithmic}[1]
\Require Action set $\A \subset \mathbb{R}^d$
\State Initialize $\A_1 \gets \A$, $\epsilon_1 \gets \tfrac{1}{2}$
\For{$\phase = 1,2,\dots$}
    \State Compute a design $\pi_\phase$ over $\A_\phase$ from \Cref{lem:design}
    \State $N_\phase = \max\curly{\frac{48\log(T)\log\log(n)n^{3/2}}{\epsilon_\phase}, \frac{16n\log\roundy{KT}}{\epsilon_\phase^2}}$
    \State $N_\phase(a) \gets \ceil{N_\phase\piell(a)}$
    \State $S_\phase(a) \gets \emptyset$ for all $a \in \A_\phase$ \Comment{observations collected in phase $\phase$ for action $a$}

    \State Play $N_\phase(a)$ times each $a\in \A$ \alglinelabel{line:first_part}
    \While{there exists $a \in \A_\phase$ such that $|S_\phase(a)| < \roundy{1-\frac{\epsilon_\phase}{\sqrt{n}}}N_\phase(a)$}
        \State Play arbitrary $a\in\A_\phase$
        \State Receive all losses that arrive at round $t$ from the first $N_\phase$ steps
        \For{each newly arrived loss generated in \Cref{line:first_part} from action $a \in \A_\phase$}
            \State Add this loss to $S_\phase(a)$
        \EndFor
    \EndWhile

    \State Set $V_\phase = \sum_{a\in \supp(\pi_m)} N_\phase(a)aa^T$
    \For{$a\in\A_\phase$}  
        \State $\widehat{\theta}^+_{\phase,a} = V^{-1}_\phase\sum_{b\in \supp(\pi_\phase)}b\roundy{ \sigma_\phase(b)Sign\roundy{a^TV^{-1}_\phase b} +  \sum_{t\in S_\phase(b)} \ell_t}$
        \State $\widehat{\theta}^-_{\phase,a} = V^{-1}_\phase\sum_{b\in \supp(\pi_\phase)} b\roundy{\sigma_\phase(b)Sign\roundy{a^TV_\phase^{-1}b} +  \sum_{t\in S_\phase(b)} \ell_t}$
    \EndFor
    
    \State $\displaystyle \A_{\phase+1} \gets 
    \curly{a \in \A_\phase : \inprod{\widehat{\theta}^-_{\phase,a} \,,\, a} - \inprod{\widehat{\theta}^+_{\phase,b} \,,\, b} \le 6\epsilon_\phase \quad \forall b\in \A_\phase}$
    \State $\epsilon_{\phase+1} \gets \epsilon_\phase / 2$
\EndFor
\end{algorithmic}
\end{algorithm}

\subsubsection{Analysis}
In this section, we still use the event $\calE$ as a good event (\Cref{def:E}), like in the previous sections. However, we need another good event defined as follows.

\begin{definition}
Let $\calE'$ be the event that for every $\phase$,
\begin{align*}
    T_\phase \le N_\phase + \max_{a\in \A_\phase} \calQ\roundy{1-\frac{\epsilon_\phase}{3\sqrt{n}}; a}.
\end{align*}
\end{definition}

\begin{lemma}\label{lem:E'}
$\calE'$ holds with probability at least $1-\frac{1}{T}$.
\end{lemma}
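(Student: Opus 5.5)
Fix a phase $\phase$ and write $q_\phase \triangleq 1-\frac{\epsilon_\phase}{3\sqrt n}$ and $Q_\phase \triangleq \max_{a\in\A_\phase}\calQ(q_\phase;a)$. The plan is to show that by time $N_\phase + Q_\phase$ after the start of the phase, every arm $a\in\supp(\pi_\phase)$ has at least $(1-\frac{\epsilon_\phase}{\sqrt n})N_\phase(a)$ of its designated pulls observed. Then the while-loop has terminated, which gives $T_\phase\le N_\phase+Q_\phase$. (The first part actually lasts $\sum_a N_\phase(a)\le 2N_\phase$ rounds by \Cref{lem:N_m}, so I will either absorb the factor into the constant or read $N_\phase$ as the first-part length; this changes nothing in the argument.)

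The key observation is that all designated pulls are made during the first part, so each is issued by time $\sum_a N_\phase(a)$. Any designated pull whose delay is at most $Q_\phase$ is therefore observed by the end of the window. It suffices to show that, for every arm $a$, the number of designated pulls of $a$ with delay exceeding $\calQ(q_\phase;a)$ is at most $\frac{\epsilon_\phase}{\sqrt n}N_\phase(a)$.

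Fix $a$. The delays of its $N_\phase(a)$ designated pulls are i.i.d. from the marginal of $\calD^{\text{joint}}(a)$. Loss dependence is irrelevant here because only the delays matter. Hence the count $X_a$ of pulls with delay $>\calQ(q_\phase;a)$ is a sum of $N_\phase(a)$ Bernoulli variables with mean at most $\frac{\epsilon_\phase}{3\sqrt n}$, so $\E[X_a]\le \frac{\epsilon_\phase}{3\sqrt n}N_\phase(a)$. I would apply a multiplicative Chernoff/Freedman bound (\Cref{lem:cons-freedman}, of the form $X\le 2\E[X]+c\log T$ w.p. $1-1/T^{2}$), giving
\[
X_a \le \tfrac{2\epsilon_\phase}{3\sqrt n}N_\phase(a) + 8\log(T).
\]
To conclude $X_a\le \frac{\epsilon_\phase}{\sqrt n}N_\phase(a)$ we need $\frac{\epsilon_\phase}{3\sqrt n}N_\phase(a)\ge 8\log T$. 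Since $N_\phase(a)\ge N_\phase\pi_\phase(a)\ge \frac{N_\phase}{n\log\log n}$ by \Cref{lem:design}, and $N_\phase\ge \frac{48\log(T)\log\log(n) n^{3/2}}{\epsilon_\phase}$, we get $\frac{\epsilon_\phase}{3\sqrt n}N_\phase(a)\ge 16\log T$, which suffices. This is exactly why the $n^{3/2}/\epsilon_\phase$ term appears in $N_\phase$.

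A union bound over at most $n\log\log n\le T$ support arms and at most $\log T$ phases then gives failure probability at most $1/T$. The main obstacle is this concentration step. The deviation must be dominated by a constant fraction of the tiny mean $\frac{\epsilon_\phase}{3\sqrt n}N_\phase(a)$, so a Hoeffding bound (with deviation of order $\sqrt{N\log T}$) is too weak, and a Bernstein-type multiplicative bound combined with the minimum-mass guarantee of \Cref{lem:design} is essential. If the constants in \Cref{lem:cons-freedman} differ, I would adjust the $1/3$ inside the quantile or the constant $48$ accordingly.
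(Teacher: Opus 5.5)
Your proposal is correct and follows essentially the same route as the paper: a designated pull of $a$ can still be outstanding at time $\sum_b N_\phase(b)+Q_\phase$ only if its delay exceeds $\calQ(1-\frac{\epsilon_\phase}{3\sqrt n};a)$. \Cref{lem:cons-freedman} bounds the number of such pulls by $\frac{2\epsilon_\phase}{3\sqrt n}N_\phase(a)+8\log T$, and the $n^{3/2}/\epsilon_\phase$ term in $N_\phase$, together with the minimum mass of the design, absorbs the additive $8\log T$.

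Your remarks that the first part lasts $\sum_a N_\phase(a)$ rather than $N_\phase$ rounds, and that one must also union bound over the support arms, are legitimate refinements the paper glosses over. One small fix: $(n\log\log n)(\log T)/T^{2}$ need not be at most $1/T$. You can repair this by running \Cref{lem:cons-freedman} with $\delta=T^{-3}$, which your slack of $16\log T\ge 12\log T$ covers.
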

\begin{proof}
Fix some $a\in \supp(\pi_\phase)$. When $T_\phase = N_\phase +  \max_{a\in \A_\phase} \calQ\roundy{1-\frac{\epsilon_\phase}{3\sqrt{n}}; a}$, all $N_\phase(a)$ pulls of it were at least $\max_{a'\in \A_\phase} \calQ\roundy{1-\frac{\epsilon_\phase}{3\sqrt{n}}; a'} \ge \calQ\roundy{1-\frac{\epsilon_\phase}{3\sqrt{n}}; a}$ step ago, we have that each of those pulls has returned w.p at least $\roundy{1-\frac{\epsilon_\phase}{3\sqrt{n}}}$. Thus, the number of missing pulls in the first $N_\phase$ pulls is a sum of Bernoulli random variables, each w.p at least $\frac{\epsilon_\phase}{3\sqrt{n}}$. 

From \Cref{lem:cons-freedman}, w.p $1-\frac{1}{T^2}$, the number of missing is bounded by $\frac{2\epsilon_\phase}{3\sqrt{n}}N_\phase(a) + 8\log(T)$. Since $\pi_\phase(a) \ge \frac{1}{2n\log\log(n)}$, we have:
\begin{align*}
    N_\phase &\ge \frac{48\log(T)\log\log(n)n^{3/2}}{\epsilon_\phase} \\
    N_\phase(a) &\ge \frac{24\log(T)\sqrt{n}}{\epsilon_\phase}\\
    \implies \frac{\epsilon_\phase}{3\sqrt{n}}N_\phase(a) &\ge 8\log(T).
\end{align*}
Thus, the number of missing pulls is bounded by $\frac{\epsilon_\phase}{\sqrt{n}} N_\phase(a)$. Taking a union bound over all $\phase$ concludes the proof w.p $1-\frac{1}{T}$.
\end{proof}

We define $\widehat{\theta}_\phase$ as a helper to bound the optimistic and pessimistic estimators. The idea is to first bound the difference between those estimators and $\widehat{\theta}_\phase$, and then use the already-constructed difference between $\widehat{\theta}_\phase$ and $\thetastar$.
\begin{definition}\label{def:estimator_loss_dep}
For any phase $\phase$ and arm $a\in\supp(\pi_\phase)$, denote $\tilde{S}_\phase(a)$ to be the set of time steps in which the first $N_\phase(a)$ pulls of arm $a$. Then, let $\widehat{\theta}_\phase$ be the estimator of $\thetastar$ in phase $\phase$. Formally,
\begin{align*}
    \widehat{\theta}_\phase \triangleq V_\phase^{-1}\sum_{a\in \supp(\pi_\phase)}a\sum_{t\in \tilde{S}_\phase(a)}\ell_t.
\end{align*}

We note that since it is an estimator constructed from $N_\phase$ i.i.d samples, \Cref{lem:estimator_accuracy} holds for it.
\end{definition}

\estimatorDependent*

\begin{proof}
We short $V \coloneqq V_\phase$, $\widehat{\theta} \coloneqq \widehat{\theta}_\phase$.
We use $M_\phase$ here to denote the missing samples from the first $N_\phase$. That is because afterwards we play arbitrary arms and the samples is discarded anyway.

For each $b\in \supp(\pi_\phase)$,
\begin{align*}
    \inprod{\hat{\theta}^+_a - \hat{\theta},\, a} &= \sum_{b}a^TV^{-1}b\roundy{\sum_{t\in M_{\phase}(b)}Sign\roundy{a^TV^{-1}b} - \ell_t}.\\
\end{align*}

In the same way,
\begin{align*}
    \inprod{\hat{\theta} - \hat{\theta}^-_a,\, a} &= \sum_{b}a^TV^{-1}b\roundy{\sum_{t\in M_\phase({b})}\ell_t - Sign\roundy{a^TV^{-1}b}}.\\
\end{align*}

We first note that since $-1 \le \ell_t \le 1$, both expressions has,
\begin{align*}
    0 &\le \inprod{\hat{\theta}^+_a - \hat{\theta},\, a} \le 2\sum_{b}\sigma_\phase(b)\abs{a^TV^{-1}b},\\
    0 &\le \inprod{\hat{\theta} - \hat{\theta}^-_a,\, a} \le 2\sum_{b}\sigma_\phase(b)\abs{a^TV^{-1}b}.
\end{align*}

To finish the proof of both expressions,
\begin{align*}
 \sum_{b}\sigma_\phase(b)\abs{a^TV^{-1}b}
    &\le \frac{\epsilon_\phase}{\sqrt{n}}\sum_{b}N_\phase(b)\abs{a^TV^{-1}b}\\
    &= \frac{\epsilon_\phase}{\sqrt{n}}\sum_{b}N_\phase(b)\sqrt{a^TV^{-1}bb^TV^{-1}a}\\
    &= \frac{\epsilon_\phase}{\sqrt{n}}\sum_{t=1}^{N_\phase}\sqrt{a^TV^{-1}b_tb_t^TV^{-1}a}\\
    &\le \frac{\epsilon_\phase}{\sqrt{n}}\sqrt{N_\phase a^TV^{-1}\roundy{\sum_{t=1}^{N_\phase}b_tb_t^T}V^{-1}a}\\
    &= \frac{\epsilon_\phase}{\sqrt{n}} \sqrt{N_\phase a^TV^{-1}VV^{-1}a}\\
    &= \frac{\epsilon_\phase}{\sqrt{n}} \sqrt{N_\phase \norm{a}_{V_{-1}}^2}\\
    &\le  \frac{\epsilon_\phase}{\sqrt{n}}\sqrt{n}\\
    &= \epsilon_\phase,
\end{align*}
where the second inequality is Cauchy-Shwartz, and third inequality is \Cref{lem:norm_a_bound}. 
\end{proof}

\begin{corollary}\label{lem:opt_pes_accuracy}
Assume $\calE$ (\Cref{def:E}), For each phase $\phase$ and arm $a\in \A_\phase$:
\begin{align*}
    -\epsilon_\phase &\le \inprod{\widehat{\theta}^+_{\phase,a} - \thetastar,\, a} \le 3\epsilon_\phase, \\
    -\epsilon_\phase &\le \inprod{\thetastar- \widehat{\theta}^-_{\phase,a},\, a} \le 3\epsilon_\phase.
\end{align*} 
\end{corollary}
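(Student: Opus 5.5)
The plan is a triangle-inequality decomposition through the full-sample estimator $\widehat{\theta}_\phase$ of \Cref{def:estimator_loss_dep}. For each $a\in\A_\phase$ we write
\begin{align*}
\inprod{\widehat{\theta}^+_{\phase,a} - \thetastar,\, a} = \inprod{\widehat{\theta}^+_{\phase,a} - \widehat{\theta}_\phase,\, a} + \inprod{\widehat{\theta}_\phase - \thetastar,\, a},
\end{align*}
and similarly $\inprod{\thetastar - \widehat{\theta}^-_{\phase,a},\, a} = \inprod{\thetastar - \widehat{\theta}_\phase,\, a} + \inprod{\widehat{\theta}_\phase - \widehat{\theta}^-_{\phase,a},\, a}$. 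We then bound each of the two summands separately.

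\textbf{Completion term.} The first summand in each decomposition is exactly what \Cref{lem:opt_pes_accuracy_helper} controls: it lies in $[0, 2\epsilon_\phase]$. This is the deterministic part, valid whenever the phase ends with at most an $\epsilon_\phase/\sqrt{n}$ fraction of the designated samples of each supported arm still missing. That is precisely the stopping rule of \Cref{alg:loss-dependent}.

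\textbf{Statistical term.} For the second summand, we invoke \Cref{lem:estimator_accuracy} applied to $\widehat{\theta}_\phase$. The remark after \Cref{def:estimator_loss_dep} allows this, since $\widehat{\theta}_\phase$ is built from the $N_\phase(b)$ designated pulls of each $b$. The designated index sets $\tilde S_\phase(b)$ are fixed before any feedback arrives, so these losses are i.i.d. and unaffected by the loss-dependent censoring. Moreover $N_\phase \ge 16n\log(KT)/\epsilon_\phase^2$, so $|\tilde S_\phase(b)| = N_\phase(b) \ge N_\phase\pi_\phase(b)$ meets the hypothesis of that lemma. Under $\calE$, the event in \Cref{def:E} must be read with the designated sets $\tilde S_\phase$ in place of $S_\phase$. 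This gives $|\inprod{\widehat{\theta}_\phase - \thetastar, a}|\le\epsilon_\phase$.

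\textbf{Combining.} Adding the bounds gives $-\epsilon_\phase \le \inprod{\widehat{\theta}^+_{\phase,a} - \thetastar, a} \le 2\epsilon_\phase+\epsilon_\phase = 3\epsilon_\phase$, and symmetrically for the pessimistic estimator.

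\textbf{Main obstacle.} The only delicate point is justifying that $\calE$ applies to $\widehat{\theta}_\phase$ rather than to an estimator built from the observed set. The observed set $S_\phase$ is selected through delays that depend on the losses, so the noise on it need not be mean zero. The fix is to note that the sum in \Cref{eq:E_eq}, taken over $\tilde S_\phase$, involves indices and coefficients $a^\top V_\phase^{-1}b$ that are measurable at the start of the phase, since they depend only on the design. Hence the Hoeffding and union-bound argument proving $\Pr[\calE]\ge 1-1/T$ goes through verbatim, and the identity \Cref{eq:helper} holds exactly because $|\tilde S_\phase(b)| = N_\phase(b)$.
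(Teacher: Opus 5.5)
Your proof is correct and follows the paper's argument: the paper's proof is the one-line combination of \Cref{lem:opt_pes_accuracy_helper}, which places the completion term in $[0,2\epsilon_\phase]$, with \Cref{lem:estimator_accuracy}, which places $\inprod{\widehat{\theta}_\phase-\thetastar,\, a}$ in $[-\epsilon_\phase,\epsilon_\phase]$, via the same decomposition through $\widehat{\theta}_\phase$. Your remark that $\calE$ must be read on the designated sets $\tilde S_\phase$ rather than on the loss-dependently censored observed sets is a correct clarification; the indices and weights of $\tilde S_\phase$ are fixed at the start of the phase, and the paper only asserts this point in passing after \Cref{def:estimator_loss_dep}.
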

\begin{proof}
Directly from \Cref{lem:opt_pes_accuracy,lem:estimator_accuracy}.
\end{proof}

We now have the desired property - our estimation is $O(\epsilon_\phase)$ accurate. We can safely prove the two important Lemmas for phased elimination: (1) the optimal arm isn't eliminated with high probability and (2) an active arm at phase $\phase$ has a suboptimality of $O(\epsilon_\phase)$.
\begin{lemma}\label{lem:optimal_active2}
Assume $\calE$, for every $\phase$, $a^*\in\A_\phase$.
\end{lemma}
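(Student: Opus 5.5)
We argue by induction on $\phase$, mirroring \Cref{lem:optimal_active} but with the optimistic and pessimistic estimators in place of $\widehat{\theta}_\phase$. The base case $\phase=1$ is immediate since $\A_1=\A$. For the inductive step, assume $a^*\in\A_\phase$. We must show $a^*$ passes the elimination test in \Cref{alg:loss-dependent}, namely that for every $b\in\A_\phase$ the gap between the estimated loss of $a^*$ and the estimated loss of $b$ is at most $6\epsilon_\phase$.

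We use the intended form of the rule from the main text: $a$ is kept iff $\inprod{\widehat{\theta}^-_{\phase,b},b}-\inprod{\widehat{\theta}^+_{\phase,a},a}\le 6\epsilon_\phase$ for all $b$. Here the signs are set so that $\widehat{\theta}^+$ over-estimates and $\widehat{\theta}^-$ under-estimates the loss of the queried arm, as in \Cref{lem:opt_pes_accuracy_helper}. The appendix display has the two superscripts in the opposite roles and the sign inside \Cref{eq:pes_opt} duplicated. We will read these via \Cref{lem:opt_pes_accuracy_helper} and \Cref{lem:opt_pes_accuracy}, which are the only properties needed.

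Under $\calE$, \Cref{lem:opt_pes_accuracy} applied to each of the two arms in the test bounds every estimated loss within $3\epsilon_\phase$ of the true loss $\inprod{\thetastar,\cdot}$ in the relevant direction. Concretely:
\begin{align*}
\inprod{\widehat{\theta}^{\pm}_{\phase,a^*},a^*} &\ge \inprod{\thetastar,a^*}-3\epsilon_\phase \quad\text{and}\quad \le \inprod{\thetastar,a^*}+3\epsilon_\phase,\\
\inprod{\widehat{\theta}^{\pm}_{\phase,b},b} &\ge \inprod{\thetastar,b}-3\epsilon_\phase \quad\text{and}\quad \le \inprod{\thetastar,b}+3\epsilon_\phase.
\end{align*}

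Suppose, toward a contradiction, that $a^*$ were eliminated by some $b\in\A_\phase$. Then the test difference exceeds $6\epsilon_\phase$. Replacing each estimated loss by the true loss plus or minus $3\epsilon_\phase$ in the direction that only enlarges the difference gives
\[
6\epsilon_\phase<\inprod{\thetastar,a^*-b}+6\epsilon_\phase,
\]
so $\inprod{\thetastar,a^*}>\inprod{\thetastar,b}$. This contradicts the optimality of $a^*$. Hence $a^*\in\A_{\phase+1}$, which completes the induction.

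\textbf{Main obstacle.} The difficulty is bookkeeping rather than mathematics. One must check which of $\widehat{\theta}^\pm$ enters the test for $a^*$ and for $b$, and that each error is bounded by $3\epsilon_\phase$ in the direction used. The $3\epsilon_\phase$ comes from the $2\epsilon_\phase$ of \Cref{lem:opt_pes_accuracy_helper} plus the $\epsilon_\phase$ of \Cref{lem:estimator_accuracy}, which holds for $\widehat{\theta}_\phase$ by \Cref{def:estimator_loss_dep}. The two $3\epsilon_\phase$ errors sum exactly to the $6\epsilon_\phase$ threshold, so the argument is tight but goes through with strict inequality.
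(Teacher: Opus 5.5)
Your skeleton (induction on $\phase$, suppose $a^*$ is eliminated by some $b$, then replace each estimated loss by the true loss up to the errors in \Cref{lem:opt_pes_accuracy}) is the same as the paper's. However, the contradiction step does not follow from the elimination rule you explicitly adopt.

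With the main-text rule, and with $\widehat{\theta}^+$ over-estimating and $\widehat{\theta}^-$ under-estimating, elimination of $a^*$ by $b$ reads $\inprod{\widehat{\theta}^-_{\phase,b},b}-\inprod{\widehat{\theta}^+_{\phase,a^*},a^*}>6\epsilon_\phase$. Bound the first term above by $\inprod{\thetastar,b}+3\epsilon_\phase$ and the second below by $\inprod{\thetastar,a^*}-3\epsilon_\phase$. This gives $6\epsilon_\phase<\inprod{\thetastar,b-a^*}+6\epsilon_\phase$, i.e.\ $\inprod{\thetastar,b}>\inprod{\thetastar,a^*}$. That is exactly what optimality of $a^*$ predicts, so there is no contradiction. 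Your display has $a^*-b$ where your own rule produces $b-a^*$.

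This is not a cosmetic sign slip. For losses, the main-text rule is oriented backwards, and the lemma is false for it. Use the other sides of the same bounds: $\inprod{\widehat{\theta}^-_{\phase,b},b}\ge\inprod{\thetastar,b}-3\epsilon_\phase$ and $\inprod{\widehat{\theta}^+_{\phase,a^*},a^*}\le\inprod{\thetastar,a^*}+3\epsilon_\phase$. Then under $\calE$ that rule must eliminate $a^*$ whenever some active $b$ has $\inprod{\thetastar,b-a^*}>12\epsilon_\phase$.

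The rule that \Cref{alg:loss-dependent} uses, and that the paper's proof relies on, is the appendix one. It keeps $a$ iff $\inprod{\widehat{\theta}^-_{\phase,a},a}-\inprod{\widehat{\theta}^+_{\phase,b},b}\le 6\epsilon_\phase$ for all $b$. In words, $a$ is discarded only when even its lower estimate exceeds some $b$'s upper estimate by more than $6\epsilon_\phase$. So the discrepancy you noticed should be resolved the other way: the appendix display is correct, and the main-text display has $a$ and $b$ interchanged. Your observation that the two lines of \Cref{eq:pes_opt} carry the same sign is right, though.

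With the appendix rule, your substitution argument goes through verbatim. In the directions that matter, \Cref{lem:opt_pes_accuracy} gives $\inprod{\widehat{\theta}^-_{\phase,a^*}-\thetastar,a^*}\le\epsilon_\phase$ and $\inprod{\thetastar-\widehat{\theta}^+_{\phase,b},b}\le\epsilon_\phase$, not $3\epsilon_\phase$. This yields $4\epsilon_\phase<\inprod{\thetastar,a^*-b}$, as in the paper. So the argument has $4\epsilon_\phase$ of slack, rather than being tight as your closing remark claims.
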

\begin{proof}
We prove this by induction. The base case is proven by the fact that every $\A_1 = \A$.
Assume by contradiction that $a^*\in \A_\phase$ but $a^*\notin \A_{\phase+1}$, namely it was eliminated in phase $\phase$. This means there is an action $b\in \A$ such that:
\begin{align*}
    \inprod{\widehat{\theta}_{\phase,a^*}^- \,,\, a^*} - \inprod{\widehat{\theta}^+_{\phase,b} \,,\, b}  > 6\epsilon_\phase.
\end{align*}

From \Cref{lem:opt_pes_accuracy}:
\begin{align*}
    6\epsilon_\phase &< \inprod{\widehat{\theta}_{\phase,a^*}^- \,,\, a^*} - \inprod{\widehat{\theta}_{\phase,b}^+ \,,\, b}\\
    &= \inprod{\widehat{\theta}_{\phase,a^*}^- - \thetastar, \, a^*} + \inprod{\thetastar,\, a^* - b} + \inprod{\theta^\star- \widehat{\theta}_{\phase,b}^+, \, b}\\
    &\le \inprod{\thetastar,\, a^* - b} + 2\epsilon_\phase\\
    \implies 4\epsilon_\phase &< \inprod{\thetastar,\, a^* - b},
\end{align*}
which contradicts the fact that $a^*$ is the optimal arm.
\end{proof}

\begin{lemma} \label{lem:active_action2}
Assume $\calE$ and action $a$ such that $a\in\A_\phase$. Then, $\Delta(a) \le 24\epsilon_\phase$
\end{lemma}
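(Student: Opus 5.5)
We argue exactly as in \Cref{lem:active_action}, replacing the single estimator $\widehat{\theta}_{\phase-1}$ by the per-arm optimistic and pessimistic estimators and using \Cref{lem:opt_pes_accuracy} in place of \Cref{lem:estimator_accuracy}. If $\phase=1$ the claim is trivial, since $\Delta(a)\le 2\le 24\epsilon_1=12$. So take $\phase\ge 2$. By \Cref{lem:optimal_active2}, under $\calE$ we have $a^*\in\A_{\phase-1}$. Since $a\in\A_\phase$, the arm $a$ survived the elimination test of phase $\phase-1$, and we instantiate that test with the comparison arm $b=a^*\in\A_{\phase-1}$.

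The elimination rule must be read consistently with the proof of \Cref{lem:optimal_active2}. That proof treats the test as comparing the pessimistic estimate of the tested arm against the optimistic estimate of a competitor, and it only ever applies the lower side of each bound in \Cref{lem:opt_pes_accuracy}. Reading the rule in this way, survival of $a$ gives
\begin{align*}
6\epsilon_{\phase-1} \ge \inprod{\widehat{\theta}^-_{\phase-1,a},\, a} - \inprod{\widehat{\theta}^+_{\phase-1,a^*},\, a^*}.
\end{align*}

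We decompose the right-hand side as
\begin{align*}
\inprod{\widehat{\theta}^-_{\phase-1,a}-\thetastar,\, a} + \inprod{\thetastar,\, a-a^*} + \inprod{\thetastar-\widehat{\theta}^+_{\phase-1,a^*},\, a^*}.
\end{align*}
By \Cref{lem:opt_pes_accuracy}, the first term is at least $-3\epsilon_{\phase-1}$ (the upper side of the pessimistic bound) and the third term is at least $-3\epsilon_{\phase-1}$ (the upper side of the optimistic bound). Hence
\begin{align*}
\Delta(a)=\inprod{\thetastar,\, a-a^*} \le 12\epsilon_{\phase-1} = 24\epsilon_\phase.
\end{align*}
This is exactly the constant in the statement, which supports this reading of the test.

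The only delicate point is the sign convention in the elimination rule. The main text and \Cref{alg:loss-dependent} write it with the roles of $\pm$ arranged differently, and the definition in \Cref{eq:pes_opt} prints the same sign for both estimators (presumably $\widehat{\theta}^-$ should carry $-Sign(\cdot)$). I will fix the convention used in \Cref{lem:optimal_active2}: the tested arm's pessimistic estimate against the competitor's optimistic one. I will then check that both directions use only the two-sided bounds of \Cref{lem:opt_pes_accuracy}. Those bounds are $[-\epsilon,3\epsilon]$ on each side, so whichever side is invoked costs at most $3\epsilon_{\phase-1}$. Any consistent convention therefore yields $\Delta(a)\le (6+3+3)\epsilon_{\phase-1}=24\epsilon_\phase$.
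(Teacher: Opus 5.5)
Your proof is correct and follows the paper's argument exactly. As in the paper, you instantiate the phase-$(m-1)$ survival test of $a$ with $b=a^*$ (retained by \Cref{lem:optimal_active2}), decompose, and apply the upper sides of the two-sided bounds in \Cref{lem:opt_pes_accuracy} to get $\Delta(a)\le 12\epsilon_{m-1}=24\epsilon_m$. Your separate treatment of $m=1$ and your indexing of the estimators by phase $m-1$ (the paper's proof writes $\widehat{\theta}^{\pm}_{m,\cdot}$ alongside $6\epsilon_{m-1}$) are small cleanups of the same argument.
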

\begin{proof}
From \Cref{lem:optimal_active2} also $a^*\in\A_\phase$. If $a\in \A_\phase$ it means that,
\begin{align*}
    6\epsilon_{\phase-1} &\ge \inprod{\widehat{\theta}_{\phase,a}^- \,,\, a} - \inprod{\widehat{\theta}_{\phase,a^*}^+ \,,\, a^*} \\
    &= \inprod{\widehat{\theta}_{\phase,a}^- - \thetastar, \, a} + \inprod{\thetastar,\, a - a^*} + \inprod{\theta^\star- \widehat{\theta}_{\phase,a^*}^+, \, a^*}\\
    &\ge \inprod{\thetastar,\, a - a^*} - 6\epsilon_{\phase-1}\\
    \implies \Delta(a) &\le 12\epsilon_{\phase-1} \le 24\epsilon_\phase,
\end{align*}
the second inequality is from \Cref{lem:opt_pes_accuracy}.
\end{proof}

Finally, we prove our main result \pref{thm:lossDependGeneral} for the loss-dependent delay case.
\lossDependGeneral*
\begin{proof}
Assume $\calE$ (\Cref{def:E}) and $\calE'$.

Notice that a phase with $T_\phase = T$ will be the last. Thus:
\begin{align*}
    T &\ge T_\phase \ge N_\phase \ge \frac{16n\log(KT)}{\epsilon_\phase^2}\\
    \implies \phase &< \frac{1}{2}\log\roundy{\frac{T}{16n\log(KT)}},
\end{align*}
and denote the last phase as $\phase_{end}$.

From Markov inequality,
\begin{align*}
    \calQ\roundy{1-\frac{\epsilon_\phase}{3\sqrt{n}}; a} \le \frac{3\sqrt{n}\E[d]}{\epsilon_\phase}.
\end{align*}

Thus, $\calE'$ becomes,
\begin{align*}
    T_\phase \le N_\phase + \max_{a\in \A_\phase}   \frac{3\sqrt{n}\E[d(a)]}{\epsilon_\phase}.
\end{align*}

From \Cref{lem:active_action2},
\begin{align*}
    \R_T &\le \sum_{\phase=1}^{\phase_{end}}24T_\phase\epsilon_\phase\\
    &\le \sum_{\phase=1}^{\phase_{end}}24\epsilon_\phase\roundy{N_\phase + \max_{a\in \A_\phase} \frac{3\sqrt{n}\E[d(a)]}{\epsilon_\phase}}\\
    &\le \sum_{\phase=1}^{\phase_{end}}24\epsilon_\phase\roundy{\underbrace{\frac{48\log(T)\log\log(n)n^{3/2} + \max_{a\in \A_\phase} 3\sqrt{n}\E[d(a)]}{\epsilon_\phase}}_{(i)} + \underbrace{\frac{16n\log\roundy{KT}}{\epsilon_\phase^2}}_{(ii)}}.\\
\end{align*}

We bound each term separately. To bound $(i)$, direct calculation shows that
\begin{align*}
    (i) &= O\roundy{\sum_{\phase=1}^{\phase_{end}}{\epsilon_{\phase}\frac{\log(T)\log\log(n)n^{3/2} + \max_{a\in \A_\phase} \sqrt{n}\E[d(a)]}{\epsilon_\phase}}}\\
    &\le O\roundy{\log(T)^2\log\log(n)n^{3/2} + \max_{a\in \A_\phase} \log(T)\sqrt{n}\E[d(a)]},
\end{align*}
where we use the fact that $m_{end} \le \log(T)$.

To bound $(ii)$,
\begin{align*}
    (ii) = \sum_{\phase=1}^{\phase_{end}}O\roundy{n\log\roundy{KT}2^{-\phase}}\le O\roundy{n\log\roundy{KT}\sqrt{\frac{T}{n\log(KT)}}}= O\roundy{\sqrt{nT\log(KT)}}.
\end{align*}

Alternatively, we can write,
\begin{align*}
    \R_T &\le \sum_{\phase=1}^{\phase_{end}}24T_\phase\epsilon_\phase\\
    &\le \sum_{\phase=1}^{\phase_{end}}24\epsilon_\phase\roundy{N_\phase + \max_{a\in \A_\phase} \calQ\roundy{1-\frac{\epsilon_\phase}{3\sqrt{n}}}; a}\\
    &\le \sum_{\phase=1}^{\phase_{end}}24\epsilon_\phase\roundy{\underbrace{\frac{48\log(T)\log\log(n)n^{3/2}}{\epsilon_\phase}}_{(i)} + \underbrace{\frac{16n\log\roundy{KT}}{\epsilon_\phase^2}}_{(ii)} + \underbrace{\max_{a\in \A_\phase} \calQ\roundy{1-\frac{\epsilon_\phase}{3\sqrt{n}}; a}}_{(iii)}}.\\
\end{align*}

The bound of $(i)$ and $(ii)$ is similar to above. To bound $(iii)$, from \Cref{lem:active_action2} and the monotonicity of the CDF for every $\phase$ and $a\in\A_\phase,\,a\ne a^*$
\begin{align*}
    \calQ\roundy{1 - \frac{\epsilon_\phase}{3\sqrt{n}}; a} \le \calQ\roundy{1 - \frac{\Delta(a)}{72\sqrt{n}}; a}.
\end{align*}

If $\abs{\A_\phase} = 1$, there is no regret from this phase from \Cref{lem:optimal_active2}.
If $\abs{\A_\phase} \ge 2$, it means that $\epsilon_\phase \ge 16\Delta_{\min}$, thus,
\begin{align*}
    \calQ\roundy{1 - \frac{\epsilon_\phase}{3\sqrt{n}}; a^*} \le \calQ\roundy{1 - \frac{\Delta_{\min}}{72\sqrt{n}}; a^*},
\end{align*}
which means:
\begin{align*}
    \max_{a\in \A_\phase} \calQ\roundy{1-\frac{\epsilon_\phase}{3\sqrt{n}}; a^*} \le \max_{a\ne a^*}\calQ\roundy{1-\frac{\Delta(a)}{72\sqrt{n}}; a} + \calQ\roundy{1 - \frac{\Delta_{\min}}{72\sqrt{n}}; a^*}.
\end{align*}

We now bound $(iii)$ as follows
\begin{align*}
    (iii) &= \sum_{\phase=1}^{\phase_{end}}O\roundy{\epsilon_\phase \max_{a\in \A_\phase} \calQ\roundy{1-\frac{\epsilon_\phase}{3\sqrt{n}}}; a}\\
    &\le \sum_{\phase=1}^{\phase_{end}}O\roundy{2^{-\phase}\max_{a\ne a^*}\calQ\roundy{1-\frac{\Delta(a)}{72\sqrt{n}}; a} + \calQ\roundy{1 - \frac{\Delta_{\min}}{72\sqrt{n}}; a^*}}\\
    &\le\sum_{\phase=1}^{\infty}O\roundy{2^{-\phase} \max_{a\ne a^*}\calQ\roundy{1-\frac{\Delta(a)}{72\sqrt{n}}; a} + \calQ\roundy{1 - \frac{\Delta_{\min}}{72\sqrt{n}}; a^*}}\\
    &= O\roundy{\max_{a\ne a^*}\calQ\roundy{1-\frac{\Delta(a)}{72\sqrt{n}}; a} + \calQ\roundy{1 - \frac{\Delta_{\min}}{72\sqrt{n}}; a^*}},
\end{align*}
which finishes the proof.

\end{proof}

\subsection{Omitted Details in \Cref{sec:lower}}\label{app:lower}
In this section, we provide the omitted proof for the $\tilde\Omega(\sqrt{n}d)$ lower bound in the general loss dependent delay case.
\lowerBoundGeneral*

\begin{proof}
Denote $q = \sqrt{\frac{8\log(\overline{d}n)}{n}}$. Since $n \ge 32\log(\overline{d}n)$, $q \in (0, \frac{1}{2}]$. 

We use the action set $\A$ in \Cref{lem:hard_action_set} of size $K=\ceil{2\overline{d}/q}$. Notice that:
\begin{align*}
    \log(K) \le \log\roundy{2\overline{d}\sqrt{\frac{n}{8\log(\overline{d}n)}} + 1} \le \log(\overline{d}n).
\end{align*}

The parameter $\thetastar$ is a drawn uniformly from $\A$ and multiplied by $-1$. Thus, we have in the $i^*$th instance:
\begin{align*}
    \inprod{a_{i^*}, \,\thetastar} &= -1, \\
    \abs{\inprod{a_i, \, \thetastar}} &\le \sqrt{\frac{8\log(K)}{n}} \le q\qquad \forall i\ne i^*.
\end{align*}
Denote $\mu_a \coloneqq \inprod{a_i, \, \thetastar}$.

The delay and loss distribution is:
\begin{align*}
    \Pr[\ell_t = 1, \,d_t = 0] &= \frac{1-q}{2}, \\
    \Pr[\ell_t = -1, \,d_t = 0] &= \frac{1-q}{2}, \\
    \Pr[\ell_t = 1, \,d_t = \overline{d}/q] &= \frac{q+\mu_a}{2}, \\
    \Pr[\ell_t = -1, \,d_t = \overline{d}/q] &= \frac{q-\mu_a}{2}. \\
\end{align*}
First notice that since $\abs{\mu_a} \le q  < 1$, all probabilities are legal.

Second, we make sure the expectation is correct. Indeed, for loss expectation we have:
\begin{align*}
    1 \roundy{\frac{1-q}{2} + \frac{q+\mu_a}{2}} - 1 \roundy{\frac{1-q}{2} + \frac{q-\mu_a}{2}} = \frac{1 + \mu_a}{2} - \frac{1-\mu_a}{2} = \mu_a,
\end{align*}

and for delay expectation we have:
\begin{align*}
    \frac{\overline{d}}{q}\roundy{\frac{q+\mu_a}{2} + \frac{q-\mu_a}{2}} + 0\roundy{\frac{1-q}{2} + \frac{1-q}{2}} = \overline{d}.
\end{align*}

Crucially, conditioned on $d_t=0$, for every $i\ne i^*$ the loss is $1$ w.p $1/2$ and $-1$ w.p $1/2$. Additionally, the delay has probability exactly $q$. Thus, no matter which arm was played as long as it is not $i^*$, the results is the same until $\overline{d}/q$. Therefore, until $t=\overline{d}/q$, we can think that the only received feedback is that $a_t \ne a_{i^*}$.

Fix an algorithm and its randomness, and denote $\A'$ to be the first $\overline{d}/q$ actions that are played if there is no feedback. If $i^*$ isn't there, there regret is at least $\Omega\roundy{\overline{d}/q}$. Since there are more than $2\overline{d}/q$ instances this is true w.p $\ge 1/2$. Since this is true for every randomness of the algorithm, it is true unconditionally.
\end{proof}

\section{MAB with Loss Dependent Delay}\label{sec:mab-loss-dependent}
In this appendix we create an algorithm, with the same structure as \Cref{alg:loss-dependent}, but specifically for multi armed bandit. The idea is that we use the geometry of the problem so that it suffices to have $1-\epsilon_\phase$ quantile of the information. 

The regret bound of \citet{lancewicki2021stochastic} depends on the $(1-\Delta)$ qunatile of the delay. Since this quantile isn't multiplied by $\Delta$, we can't use Markov inequaility to bound it with $\E[d]$. However, we show here that if you (1) use Phased Elimination instead of Successive Elimination and (2) do the Markov inequality in an earlier stage (not after having the final bound), we can indeed have an additive term that depends only on the expected delay.

We denote the mean loss of each arm $\mu(a)$, $\Delta(a) = \mu(a) - \mu(a^*)$.

\begin{algorithm}[H]
\caption{Phased Elimination for Dependent Delayed MAB}\label{alg:loss-dependent-mab}
\begin{algorithmic}[1]
\Require Action set $\A$
\State Initialize $\A_1 \gets \A$, $\epsilon_1 \gets \tfrac{1}{2}$
\For{$\phase = 1,2,\dots$}
    \State $N_\phase =  \frac{24\log\roundy{KT}}{\epsilon_\phase^2}$
    \State $S_\phase(a) \gets \emptyset$ for all $a \in \A_\phase$ \Comment{observations collected in phase $\phase$ for action $a$}

    \State Play $N_\phase$ times each $a\in \A$
    \While{there exists $a \in \A_\phase$ such that $|S_\phase(a)| < \roundy{1-\epsilon_\phase}N_\phase$}
        \State Play arbitrary $a\in\A_\phase$
        \State Receive all losses that arrive at round $t$ from the first $N_\phase$ steps
        \For{each newly arrived loss generated in phase $\phase$ from action $a \in \A_\phase$}
            \State Add this loss to $S_\phase(a)$
        \EndFor
    \EndWhile

    \For{$a\in \A_\phase$}
    \State $\widehat{\mu}^+(a)_\phase = \frac{1}{N_m}\roundy{\sigma_\phase(a) +  \sum_{t\in S_\phase} \ell_t}$
    \State $\widehat{\mu}^-(a)_\phase = \frac{1}{N_m(a)}\roundy{ - \sigma_\phase(a) +  \sum_{t\in S_\phase} \ell_t}$
    \EndFor
    
    \State $\displaystyle \A_{\phase+1} \gets 
    \curly{a \in \A_\phase : \widehat{\mu}^-(a) - \widehat{\mu}^+(b) \le 4\epsilon_\phase \quad \forall b\in \A_\phase}$
    \State $\epsilon_{\phase+1} \gets \epsilon_\phase / 2$
\EndFor
\end{algorithmic}
\end{algorithm}

We start by defining two good events - the first controls the length of the phase, namely how much time we should wait for that $1- \epsilon_\phase$ quantile of information, and the second controls the estimation error of each phase $\phase$.
\begin{definition}
Let $\calE'_\texttt{MAB}$ be the event that for every $\phase$:
\begin{align*}
    T_\phase \le nN_\phase + \max_{a\in \A_\phase} \calQ\roundy{1-\frac{\epsilon_\phase}{3}; a}.
\end{align*}
\end{definition}

\begin{lemma}\label{lem:E'-mab}
$\calE'_{\texttt{MAB}}$ is true with probability at least $1-\frac{1}{T}$.
\end{lemma}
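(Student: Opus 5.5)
The plan is to mirror the proof of \Cref{lem:E'}, using the MAB design in which every active arm is pulled exactly $N_\phase$ times. First I will bound the length of the first part of phase $\phase$. Every arm in $\A_\phase$ is pulled $N_\phase$ times, and in MAB $|\A_\phase|\le K$; I read the $n$ in the definition of $\calE'_{\texttt{MAB}}$ as the number of arms. So the first part lasts at most $nN_\phase$ rounds.

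Next I fix an arm $a\in\A_\phase$ and consider the time $T_\phase = nN_\phase + \max_{a'\in\A_\phase}\calQ(1-\epsilon_\phase/3; a')$. At that time, each of the $N_\phase$ designated pulls of $a$ was made at least $\calQ(1-\epsilon_\phase/3;a)$ rounds earlier. Hence each pull's feedback has arrived with probability at least $1-\epsilon_\phase/3$. The pairs $(\ell_t,d_t)$ are i.i.d. given the arm, and the pulls in the first part are scheduled deterministically. The indicators that a pull is still missing are therefore independent Bernoulli variables, each with mean at most $\epsilon_\phase/3$. Loss dependence does not matter here, because we only count arrivals and do not use their values.

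Then I apply \Cref{lem:cons-freedman}. With probability $1-1/T^2$, the number of missing pulls of $a$ is at most $\frac{2\epsilon_\phase}{3}N_\phase + 8\log T$. It remains to check that $8\log T\le \frac{\epsilon_\phase}{3}N_\phase$. With $N_\phase = 24\log(KT)/\epsilon_\phase^2$ we get $\frac{\epsilon_\phase}{3}N_\phase = 8\log(KT)/\epsilon_\phase \ge 8\log T$, since $\epsilon_\phase\le 1$. So at most $\epsilon_\phase N_\phase$ pulls are missing, which means $|S_\phase(a)|\ge(1-\epsilon_\phase)N_\phase$. This is exactly the while-loop termination condition, so the phase has ended by time $T_\phase$.

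Finally, I take a union bound over $a\in\A_\phase$ and over the at most $\log T$ phases, rescaling the per-event confidence to $1/(KT^2)$ if needed. This gives the $1-1/T$ guarantee. The only delicate step is the independence needed for the concentration bound. I would justify it as in \Cref{lem:E'}: the number and timing of the pulls in each phase are fixed at the start of the phase, so the missing indicators form a martingale-difference-friendly sequence to which \Cref{lem:cons-freedman} applies.
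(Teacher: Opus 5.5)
Your proposal is correct and takes essentially the same route as the paper. Fix an arm and observe that by time $nN_\phase+\max_{a'}\calQ(1-\epsilon_\phase/3;a')$ each of its $N_\phase$ designated pulls has returned with probability at least $1-\epsilon_\phase/3$; then apply \Cref{lem:cons-freedman} to get at most $\frac{2\epsilon_\phase}{3}N_\phase+8\log T\le\epsilon_\phase N_\phase$ missing pulls, and take a union bound. Your write-up is slightly more careful than the paper's on three points it leaves implicit or mistyped: the $nN_\phase$ bound on the first part, the missing probability being at most (not at least) $\epsilon_\phase/3$, and the rescaling to $1/(KT^2)$ for the union bound over arms, under which the check $8\log(KT)/\epsilon_\phase\ge 4\log(KT^2)$ still holds.
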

\begin{proof}
Fix some $a\in \A_\phase$. Since all $N_\phase$ pulls of it were at least $\max_{a'\in \A_\phase} \calQ\roundy{1-\frac{\epsilon_\phase}{3}; a'} \ge \calQ\roundy{1-\frac{\epsilon_\phase}{3}; a}$ step ago, we have that each of those pulls has returned w.p at least $\roundy{1-\frac{\epsilon_\phase}{3}}$. Thus, the number of missing pulls is a sum of Bernoulli random variables, each w.p at least $\frac{\epsilon_\phase}{3}$. 

From \Cref{lem:cons-freedman}, w.p $1-\frac{1}{T^2}$, the number of missing is bounded by $\frac{2\epsilon_\phase}{3}N_\phase + 8\log(T)$. To show the number of missing is bounded by $\epsilon_\phase N_\phase$ we need to show that $8\log(T) \le \frac{\epsilon_\phase}{3}N_\phase$. Indeed:
\begin{align*}
    \frac{\epsilon_\phase}{3}N_\phase = \frac{\epsilon_\phase}{3}\frac{24\log\roundy{KT}}{\epsilon_\phase^2} \ge 8\log(T).
\end{align*}

Union bound for all $\phase$ concludes the proof with probability at least $1-\frac{1}{T}$.
\end{proof}

\begin{definition}\label{lem:opt_pes_accuracy_helper2}
Let $\calE_{\texttt{MAB}}$ be the event that, for every phase $\phase$ and $a\in \A_\phase$,
\begin{align*}
    -\epsilon_\phase &\le \widehat{\mu}^+_\phase(a) - \mu(a) \le 3\epsilon_\phase, \\
    -\epsilon_\phase &\le \mu(a) - \widehat{\mu}_\phase^-(a) \le 3\epsilon_\phase.
\end{align*}  
\end{definition}

\begin{lemma}
$\calE_{\texttt{MAB}}$ is true with probability at least $1-\frac{1}{T}$.
\end{lemma}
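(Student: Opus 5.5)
The plan is to mirror the linear argument (\Cref{lem:opt_pes_accuracy_helper} combined with \Cref{lem:estimator_accuracy}), specialized to MAB where each arm is estimated separately. We work on $\calE'_{\texttt{MAB}}$ and introduce, for each phase $\phase$ and arm $a\in\A_\phase$, the \emph{full-information} estimator
\[
\widehat{\mu}_\phase(a) \triangleq \frac{1}{N_\phase}\sum_{t\in\tilde S_\phase(a)}\ell_t ,
\]
where $\tilde S_\phase(a)$ are the $N_\phase$ designated pulls of $a$ made in the first part of the phase. We then split the error as
\[
\widehat{\mu}^{\pm}_\phase(a)-\mu(a) = \bigl(\widehat{\mu}^{\pm}_\phase(a)-\widehat{\mu}_\phase(a)\bigr)+\bigl(\widehat{\mu}_\phase(a)-\mu(a)\bigr),
\]
and aim to show the first term lies in $[0,2\epsilon_\phase]$ (with the appropriate sign) and the second in $[-\epsilon_\phase,\epsilon_\phase]$.

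For the statistical term, the key point is that the designated pulls are fixed before any feedback arrives. So, although the delays depend on the losses, the losses in $\tilde S_\phase(a)$ are $N_\phase$ i.i.d.\ draws in $[-1,1]$ with mean $\mu(a)$. Hoeffding's inequality with $N_\phase = 24\log(KT)/\epsilon_\phase^2$ gives $|\widehat{\mu}_\phase(a)-\mu(a)|\le\epsilon_\phase$ with probability at least $1-1/(KT^2)$. A union bound over at most $K$ arms and at most $T$ phases then yields probability at least $1-1/T$.

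For the censoring term, write $M_\phase(a)=\tilde S_\phase(a)\setminus S_\phase(a)$, so $|M_\phase(a)|=\sigma_\phase(a)$. Reading both estimators as normalized by $N_\phase$ (the formal algorithm has a $N_\phase(a)$ typo), we get
\[
\widehat{\mu}^+_\phase(a)-\widehat{\mu}_\phase(a)=\frac{1}{N_\phase}\sum_{t\in M_\phase(a)}(1-\ell_t)\in\Bigl[0,\tfrac{2\sigma_\phase(a)}{N_\phase}\Bigr],
\]
and symmetrically $\widehat{\mu}_\phase(a)-\widehat{\mu}^-_\phase(a)\in[0,2\sigma_\phase(a)/N_\phase]$, because $|\ell_t|\le 1$. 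The phase stopping rule guarantees $\sigma_\phase(a)\le\epsilon_\phase N_\phase$ deterministically at the end of the phase, so this deviation is at most $2\epsilon_\phase$. Combining the two pieces gives $-\epsilon_\phase\le\widehat{\mu}^+_\phase(a)-\mu(a)\le 3\epsilon_\phase$, and the same bound for $\mu(a)-\widehat{\mu}^-_\phase(a)$.

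The main obstacle is justifying the i.i.d.\ claim for the designated sample despite loss-dependent delays. The argument is that the algorithm's decision of which rounds form $\tilde S_\phase(a)$ is measurable with respect to information available at the start of the phase, so the pair $(\ell_t,d_t)$ at those rounds is a fresh draw from $\calD^{\text{joint}}(a)$. Only the subset of these losses that is \emph{observed} is biased, and that bias is entirely absorbed by the worst-case completion in the censoring term. A secondary check is that the stopping condition holds at the end of every phase that terminates. For the final, possibly truncated, phase the event is only needed for phases whose elimination step is actually executed, so it holds there as well.
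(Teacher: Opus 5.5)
Your proof is correct and follows the paper's argument essentially step for step. You compare to the full-information average $\widehat{\mu}_\phase(a)$ of the $N_\phase$ designated pulls, bound the censoring term by $2\sigma_\phase(a)/N_\phase\le 2\epsilon_\phase$ using $|\ell_t|\le 1$ and the stopping rule, and finish with Hoeffding and a union bound over arms and phases.

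The opening restriction to $\calE'_{\texttt{MAB}}$ is superfluous. You never use it, since the stopping rule already gives $\sigma_\phase(a)\le\epsilon_\phase N_\phase$ deterministically, and keeping it would only weaken the probability bound to $1-2/T$.
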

\begin{proof}
Fix $a\in \A_\phase$. Denote $\widehat{\mu}_\phase(a)$ be the average loss of each arm in the first $N_\phase$ pulls of arm $a$.

Since $-1 \le \ell_t \le 1$, both expressions above are bounded from below by $0$. For the other side:
\begin{align*}
    \widehat{\mu}_\phase^+(a) - \widehat{\mu}_\phase(a) &= \frac{1}{N_\phase}\sum_{t\in M_\phase(a)} 1 - \ell_t \le \frac{2}{N_\phase}\sigma_\phase(a)
\end{align*}

\begin{align*}
    \widehat{\mu}_\phase(a) - \widehat{\mu}_\phase^-(a) &= \frac{1}{N_\phase}\sum_{t\in M_\phase(a)} \ell_t + 1 \le \frac{2}{N_\phase}\sigma_\phase(a).
\end{align*}
By the definition of the phase $\frac{\sigma_\phase(a)}{N_\phase} \le \epsilon_\phase$.

Thus, we only need to bound $\abs{\widehat{\mu}_\phase - \mu}$ at the end of phase $\phase$. Using Hoeffding, w.p $1 - \frac{1}{T^2K}$,
\begin{align*}
   \abs{\widehat{\mu}_\phase - \mu} \le  \epsilon_\phase.
\end{align*}

Union bound on all arms and phases concludes the proof.
\end{proof}

We now do that two classical Lemmas of Successive Elimination - optimal arm isn't eliminated with high probability and non-eliminated suboptimal arms have a bounded suboptimality.
\begin{lemma}\label{lem:optimal_active3}
Assume $\calE_{\texttt{MAB}}$, for every $\phase$, $a^*\in \A_\phase$.
\end{lemma}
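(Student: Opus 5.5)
We argue by induction on the phase index $\phase$, exactly as in \Cref{lem:optimal_active} and \Cref{lem:optimal_active2}. The base case is immediate because $\A_1 = \A$ contains $a^*$.

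For the inductive step, assume $a^* \in \A_\phase$ and suppose, for contradiction, that $a^*$ is removed at the end of phase $\phase$. Under the elimination rule of \Cref{alg:loss-dependent-mab}, this means some $b \in \A_\phase$ satisfies
\[
\widehat{\mu}^-_\phase(a^*) - \widehat{\mu}^+_\phase(b) > 4\epsilon_\phase .
\]
The plan is to split the left-hand side as
\[
\bigl(\widehat{\mu}^-_\phase(a^*) - \mu(a^*)\bigr) + \bigl(\mu(a^*) - \mu(b)\bigr) + \bigl(\mu(b) - \widehat{\mu}^+_\phase(b)\bigr).
\]

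Under $\calE_{\texttt{MAB}}$ (\Cref{lem:opt_pes_accuracy_helper2}), each of the two outer terms is at most $\epsilon_\phase$:
\begin{itemize}
\item From $-\epsilon_\phase \le \mu(a^*) - \widehat{\mu}^-_\phase(a^*)$ we get $\widehat{\mu}^-_\phase(a^*) - \mu(a^*) \le \epsilon_\phase$.
\item From $-\epsilon_\phase \le \widehat{\mu}^+_\phase(b) - \mu(b)$ we get $\mu(b) - \widehat{\mu}^+_\phase(b) \le \epsilon_\phase$.
\end{itemize}
Substituting these into the decomposition gives
\[
4\epsilon_\phase < \mu(a^*) - \mu(b) + 2\epsilon_\phase, \qquad\text{so}\qquad \mu(a^*) - \mu(b) > 2\epsilon_\phase > 0 .
\]
This contradicts the optimality of $a^*$, which gives $\mu(a^*) \le \mu(b)$. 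Hence $a^* \in \A_{\phase+1}$, completing the induction.

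The only point needing care is reading the inequalities in the right direction. Note that the lower-side bound $-\epsilon_\phase$ in $\calE_{\texttt{MAB}}$ matters here, not the $3\epsilon_\phase$ upper-side bound. That lower-side bound rests on the nonnegativity of the optimistic and pessimistic corrections, which holds because $\abs{\ell_t} \le 1$. I would also check that $\calE_{\texttt{MAB}}$ is stated uniformly over all phases and all active arms, so that it can be invoked at every step of the induction without a further union bound.
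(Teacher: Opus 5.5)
Your proof is correct and matches the paper's argument: induction on $\phase$, the same three-term decomposition of $\widehat{\mu}^-_\phase(a^*) - \widehat{\mu}^+_\phase(b)$, and the one-sided bounds from $\calE_{\texttt{MAB}}$, yielding a contradiction with the optimality of $a^*$. The only difference is cosmetic. You bound the two outer terms by $\epsilon_\phase$ each, for a total of $2\epsilon_\phase$, whereas the paper uses the looser total $4\epsilon_\phase$ and concludes only $0 < \mu(a^*) - \mu(b)$; this suffices equally.
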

\begin{proof}
We prove by induction on the phases. The base is just the fact that $a^*\in \A$. Assume $a^*\in \A_{m}$ and assume by contradiction that $a^*\notin \A_{\phase+1}$. Thus, there is an action $a\in \A_\phase$ such that:
\begin{align*}
    \widehat{\mu}_\phase^-(a^*) - \widehat{\mu}_\phase^+(a) > 4\epsilon_\phase.
\end{align*}

From $\calE_{\texttt{MAB}}$:
\begin{align*}
    4\epsilon_\phase &< \widehat{\mu}_\phase^-(a^*) - \widehat{\mu}_\phase^+(a)\\
    &= (\widehat{\mu}_\phase^-(a^*) - \mu(a^*)) + (\mu(a^*) - \mu(a)) + (\mu(a) - \widehat{\mu}_\phase^+(a)) \\
    &\le \mu(a^*) - \mu(a) + 4\epsilon_\phase\\
    \implies 0 <& \mu(a^*) - \mu(a),
\end{align*}
which contradicts the fact that $a^*$ is optimal.
\end{proof}

\begin{lemma}\label{lem:active_action3}
Assume $\calE_{\texttt{MAB}}$ and arm $a$ such that $a\in \A_\phase$. Then, $\Delta(a) \le 16\epsilon_\phase$.
\end{lemma}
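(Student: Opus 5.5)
Throughout I assume $\calE_{\texttt{MAB}}$, so \Cref{lem:optimal_active3} applies and $a^*\in\A_{\phase'}$ for every phase $\phase'$. The plan follows \Cref{lem:active_action} and \Cref{lem:active_action2}: use the fact that $a$ survived the elimination at the end of phase $\phase-1$, with $a^*$ as the comparator.

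\textbf{Step 1 (base case).} For $\phase=1$ we have $\epsilon_1=1/2$. Since losses lie in $[-1,1]$, $\Delta(a)\le 2\le 16\epsilon_1$, so the claim holds trivially.

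\textbf{Step 2 (survival inequality).} For $\phase\ge 2$, $a\in\A_\phase$ means $a$ passed the test of phase $\phase-1$ against $b=a^*\in\A_{\phase-1}$:
\[
\widehat{\mu}^-_{\phase-1}(a)-\widehat{\mu}^+_{\phase-1}(a^*)\le 4\epsilon_{\phase-1}.
\]
I would split this as in the earlier proofs:
\[
\bigl(\widehat{\mu}^-_{\phase-1}(a)-\mu(a)\bigr)+\Delta(a)+\bigl(\mu(a^*)-\widehat{\mu}^+_{\phase-1}(a^*)\bigr).
\]
Together with $8\epsilon_{\phase-1}=16\epsilon_\phase$, it then suffices to show that the two bracketed error terms together are at least $-4\epsilon_{\phase-1}$.

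\textbf{Step 3 (bounding the two error terms).} Using only the stated form of $\calE_{\texttt{MAB}}$, each bracket is at least $-3\epsilon_{\phase-1}$, which would give $10\epsilon_{\phase-1}=20\epsilon_\phase$. That is not enough, so I would open up the proof of the $\calE_{\texttt{MAB}}$ lemma instead. Write $\widehat{\mu}_{\phase-1}(\cdot)$ for the full-sample average over the first $N_{\phase-1}$ pulls. Exactly, with $M$ the set of missing pulls,
\[
\widehat{\mu}^-(a)-\widehat{\mu}(a)=-\tfrac1{N}\sum_{t\in M(a)}(1+\ell_t)\ge -\tfrac{2\sigma(a)}{N},
\qquad
\widehat{\mu}^+(a^*)-\widehat{\mu}(a^*)=\tfrac1N\sum_{t\in M(a^*)}(1-\ell_t)\le\tfrac{2\sigma(a^*)}{N}.
\]
The phase ends only when $\sigma(\cdot)\le\epsilon_{\phase-1}N_{\phase-1}$, so these two completion terms cost at most $4\epsilon_{\phase-1}$ in total. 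The remaining contributions are the two sampling errors $\mu(a^*)-\widehat{\mu}(a^*)$ and $\widehat{\mu}(a)-\mu(a)$.

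\textbf{Main obstacle: fitting the sampling errors into the budget.} I expect this step to be the hard one. The route I would try first is to exploit the sign structure. A missing pull enters as $-(1+\ell_t)$ for $a$ and as $-(1-\ell_t)$ for $a^*$. Hence the worst case $-2\sigma/N$ forces every missing loss of $a$ to equal $+1$, and every missing loss of $a^*$ to equal $-1$. I would therefore try to pair the completion error with the Hoeffding error, so that the combined deviation on each side is at most $2\epsilon_{\phase-1}$.

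As a fallback, I would use the slack in the choice $N_\phase=24\log(KT)/\epsilon_\phase^2$. Hoeffding at confidence $1/(T^2K)$ gives $|\widehat{\mu}-\mu|\le\sqrt{\log(2T^2K)/(2N_\phase)}$, which is a small fraction of $\epsilon_\phase$. I would then check whether this slack can be absorbed against the termination condition, or against the integrality of $\sigma$, so that the total stays within $4\epsilon_{\phase-1}$. Once this is settled, the survival inequality yields $\Delta(a)\le 8\epsilon_{\phase-1}=16\epsilon_\phase$.
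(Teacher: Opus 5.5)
Your Steps 1--2 are the paper's argument: $a^*$ is the comparator via \Cref{lem:optimal_active3}, followed by the same three-term split. Your explicit base case and your indexing by $\phase-1$ are cleaner than the paper's. The genuine gap is Step 3. You never close it, and it cannot be closed along the lines you sketch. Your diagnosis is correct: $\calE_{\texttt{MAB}}$ bounds each bracket only by $-3\epsilon_{\phase-1}$, so the split yields $\Delta(a)\le 10\epsilon_{\phase-1}=20\epsilon_\phase$. The paper reaches $16$ only by asserting that the two brackets sum to at least $-4\epsilon_{\phase-1}$ ``from the definition of $\calE_{\texttt{MAB}}$''. That is an arithmetic slip; the linear analogue, \Cref{lem:active_action2}, correctly charges $6\epsilon_{\phase-1}$ for the same split.

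Neither of your repairs works.

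\emph{Sign pairing.} This would need extremal missing samples to push the full-sample mean in a compensating direction, but with loss-dependent delays nothing couples the two. Take $\pm1$-valued losses, delay every $+1$ loss of $a$ by a fixed $d_0$, and symmetrically delay every $-1$ loss of $a^*$. The loop stops as soon as $\sigma(\cdot)\le\epsilon_{\phase-1}N_{\phase-1}$. With suitably tuned delays, roughly $\epsilon_{\phase-1}N_{\phase-1}$ missing pulls of each arm are then all extremal, whichever way $\widehat{\mu}_{\phase-1}(a)$ and $\widehat{\mu}_{\phase-1}(a^*)$ fluctuated. So the $4\epsilon_{\phase-1}$ completion cost and the two sampling errors simply add.

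\emph{Slack.} This fails for the same reason. For losses in $[-1,1]$ the Hoeffding radius is $\sqrt{2\log(2T^2K)/N_\phase}$; your formula is the $[0,1]$ one. With $N_\phase=24\log(KT)/\epsilon_\phase^2$ this is about $\epsilon_\phase/\sqrt{6}$ when $T\gg K$, a fixed fraction of $\epsilon_\phase$ that comes on top of the completion cost. The stopping rule leaves no slack in $\sigma$, and integrality saves only $1/N_\phase$. Even using the internals of the $\calE_{\texttt{MAB}}$ proof, survival yields only about $(8+2/\sqrt{6})\epsilon_{\phase-1}>16\epsilon_\phase$. From $\calE_{\texttt{MAB}}$ as stated you get exactly $10\epsilon_{\phase-1}$. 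So with threshold $4\epsilon_\phase$, the constant $16$ does not follow.

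The right conclusion is $\Delta(a)\le 20\epsilon_\phase$. This is all the regret theorem needs, since it already charges $24\epsilon_\phase$ per round. If you want $16$, lower the elimination threshold to $2\epsilon_\phase$. \Cref{lem:optimal_active3} uses only the $\epsilon_\phase$-sides of $\calE_{\texttt{MAB}}$, so it still holds, and survival then gives $(2+3+3)\epsilon_{\phase-1}=16\epsilon_\phase$.
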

\begin{proof}
From \Cref{lem:optimal_active3}, also $a^*\in \A_\phase$. From the elimination rule:
\begin{align*}
    4\epsilon_{\phase-1} &\ge \widehat{\mu}_\phase^-(a) - \widehat{\mu}_\phase^+(a^*)\\
    &= (\widehat{\mu}_\phase^-(a) - \mu(a)) + (\mu(a) - \mu(a^*)) + (\mu(a^*) - \widehat{\mu}_\phase^+(a^*))\\
    &\ge \mu(a) - \mu(a^*) - 4\epsilon_{\phase - 1}\\
    \implies \Delta(a) &\le 8\epsilon_{\phase-1} \le 16\epsilon_\phase.
\end{align*}
The second inequality is from the definition of $\calE_{\texttt{MAB}}$.
\end{proof}

Finally, we show that with the above Lemmas we can bound the regret.
\begin{theorem}
The regret of \Cref{alg:loss-dependent-mab} is bounded by.
\begin{align*}
    \R_T = O\roundy{\sqrt{nT\log(KT)} + \log(T)\max_{a} \E[d(a)]}.
\end{align*}
\end{theorem}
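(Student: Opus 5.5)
The plan is to follow the template of the proof of \Cref{thm:lossDependGeneral}, specialized to the MAB algorithm \Cref{alg:loss-dependent-mab}. Throughout we work on the intersection of the good events $\calE_{\texttt{MAB}}$ and $\calE'_{\texttt{MAB}}$. Each holds with probability at least $1-\frac1T$, so failure contributes only $O(1)$ to the expected regret, exactly as in \Cref{lem:regret}. On this event, \Cref{lem:optimal_active3} and \Cref{lem:active_action3} imply that every arm active in phase $\phase$ has gap at most $16\epsilon_\phase$. The regret is therefore at most $\sum_{\phase\le \phase_{end}} 16\,\epsilon_\phase T_\phase$.

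Next we plug in the phase-length bound from $\calE'_{\texttt{MAB}}$:
\[
T_\phase \le nN_\phase + \max_{a\in\A_\phase}\calQ\roundy{1-\tfrac{\epsilon_\phase}{3};a}.
\]
Here $n$ plays the role of the number of arms $K$ in the MAB specialization, since each of the $|\A_\phase|\le K$ arms is played $N_\phase$ times. The key point, as announced in the preamble of this appendix, is to apply Markov's inequality \emph{before} summing. Since $\Pr[d(a)\ge x]\le \E[d(a)]/x$, we get
\[
\calQ\roundy{1-\tfrac{\epsilon_\phase}{3};a}\le \frac{3\E[d(a)]}{\epsilon_\phase}.
\]
After multiplication by $\epsilon_\phase$, the delay contribution of phase $\phase$ is therefore $O(\max_a \E[d(a)])$, independent of $\epsilon_\phase$. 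This is exactly why phased elimination works here and successive elimination would not.

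We then sum the two parts separately. The statistical part is
\[
\sum_\phase \epsilon_\phase\, nN_\phase = O\roundy{n\log(KT)\sum_{\phase\le\phase_{end}}2^{\phase}} = O\roundy{n\log(KT)\,2^{\phase_{end}}}.
\]
Because a phase with $nN_\phase \ge T$ must be the last one, $2^{\phase_{end}} = O\roundy{\sqrt{T/(n\log(KT))}}$, which gives $O(\sqrt{nT\log(KT)})$, mirroring \Cref{lem:regret} with $\alpha=n$. The delay part contributes $O(\max_a\E[d(a)])$ per phase. There are at most $\phase_{end}=O(\log T)$ phases, which yields the $\log(T)\max_a\E[d(a)]$ term.

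The main obstacle is a bookkeeping subtlety rather than a conceptual one. The event $\calE_{\texttt{MAB}}$ requires the missing count to satisfy $\sigma_\phase(a)\le \epsilon_\phase N_\phase$, and this is guaranteed by the termination condition of the while loop. The while loop in turn terminates within the time bound of $\calE'_{\texttt{MAB}}$ by \Cref{lem:E'-mab}, so the two events are consistent. One must also check that the first-phase regret, and the final possibly truncated phase, are covered by the same bound. This is handled by capping $T_\phase\le T$ as in the proof of \Cref{thm:lossDependGeneral}.
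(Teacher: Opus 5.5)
Your proposal is correct and follows the paper's proof essentially step for step. On $\calE_{\texttt{MAB}}\cap\calE'_{\texttt{MAB}}$ you bound the regret by $O(\sum_\phase \epsilon_\phase T_\phase)$ via \Cref{lem:active_action3}. You apply Markov to the $(1-\epsilon_\phase/3)$-quantile inside each phase, so the waiting time costs only $O(\max_a\E[d(a)])$ per phase over $O(\log T)$ phases, and you sum the statistical part as in \Cref{lem:regret}. Your bookkeeping (the factor $n$ in $nN_\phase$ and the constant $3$ from Markov) is slightly more careful than the paper's write-up. One caveat: ``a phase with $nN_\phase\ge T$ is the last'' holds only after the worst-case padding of phase lengths used in \Cref{lem:regret}, since a real phase with few surviving arms can be shorter than $nN_\phase$.
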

\begin{proof}
Assume $\calE_{\texttt{MAB}}$ and $\calE'_{\texttt{MAB}}$.

Notice that a phase with $T_\phase = T$ will be the last. Thus:
\begin{align*}
    T &\ge T_\phase \ge N_\phase \ge \frac{16n\log(KT)}{\epsilon_\phase^2}\\
    \implies \phase &< \frac{1}{2}\log\roundy{\frac{T}{16n\log(KT)}}
\end{align*}
Denote the last phase as $\phase_{end}$.

From Markov inequality:
\begin{align*}
    \calQ\roundy{1-\frac{\epsilon_\phase}{3}; a} \le \frac{\E[d]}{\epsilon_\phase}
\end{align*}

Thus, $\calE'_{\texttt{MAB}}$ becomes:
\begin{align*}
    T_\phase \le N_\phase + \max_{a\in \A_\phase}   \frac{\E[d(a)]}{\epsilon_\phase}
\end{align*}

From \Cref{lem:active_action3}:
\begin{align*}
    \R_T &\le \sum_{\phase=1}^{\phase_{end}}24T_\phase\epsilon_\phase\\
    &\le \sum_{\phase=1}^{\phase_{end}}24\epsilon_\phase\roundy{N_\phase + \max_{a\in \A_\phase} \frac{\E[d(a)]}{\epsilon_\phase}}\\
    &\le \sum_{\phase=1}^{\phase_{end}}24\epsilon_\phase\roundy{\underbrace{\frac{\max_{a\in \A_\phase} \E[d(a)]}{\epsilon_\phase}}_{(i)} + \underbrace{\frac{16n\log\roundy{KT}}{\epsilon_\phase^2}}_{(ii)}}\\
\end{align*}
We finish the proof by bounding $(i)$ and $(ii)$ respectively:
\begin{align*}
    (i) &= O\roundy{\sum_{\phase=1}^{\phase_{end}}{\epsilon_{\phase}\frac{\max_{a\in \A_\phase} \E[d(a)]}{\epsilon_\phase}}}\le O\roundy{\max_{a\in \A_\phase} \log(T)\E[d(a)]},
\end{align*}
\begin{align*}
    (ii) &= \sum_{\phase=1}^{\phase_{end}}O\roundy{n\log\roundy{KT}2^{-\phase}}\le O\roundy{n\log\roundy{KT}\sqrt{\frac{T}{n\log(KT)}}}= O\roundy{\sqrt{nT\log(KT)}}.
\end{align*}

The fact that $\mathcal{E}_{\texttt{MAB}}$ and $\mathcal{E}'_{\texttt{MAB}}$ are true w.p $1-\frac{2}{T}$ concludes the proof.
\end{proof}
\section{Omitted Details in \Cref{sec:payoff}}\label{app:lowerPayoff}
In this section, we provide the omitted proof for the $\Omega(D)$ lower bound in delay-as-payoff.
\lowerPayoff*

\begin{proof}
For any subset $S\subset[n]$ with $|S|=n/2$, define the action
\[
a(S)\;:=\;\frac{1}{\sqrt{n}}\,\mathbf{1}_S\in\mathbb{R}^n.
\]
We first show that there exists a deterministic collection of $K$ sets
$S_1,\dots,S_K\subset[n]$ with $|S_i|=n/2$ such that for all ordered pairs $i\neq j$,
\begin{equation}
\label{eq:discrep_property}
|S_i\setminus S_j|\;\ge\;\frac{n}{20}.
\end{equation}
To see this, sample $S_1,\dots,S_K$ i.i.d. uniformly from $\{S\subset[n]: |S|=n/2\}$.
Fix any ordered pair $(i,j)$ with $i\neq j$ and define the random variable
$X_{ij}:=|S_i\setminus S_j|$. Conditioned on $S_j$, $X_{ij}$ is hypergeometric with mean
$\mu=\E[X_{ij}\mid S_j]=n/4$. A standard multiplicative Chernoff bound for hypergeometric
random variables gives that for any $\delta\in(0,1)$,
\[
\Pr\!\left[X_{ij}\le (1-\delta)\mu \mid S_j\right]\;\le\;\exp\!\left(-\frac{\delta^2\mu}{2}\right).
\]
Taking $\delta=4/5$ so that $(1-\delta)\mu=(1/5)\cdot (n/4)=n/20$, we obtain
\[
\Pr\!\left[X_{ij}\le \frac{n}{20}\right]\;\le\;\exp\!\left(-\frac{(4/5)^2\cdot (n/4)}{2}\right)
\;=\;\exp\!\left(-\frac{2n}{25}\right).
\]
Union bounding over all $K(K-1)\le K^2$ ordered pairs,
\[
\Pr\!\left[\exists\,i\neq j:\ |S_i\setminus S_j|\le \frac{n}{20}\right]
\;\le\;K^2 \exp\!\left(-\frac{2n}{25}\right).
\]
Using $K\le \exp(n/100)$, we have $K^2\le \exp(n/50)$, hence
\[
K^2 \exp\!\left(-\frac{2n}{25}\right)
\;\le\;\exp\!\left(\frac{n}{50}-\frac{2n}{25}\right)
\;=\;\exp\!\left(-\frac{3n}{50}\right)
\;\le\;\frac{1}{4},
\]
where the last inequality uses $n\ge 24$. Therefore, with probability at least $3/4$,
the sampled collection satisfies \eqref{eq:discrep_property}. By the probabilistic method,
there exists a deterministic choice of $S_1,\dots,S_K$ that satisfies
\eqref{eq:discrep_property}. Fix such a choice and define $\mathcal{A}=\{a(S_1),\dots,a(S_K)\}$.

The parameter $\thetastar$ is chosen as $a\roundy{\overline{S_{i^*}}}$, where $i^*$ is drawn uniformly from $[K]$ and $\overline{S_{i^*}}$ is the complementary of $S_{i^*}$. One can see that $\inprod{a_{i^*},\,\thetastar} = 0$ which means that indeed $d^*$. For any other $i\ne i^*$, we have from \eqref{eq:discrep_property} that $\inprod{a_{i},\,\thetastar} \ge 1/20$ which means that its delay is at least $D/20$. Thus, unless $a_{i^*}$ is played in the first $D/20$ actions of the algorithm, the algorithm gets no feedback.

Fix an algorithm and all its randomness, and denote $\A'$ to be the first $D/20$ actions that are played if there is no feedback. If $i^*$ does not belong to $\A'$, then the regret is at least $\Omega\roundy{D}$ since the optimal action is not chosen for $\Omega(D)$ rounds and the suboptimality gap is $\Omega(1)$. Since, there are $K\ge D/10$ actions, the above event holds with probability at least $1/2$. Since this is true for every randomness of the algorithm, it is true unconditionally.
\end{proof}
\section{Auxiliary Lemmas}\label{app:aux}
\begin{lemma}[Lemma F.4 in \cite{dann2017unifying}]
    \label{lem:dann}
     Let $\{ X_t \}_{t=1}^T$ be a sequence of Bernoulli random and a filtration $\calF_1 \subseteq \calF_2 \subseteq...\calF_T$ with $\bbP(X_t = 1\mid \calF_t) = P_t$, $P_t$ is $\calF_{t}$-measurable and $X_t$ is $\calF_{t+1}$-measurable. Then, for all $t\in [T]$ simultaneously, with probability $1-\delta$,
     \[
        \sum_{k=1}^t X_k \geq \frac{1}{2}\sum_{k=1}^t P_k -\log \frac{1}{\delta}.
     \]
\end{lemma}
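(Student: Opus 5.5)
The plan is the standard exponential supermartingale argument combined with Ville's maximal inequality; the $1/2$ factor and the $\log(1/\delta)$ term come directly from choosing the exponent parameter equal to $1$. Define, for $t=0,1,\dots,T$,
\[
M_t \triangleq \exp\Bigl(\tfrac{1}{2}\textstyle\sum_{k=1}^t P_k - \sum_{k=1}^t X_k\Bigr), \qquad M_0 = 1 .
\]
Since $P_k$ is $\calF_k$-measurable and $X_k$ is $\calF_{k+1}$-measurable, $M_t$ is $\calF_{t+1}$-measurable. So $(M_t)$ is adapted to the shifted filtration $(\calF_{t+1})_t$.

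The key step is the one-step supermartingale inequality. Conditioning on $\calF_t$, the factor $\exp(\tfrac12 P_t)$ is known, and $X_t$ is Bernoulli with parameter $P_t$. Hence
\[
\E\bigl[e^{-X_t}\mid \calF_t\bigr] = 1 - P_t(1-e^{-1}) \le \exp\bigl(-(1-e^{-1})P_t\bigr) \le \exp\bigl(-\tfrac12 P_t\bigr).
\]
The first inequality is $1-x\le e^{-x}$, and the second uses $1-e^{-1}\approx 0.632\ge \tfrac12$. Multiplying by $\exp(\tfrac12 P_t)$ and by the $\calF_t$-measurable quantity $M_{t-1}$ gives $\E[M_t\mid\calF_t]\le M_{t-1}$. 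Thus $(M_t)$ is a nonnegative supermartingale with $\E[M_0]=1$.

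Next, I would apply Ville's maximal inequality to this nonnegative supermartingale. This can be done directly, or through Doob's optional stopping at the first time $\tau$ with $M_\tau\ge 1/\delta$, truncated at $T$. It yields
\[
\Pr\bigl[\exists t\le T:\ M_t\ge 1/\delta\bigr]\le \delta .
\]
On the complementary event, which has probability at least $1-\delta$, we have for all $t\in[T]$ simultaneously that $\tfrac12\sum_{k\le t}P_k - \sum_{k\le t}X_k < \log(1/\delta)$. This is exactly the claim. The uniformity over $t$ comes for free from the maximal inequality, so no union bound over $t$ is needed.

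The only delicate point is bookkeeping the measurability with the shifted filtration, so that the conditional expectation step is legitimate. The numeric inequality $1-e^{-1}\ge 1/2$ is the only place the constant $\tfrac12$ enters. I do not expect any real obstacle beyond these two points.
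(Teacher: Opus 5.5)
Your argument is correct. The Bernoulli computation $\E[e^{-X_t}\mid\calF_t] = 1-(1-e^{-1})P_t \le e^{-P_t/2}$ makes $\exp\bigl(\tfrac12\sum_{k\le t}P_k-\sum_{k\le t}X_k\bigr)$ a nonnegative supermartingale for the shifted filtration $(\calF_{t+1})_t$, and Ville's maximal inequality then yields the bound simultaneously for all $t\le T$.

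The paper gives no proof of its own and simply imports this result as Lemma F.4 of Dann et al. That source's proof uses essentially this same exponential supermartingale with parameter $1$, followed by a stopping-time/maximal-inequality step, so your route matches it.
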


\begin{lemma}[Consequence of Freedman’s Inequality, e.g., Lemma 27 in \citet{pmlr-v139-efroni21a}]
    \label{lem:cons-freedman}
     Let $\{ X_t \}_{t\geq 1}$ be a sequence of random variables, supported in $[-R,R]$, and adapted to a filtration $\calF_1 \subseteq \calF_2 \subseteq...\calF_T$. For any $T$, with probability $1-\delta$,
     \[
        \sum_{t=1}^T X_t \leq 2 \sum_{t=1}^T\bbE[X_t \mid \calF_t] + 4R \log{\frac{1}{\delta}}.
     \]
\end{lemma}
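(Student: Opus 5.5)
The plan is the standard exponential-supermartingale (Chernoff-type) argument behind Freedman's inequality, specialised so that the variance term is absorbed into the mean. Throughout I read the statement as in all its uses in the paper (\Cref{lem:missing to delay mean,lem:E'}): $X_t$ is $\calF_{t+1}$-measurable and $\bbE[X_t\mid\calF_t]$ is its conditional mean given the past. I also assume $X_t\ge 0$. In every application the $X_t$ are indicators of missing feedback, so $X_t\in[0,1]$.

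For general signed $X_t\in[-R,R]$ the inequality cannot hold as written. If $\bbE[X_t\mid\calF_t]<0$, the right-hand side becomes very negative while the left-hand side concentrates around the mean, which is larger than twice the mean. So the first step is to restrict to the nonnegative case, $X_t\in[0,R]$, which is what the cited source assumes. The loose constant $4R$ leaves room for this.

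The key steps are:

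\begin{itemize}[nosep,leftmargin=*]
\item \textbf{Step 1 (one-step mgf bound).} Let $\mu_t=\bbE[X_t\mid\calF_t]$ and set $\lambda=1/R$. Since $\lambda X_t\in[0,1]$ and $e^x\le 1+x+x^2$ for $x\le 1$, we get
\[
\bbE\left[e^{\lambda X_t}\mid\calF_t\right]\le 1+\lambda\mu_t+\lambda^2\,\bbE\left[X_t^2\mid\calF_t\right]\le 1+\lambda\mu_t(1+\lambda R)\le \exp\left(2\lambda\mu_t\right).
\]
Here we used $X_t^2\le R X_t$, which is exactly where nonnegativity enters.

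\item \textbf{Step 2 (supermartingale).} Define $Z_t=\exp\bigl(\lambda\sum_{s\le t}(X_s-2\mu_s)\bigr)$. By Step 1 and the tower property, $Z_t$ is a nonnegative supermartingale with $Z_0=1$, hence $\bbE[Z_T]\le 1$.

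\item \textbf{Step 3 (Markov).} Markov's inequality gives $\Pr[Z_T\ge 1/\delta]\le\delta$. On the complement, taking logs yields
\[
\sum_{t}X_t\le 2\sum_t\mu_t+R\log\frac{1}{\delta},
\]
which is stronger than the stated bound with $4R$.
\end{itemize}

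The main obstacle is the range issue in Step 1, namely controlling the second moment by $R$ times the mean. I would handle it by stating explicitly that the lemma is applied only to nonnegative (Bernoulli) variables. Checking those applications confirms this suffices: in \Cref{lem:E'} the $X_t$ are missing-pull indicators, and in \Cref{lem:missing to delay mean} they are indicators $\mathds{1}[d_s>t-s]$ with independent summands.
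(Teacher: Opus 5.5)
Your proof is correct. It differs from the paper mainly in that the paper gives no argument at all: it imports the bound from Efroni et al.\ as a corollary of Freedman's inequality, namely bound the conditional variance of a nonnegative variable by $R$ times its conditional mean and then fix the free parameter. You derive the bound directly instead. Your one-step bound $\bbE[e^{\lambda X_t}\mid\calF_t]\le e^{2\lambda\mu_t}$ at $\lambda=1/R$, obtained from $e^x\le 1+x+x^2$ and $X_t^2\le RX_t$, is exactly the mechanism inside Freedman's inequality, with the variance term absorbed into the mean. The supermartingale and Markov steps then finish the proof cleanly.

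The citation is shorter and reuses a standard tool. Your version is self-contained, shows precisely where nonnegativity is used, and gives the sharper additive term $R\log(1/\delta)$ instead of $4R\log(1/\delta)$.

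Your two reinterpretations of the statement are necessary, not cosmetic:
\begin{itemize}
\item The range must be $[0,R]$. With $X_t\equiv -R$ deterministically, the statement as printed reads $-RT\le -2RT+4R\log(1/\delta)$, which is false once $T>4\log(1/\delta)$.
\item The conditioning must be on the information available before $X_t$ is revealed. If $X_t$ were literally $\calF_t$-measurable, then $\bbE[X_t\mid\calF_t]=X_t$ and the lemma would be vacuous. Your convention that $X_t$ is $\calF_{t+1}$-measurable, which matches \Cref{lem:dann}, is the intended reading.
\end{itemize}
Every use of the lemma in the paper (\Cref{lem:E'}, \Cref{lem:E'-mab}, \Cref{lem:missing to delay mean}) sums $\{0,1\}$-valued indicators of missing feedback, so your corrected version covers all of them.
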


\begin{lemma}[Lemma 3.1 in \cite{lattimore2020learning}]\label{lem:hard_action_set}
There exists a set of actions $\curly{a_1, a_2, \dots, a_K} \subset \rr^n$ such that:
\begin{align*}
    \norm{a_i}_2 & =1,~~~\forall i\in[K],\\
    \abs{\inprod{a_i, a_j}} &\le \sqrt{\frac{8\log(K)}{n}},~~~\forall j\ne i.
\end{align*}
\end{lemma}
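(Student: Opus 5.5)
We will prove the statement of \Cref{lem:hard_action_set} by the probabilistic method, using random sign vectors. Fix $K\ge 2$; the case $K=1$ is trivial, since any unit vector works and there are no pairs to check. Draw $a_1,\dots,a_K$ independently and uniformly from the hypercube $\curly{-\frac{1}{\sqrt n},\frac{1}{\sqrt n}}^n$. Then every $a_i$ satisfies $\norm{a_i}_2=1$ deterministically. So only the near-orthogonality condition needs to be shown to hold with positive probability.

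First fix a pair $i\neq j$. Writing $a_i=\frac{1}{\sqrt n}(\xi_{i,1},\dots,\xi_{i,n})$ with i.i.d.\ Rademacher coordinates, we have
\begin{align*}
\inprod{a_i,a_j}=\frac{1}{n}\sum_{k=1}^n \xi_{i,k}\xi_{j,k}.
\end{align*}
The products $\xi_{i,k}\xi_{j,k}$ are again i.i.d.\ Rademacher variables, since $\xi_{i,k}$ is independent of $\xi_{j,k}$ and symmetric. Hoeffding's inequality then gives, for every $\varepsilon>0$,
\begin{align*}
\Pr\squary{\abs{\inprod{a_i,a_j}}\ge \varepsilon}\le 2\exp\roundy{-\frac{n\varepsilon^2}{2}}.
\end{align*}
We choose $\varepsilon=\sqrt{8\log(K)/n}$, which makes the right-hand side $2\exp(-4\log K)=2K^{-4}$.

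Next, a union bound over the $\binom{K}{2}\le K^2/2$ unordered pairs shows that some pair violates the bound with probability at most $K^{-2}\le \frac14<1$. Hence there is a realization in which $\abs{\inprod{a_i,a_j}}\le\sqrt{8\log(K)/n}$ for all $i\ne j$, and we fix that realization as the action set. If $\varepsilon\ge 1$ the claim is already immediate from Cauchy--Schwarz, but the argument does not need this case distinction.

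There is no real obstacle here. The only point requiring care is checking that the products of independent Rademacher signs are again independent Rademacher variables, so that Hoeffding applies with range $[-1,1]$ per term. The constant $8$ comes out with room to spare: the union bound only needs $n\varepsilon^2/2\ge 2\log K+\log 2$, while our choice gives $n\varepsilon^2/2=4\log K$.
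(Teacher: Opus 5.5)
Your proof is correct and is essentially the standard argument behind the cited Lemma 3.1 of Lattimore et al.: i.i.d.\ uniform $\pm 1/\sqrt{n}$ sign vectors, Hoeffding's inequality for each pair's inner product, and a union bound over the pairs, with your constants checking out. The paper itself gives no proof of this statement and simply imports it from that reference.
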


\end{document}